\documentclass{article}

     \PassOptionsToPackage{numbers, compress}{natbib}



     \usepackage[final]{neurips_2025}


\usepackage[utf8]{inputenc} 
\usepackage[T1]{fontenc}    
\usepackage{hyperref}       
\usepackage{url}            
\usepackage{booktabs}       
\usepackage{amsfonts}       
\usepackage{nicefrac}       
\usepackage{microtype}      
\usepackage{xcolor}         
\usepackage{graphicx}
\usepackage{amsmath}
\usepackage{amsthm}
\newtheorem{assumption}{Assumption}
\newtheorem{theorem}{Theorem}
\newtheorem{lemma}{Lemma}
\newtheorem{definition}{Definition}
\usepackage{subcaption}
\usepackage{titletoc}
\usepackage{titlesec}
\usepackage{tocloft}
\usepackage{xcolor}
\usepackage{etoolbox} 

\newcommand{\appendixtoc}{
    \clearpage
    \phantomsection
    \addcontentsline{toc}{section}{Appendices}
     \begin{center}
        \rule{\linewidth}{1.6pt}\\[0.3em]
        \vspace*{1em}
        {\LARGE \textbf{Supplementary Materials}} \\[0.5em]
        \rule{\linewidth}{0.6pt}
    \end{center}
    \startcontents[sections]
    \printcontents[sections]{}{1}{}
    \begin{center}
      \rule{\linewidth}{0.4pt} 
    \end{center}
}

\title{Dynamics of Spontaneous Topic Changes in Next Token Prediction with Self-Attention}

%

\author{%
  Mumin~Jia\thanks{Equal contribution.}      \\
  Department of Mathematics and Statistics\\
  York University\\
  Toronto, Ontario M3J 1P3 \\
  \texttt{amyjia@yorku.ca} 
  \And Jairo~Diaz-Rodriguez\footnotemark[1]
    \\
  Department of Mathematics and Statistics\\
  York University\\
  Toronto, Ontario M3J 1P3 \\
  \texttt{jdiazrod@yorku.ca}
}

\begin{document}

\maketitle

\begin{abstract}
Human cognition is punctuated by abrupt, spontaneous shifts between topics—driven by emotional, contextual, or associative cues—a phenomenon known as spontaneous thought in neuroscience. In contrast, self-attention-based models rely on structured patterns over their inputs to predict each next token, lacking spontaneity. Motivated by this distinction, we characterize \emph{spontaneous topic changes} in self-attention architectures and reveal divergences from \emph{spontaneous human thought}. First, we establish theoretical results under a simplified, single-layer self-attention model with suitable conditions by defining a topic as a set of Token Priority Graphs (TPGs). Specifically, we demonstrate that (1) the model maintains the priority order of tokens related to the input topic, (2) a spontaneous topic change can occur only if lower-priority tokens outnumber all higher-priority tokens of the input topic, and (3) unlike human cognition, the longer context length or the more ambiguous input topic does not increase the likelihood of spontaneous change. Second, we empirically validate that the effect of input length or topic ambiguity persists in modern, state-of-the-art LLMs, underscoring a fundamental disparity between human cognition and AI behavior in the context of spontaneous topic changes. To the best of our knowledge, no prior work has explored these questions with a focus so closely aligned to human thought.
\end{abstract}

\section{Introduction}

Human cognition is punctuated by abrupt, apparently unstructured topic changes, the hallmark of \emph{spontaneous human thought}, a phenomenon that has become a central topic in cognitive neuroscience \citep{Bellana2022,spontaneousthought,Christoff2011,Kucyi2023,MILDNER2019763,judith2024why,outofblue}. For example, a spontaneous shift in focus during a conversation, a sudden leap between ideas when brainstorming, or an unexpected redirection in storytelling. 
These abrupt changes may be due to an emotional connection, such as recalling reading a book during a family vacation, where sensory details like the scent of the ocean or the warmth of the sun trigger a vivid memory.
However, LLMs shift topics in response to contextual cues in the input, rather than initiating \emph{spontaneous topic changes} on their own.
They follow a structured, statistical approach, remaining on topic unless explicit cues signal a change.
Figure~\ref{fig:llm_eg} illustrates this distinction using the first sentence of the book ``One hundred years of solitude'' \citep{marquez1967}. 

Our work takes initial steps toward formalizing the dynamics of \emph{spontaneous topic changes} in 
LLMs
and analyzing how they relate to or diverge from \emph{spontaneous human thought}. To this end, we ground our theoretical analysis in a single-layer self-attention model and empirically extend it to modern LLMs, laying the groundwork for drawing comparisons between AI models and human cognition.

Recent advancements in the related field have substantially deepened our understanding of self-attention architectures. 
\citet{li2024mechanics, tarzanagh2023transformers, ataee2023max} have linked the self-attention to support vector machines (SVMs), offering optimization strategies for next-token prediction. \citet{li2023transformers} highlight that in mixed-topic inputs, transformers achieve higher pairwise attention between same-topic words compared to different-topic words. 
In parallel, prior studies have recognized the practical challenges of \emph{spontaneous topic changes} in LLMs and proposed approaches to address them \citep{hwang-etal-2024-mp2d, Lim2010AST,multi-granularity2023,Ni2021RecentAI,soni-etal-2022-empirical, xie-etal-2021-tiage-benchmark}. Notably,  \emph{spontaneous topic changes} must be differentiated from hallucinations, generating incorrect or fabricated information without a clear contextual basis \citep{ji2023survey,maynez-etal-2020-faithfulness}.

Despite these advancements, our understanding of the dynamics of \emph{spontaneous topic changes} in LLMs remains limited. 
Investigating the relationship between \emph{spontaneous topic changes} in self-attention models and \emph{spontaneous human thought} can provide valuable insights 
into the cognitive discrepancies of current language models compared with humans.
Since modern LLMs rely on self-attention architectures, we begin by theoretically characterizing \emph{spontaneous topic changes} in a simplified setting. We then extend these findings through experiments on more complex, state-of-the-art models. To the best of our knowledge, no prior studies have investigated these dynamics so closely in relation to human thought. 

Figure~\ref{fig:overview} outlines our theoretical framework. To make the mathematical analysis tractable, we follow the same single-layer self-attention framework with log-loss objective function governed by Assumptions~\ref{assumption1}–\ref{assumption3} from \citet{li2024mechanics}. 
Inspired by token-priority graphs (TPGs) \citep{li2024mechanics} and building on attribution graphs from \citet{ameisen2025circuit} for exposing an LLM’s internal computation, we define a topic as a set of TPGs. This graph-based formulation aligns naturally with recent advances in structured representations for LLMs \citep{sen-etal-2023-knowledge, wang2025model}. Furthermore, this mirrors neuroscience models of spontaneous human thought, in which concepts serve as nodes connected by associative edges \citep{MILDNER2019763}. Despite relying on these specific settings, our experiments extend our findings to modern LLMs, empirically confirming that relaxing these assumptions does not seem to undermine our core insights.

\begin{figure}
\centering
    \begin{minipage}{0.25\textwidth}
        \centerline{\includegraphics[width= 0.8\linewidth]{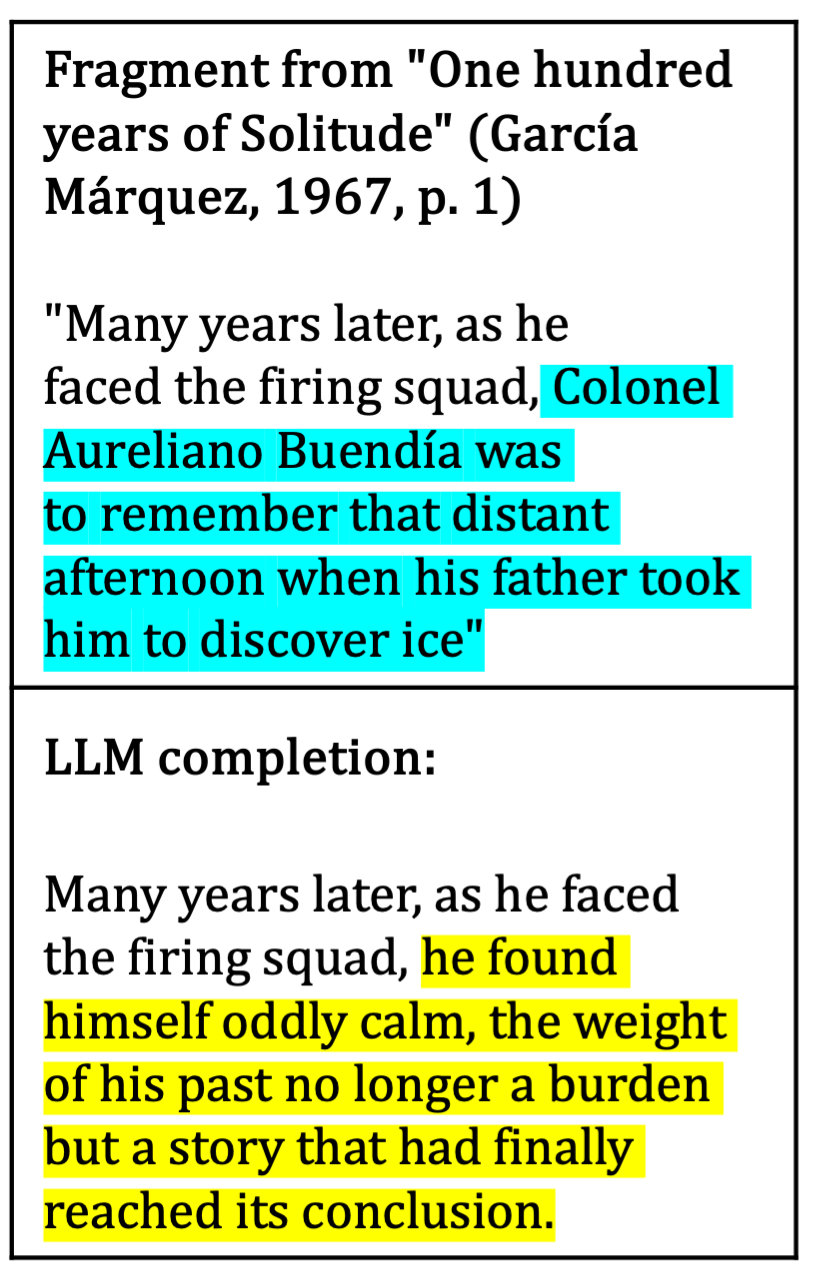}}
        \caption{Illustration of the difference between human cognition and LLMs. The original fragment of ``One hundred years of solitude" \citep{marquez1967} (\textbf{top}) has a clear spontaneous thought, but the GPT-2's completion  (\textbf{bottom}), demonstrates continuity.\protect\footnotemark}
        \label{fig:llm_eg}
    \end{minipage}
    \hfill
    \begin{minipage}{0.72\textwidth}
        \centerline{\includegraphics[width= 0.98\linewidth]{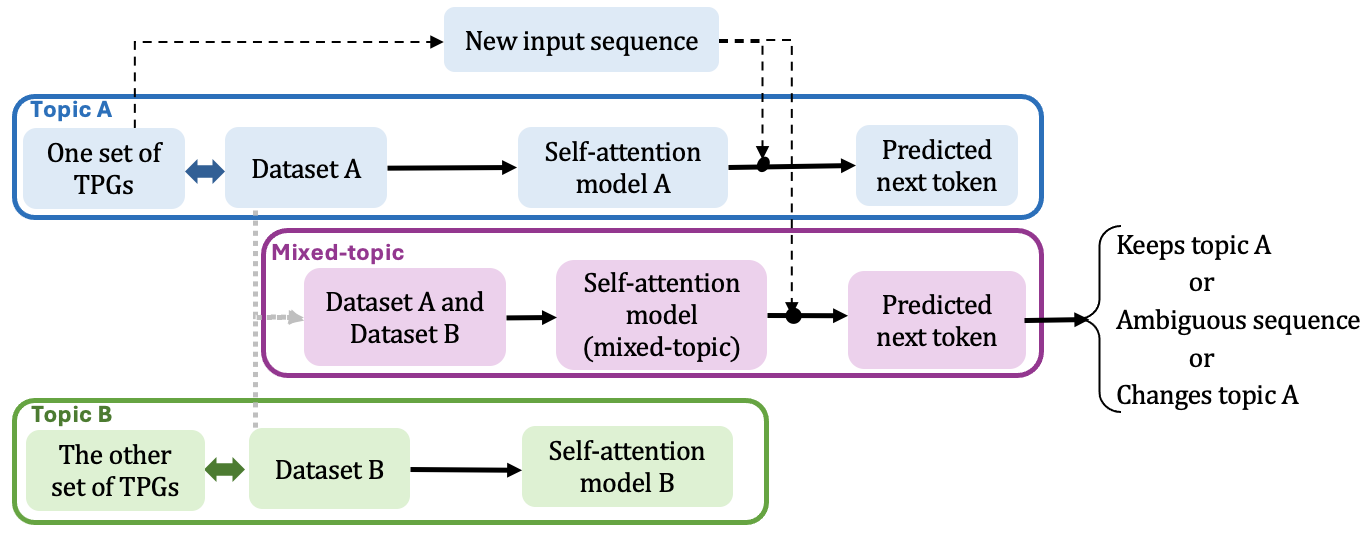}}
        \caption{\textbf{Overview of our theoretical framework.} We define a topic as a set of TPGs $\{\mathcal{G}^{(k)}\}^{K}_{k=1}$  (Def.~\ref{def:topic}) and generate a dataset for each topic. 
        The combination of dataset A and dataset B becomes the dataset for the mixed-topic model. We train self-attention models independently on each dataset. Then, we generate a new input sequence from topic A and predict the next token with two models, self-attention model A and self-attention model (mixed-topic). The next-token prediction with the mixed-topic model is categorized into three outcomes: keeps topic A (\emph{topic continuity} from Def.~\ref{def:topic_continuity}); \emph{ambiguous sequence} (from Def.~\ref{def:ambiguous}); or changes topic A (\emph{change of topic} from Def.~\ref{def:change}). Further details for each category are shown in Figure~\ref{fig:each_scenario}.}
        \label{fig:overview}
    \end{minipage}
\end{figure}

\footnotetext{Just to illustrate, we use the prompt \emph{Please continue this short sentence, forgetting about ``One hundred Years of Solitude''}, since on a real conversation the LLM would be blind to the final output.}

\subsection{Summary of findings}\label{subsec: findings}

Imagine an oracle that is an expert on Topic A, capable of following any conversation within that topic while staying true to its context. Now, suppose the oracle gains knowledge of Topic B and is following a conversation about Topic A. Will the oracle's responses remain within Topic A, or will the influence of the knowledge of Topic B cause the conversation to drift? This analogy encapsulates the problem we address: understanding when and why attention models might preserve a topic or change to another spontaneously. Specifically, we make the following contributions:
\begin{enumerate}
    \item \textbf{Preservation of input topic priorities}. Using a controlled sandbox, we demonstrate in Theorem~\ref{thm:priority} that self-attention models trained on mixed-topic datasets maintain the priorities of tokens associated with the original topic of an input sequence (Topic A in our analogy). 
    \item \textbf{Changing topics triggered by token frequency}. In Theorem~\ref{thm:expected}, we show that the oracle’s responses may reflect a change of topic only if a lower-priority token appears more frequently than all higher-priority tokens of Topic A.
    \item \textbf{Impact of input length and topic ambiguity}. Theorem~\ref{thm:LT} establishes that longer input sequences decrease the likelihood of changing topics. Furthermore, input topic ambiguity acts as a stabilizing factor, not increasing the frequency of spontaneous topic changes. 
    
    \item \textbf{Difference between LLMs and human cognition}. In Section~\ref{sec:llm} we empirically extend Theorem~\ref{thm:LT} to modern, deeper LLMs. Unlike human cognition, where extended discussions often encourage spontaneous thoughts and topic ambiguity promotes cognitive connections
    , our results highlight the opposite behavior in LLMs: neither longer prompts nor greater topic ambiguity appreciably increases the likelihood of a spontaneous topic change. 
\end{enumerate}



\textbf{Overview of the paper structure.} We begin with the problem setup in Sec~\ref{sec:setup}. Sec~\ref{sec:topic} introduces the definition of topic, and Sec~\ref{sec:mix} examines how self-attention models allocate the token priorities within the mixed topics.
In Sec~\ref{sec:LT}, we establish the conditions under which a self-attention model induces \emph{spontaneous topic changes} and show the dynamics of topic changes with longer input sequences or the presence of topic ambiguity.
We then extend our analysis to frontier LLMs in Sec~\ref{sec:llm}. Related work and discussion are provided in Secs~\ref{sec: related work} and~\ref{sec: discussion}, respectively. All proofs are provided in Appendix~\ref{app:proofs}.

\section{Problem setup}\label{sec:setup}
\subsection{Next topic prediction with self-attention model} 
In line with the approach presented by \citet{ataee2023max} and \citet{li2024mechanics}, we frame the next-token prediction task as a multi-class classification problem. Given a vocabulary of size \( K \) with an embedding matrix \( {\bf E} = [{\bf e}_1 \ {\bf e}_2 \ \cdots \ {\bf e}_K]^\top \in \mathbb{R}^{K \times d} \), we aim to predict the next token ID \( y \in [K] \) based on an input sequence \( {\bf X} = [{\bf x}_1 \ {\bf x}_2 \ \cdots \ {\bf x}_T]^\top \in \mathbb{R}^{T \times d} \) with ${\bf x}_i\in {\bf E}$ for all $i \in [T]$. The training dataset, denoted as 
\[
\text{DSET} = \{ ({\bf X}_i, y_i) \in \mathbb{R}^{T_i \times d} \times [K] \}_{i=1}^n,
\]
contains sequences of varying lengths \( T_i \). In our notation ${\bf x}$ is the embedding vector corresponding to the token ID $x$, this is  ${\bf x} = {\bf e}_x$. For prediction, we utilize a single-layer self-attention model with a combined key-query weight matrix \( {\bf W} \in \mathbb{R}^{d \times d} \) and identity value matrix as in \citet{ataee2023max}. The self-attention embedding output
\begin{equation}\label{eq:output}
f_{\bf W}({\bf X}) = {\bf X}^\top \mathbb{S}({\bf X}{\bf W} \bar{{\bf x}}),
\tag{output}
\end{equation}
where \( \mathbb{S}(\cdot) \) is the softmax operation and \( \bar{{\bf x}} := {\bf x}_T \), serves as a weighted representation of the tokens, allowing for context-sensitive prediction of \( y \) based on the final input token. 
Let \( \ell : \mathbb{R} \to \mathbb{R} \) be a loss function. For the training dataset \(\text{DSET}\), we consider the empirical risk minimization (ERM) with:
\begin{equation}\label{eq:ERM}
L({\bf W}) = \frac{1}{n} \sum_{i=1}^n \ell({\bf c}_{y_i}^\top {\bf X}_i^\top \mathbb{S}({\bf X}_i {\bf W} \bar{{\bf x}}_i)).\tag{ERM} 
\end{equation}
We assume a well pre-trained classification head matrix ${\bf C} = [{\bf c}_1 \ {\bf c}_2 \ \cdots \ {\bf c}_K]^\top \in \mathbb{R}^{K\times d}$. Each classification head \( {\bf c}_k \in \mathbb{R}^d \) is fixed and bounded for all \( k \in [K] \). Starting from \( {\bf W}^{(0)} \in \mathbb{R}^{d \times d} \) with step size \( \eta > 0 \), for \( \tau \geq 0 \) we optimize ${\bf W}$ with a gradient descent algorithm
\begin{equation}\label{eq:Algo-GD}
{\bf W}^{(\tau + 1)} = {\bf W}^{(\tau)} - \eta \nabla L({\bf W}^{(\tau)}).
\tag{Algo-GD} 
\end{equation}
We keep the first two assumptions from \citet{li2024mechanics}:
\begin{assumption}\label{assumption1}
$\forall y, k \in [K], k\neq y, {\bf c}_y^\top {\bf e}_y = 1$ and ${\bf c}_y^\top {\bf e}_k = 0$.
\end{assumption}
\begin{assumption}\label{assumption2}
For any $({\bf X},y)\in \text{DSET}$, the token ${\bf e}_y$ is contained in the input sequence ${\bf X}$.
\end{assumption}
Assumption~\ref{assumption1} represents a variation of the weight-tying approach commonly used in language models \citep{press2016using, vaswani2017attention}. Once training is complete, for a new input sequence ${\bf X}$, and a model characterized by $\bf W$, we predict the next token ID $\hat{y}_{\bf w}$ based on greedy decoding the probabilities from the softmax of the classification output 
\begin{equation}\label{eq:prob}
     \hat{y}_{\bf w} \in \arg\max_{k\in[K]} \left[\mathbb{S}\left({\bf C } f_{\bf W}({\bf X})\right)\right]_k.
\end{equation}

\subsection{Token-priority graph and global convergence of the self-attention model}
\citet{li2024mechanics} defined a \textit{token-priority graph (TPG)} as a directed graph with nodes representing tokens in the vocabulary.  \( \text{DSET}^{(k)} \) is a subset of sequences from DSET with the same last token is ${\bf e}_k = \bar{\bf x}$. They defined  TPGs $\{\mathcal{G}^{(k)}\}_{k=1}^K$ such that every $\mathcal{G}^{(k)}$ is a directed graph where for every sequence $({\bf X}, y)\in\text{DSET}^{(k)}$ a directed edge is added from ${\bf e}_y$ to every token ${\bf x}\in {\bf X}$.
TPGs are further divided into \textit{strongly-connected components (SCCs)}, which capture subsets of tokens with equal priority. For tokens within two different SCCs, strict priority orders emerge, helping the model to differentiate between tokens when learning next-token predictions. We use the same notation as \citet{li2024mechanics}, given a directed graph $\mathcal{G}$, for $i, j \in [K]$ such that $i\neq j$:
\begin{itemize}
    \item $i \in \mathcal{G}$ denotes that the node \( i \) belongs to $\mathcal{G}$.
    \item $(i \Rightarrow j) \in \mathcal{G}$ denotes that the directed path \( (i \rightarrow j) \)
    is presented in $\mathcal{G}$ but \( j \rightarrow i \) is not.
    \item $(i  \asymp j) \in \mathcal{G}$ means that both nodes \( i \) and \( j \) are
    in the same strongly connected component (SCC) of $\mathcal{G}$ (there exists both a path $i\rightarrow j$ and $j\rightarrow i$).
\end{itemize}
For any two distinct nodes $i, j$
in the same TPG, they either satisfy $(i\Rightarrow j)$, $(j\Rightarrow i)$ or $(i\asymp j)$. Nodes in each $\mathcal{G}^{(k)}$ represent indices in \([K]\), and SCC structure supports the self-attention mechanism's ability to assign priority within sequences based on the conditioning last token. 
Theorem 2 of \citet{li2024mechanics} proved that under Assumptions \ref{assumption1} and \ref{assumption2}, the self-attention model learned through
\ref{eq:Algo-GD} converges to the solution of
the following Support Vector Machine (SVM) defined by the TPGs of the underlying dataset DSET
\begin{equation}\label{eq:Graph-SVM}
{\bf W}^{\text{svm}} = \arg \min_{\bf W} \| {\bf W} \|_F \tag{Graph-SVM}
\end{equation}
\[
\text{s.t.} \quad ({\bf e}_i - {\bf e}_j)^\top {\bf W} {\bf e}_k 
\begin{cases}
= 0, & \forall (i \asymp j) \in \mathcal{G}^{(k)} \\
\geq 1, & \forall (i  \Rightarrow j) \in \mathcal{G}^{(k)}
\end{cases}
 \forall k \in [K].
\]
Here is a condensed version of the theorem:
\begin{theorem}[\citet{li2024mechanics}]\label{thm:li}
Consider dataset DSET and suppose Assumptions \ref{assumption1} and \ref{assumption2} hold. Set loss function as $\ell(u)=-\log(u)$. Starting \ref{eq:Algo-GD} from any ${\bf W}(0)$ with constant size $\eta$, if ${\bf W}^{\text{svm}}\neq {\bf 0}$, 
\begin{equation}\label{eq:tildew}
    \tilde{\bf W}=\lim_{\tau\to\infty} \frac{{\bf W}(\tau)}{\|{\bf W}(\tau)\|_F} = \frac{{\bf W}^{\text{svm}}}{\|{\bf W}^{\text{svm}}\|_F}
\end{equation}
\end{theorem}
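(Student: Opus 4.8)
The plan is to follow the implicit-bias methodology developed for logistic regression and carried over to single-layer attention by \citet{tarzanagh2023transformers} and \citet{li2024mechanics}, specialized here to the log-loss $\ell(u)=-\log u$. First I would unpack the objective: under Assumption~\ref{assumption1}, for a sample $({\bf X},y)$ one has ${\bf c}_y^\top {\bf X}^\top \mathbb{S}({\bf X}{\bf W}\bar{\bf x})=\sum_{t:\,{\bf x}_t={\bf e}_y}[\mathbb{S}({\bf X}{\bf W}\bar{\bf x})]_t$, so the loss depends on ${\bf W}$ only through the total softmax mass placed on the positions whose token equals ${\bf e}_y$ — a set that is nonempty by Assumption~\ref{assumption2}. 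Hence $L({\bf W})=\frac1n\sum_i -\log\big(m_i({\bf W})\big)$ with $m_i\in(0,1)$ the correct-token mass, and $L({\bf W})\to 0$ iff for every $i$ the softmax concentrates on the ${\bf e}_{y_i}$-positions, which forces every score gap $({\bf e}_{y_i}-{\bf e}_j)^\top {\bf W}\bar{\bf x}_i$ with ${\bf e}_j\in{\bf X}_i$, ${\bf e}_j\neq{\bf e}_{y_i}$ to diverge — exactly the separation pattern recorded by the directed edges of $\{\mathcal{G}^{(k)}\}$. Feasibility of \ref{eq:Graph-SVM} with ${\bf W}^{\text{svm}}\neq{\bf 0}$ certifies that such a direction exists with a uniform positive margin, while the equality constraints express that ties inside each SCC can never be broken.

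Second, I would show that gradient descent with this loss sends $\|{\bf W}(\tau)\|_F\to\infty$ while $L({\bf W}(\tau))\to 0$. The attention loss is smooth with locally bounded Hessian, so the descent lemma gives monotone decrease of $L$ and summability of $\|\nabla L({\bf W}(\tau))\|_F^2$; since $\ell'=-1/u<0$, no finite ${\bf W}$ is stationary (moving along the separating direction strictly decreases $L$), so the iterates must escape to infinity, and the only escape directions compatible with $L\to0$ are feasible for the SVM. Because $-\log$ has, in the margin variable, the same exponential-type tail as the logistic loss, the correct-token mass satisfies $m_i({\bf W})=1-O(e^{-\gamma\|{\bf W}\|_F})$ on the relevant cone, which pins the growth rate $\|{\bf W}(\tau)\|_F=\Theta(\log\tau)$.

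Third — the heart of the matter — I would prove the sharpened decomposition ${\bf W}(\tau)=\tfrac{\log\tau}{\|{\bf W}^{\text{svm}}\|_F^2}\,{\bf W}^{\text{svm}}+{\bf R}(\tau)$ with $\|{\bf R}(\tau)\|_F=o(\log\tau)$, from which \eqref{eq:tildew} is immediate. The mechanism: once $L$ is small, the gradient is dominated by the samples and token pairs whose score gaps are \emph{smallest}, i.e. the support vectors of \ref{eq:Graph-SVM}; the negative gradient therefore aligns with the cone spanned by the active rank-one constraint matrices $({\bf e}_i-{\bf e}_j){\bf e}_k^\top$, whose minimum-Frobenius-norm feasible combination is precisely ${\bf W}^{\text{svm}}$. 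Making this rigorous requires a local/global split: a local analysis showing that within a narrow cone around ${\bf W}^{\text{svm}}/\|{\bf W}^{\text{svm}}\|_F$ the dynamics contract toward that direction (a Lojasiewicz-type gradient inequality for the restricted loss), and a global argument that the trajectory actually enters this cone.

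The main obstacle is exactly that last point. Unlike linear logistic regression, the map ${\bf W}\mapsto f_{\bf W}({\bf X})$ is non-convex, so nothing a priori prevents GD from stalling near a direction that separates most but not all token pairs, or from settling on a feasible-but-not-minimum-norm matrix. Ruling this out needs a careful account of which SVM constraints are active along the trajectory — the SCC structure identifying the equalities and the $\Rightarrow$ edges the strict inequalities — together with a monotonicity/quasi-convexity property of the loss restricted to the relevant cone; this is precisely the step where \citet{li2024mechanics} invoke the full strength of their structural assumptions and their two-stage (local then global) convergence argument, and where I would expect the bulk of the work to lie.
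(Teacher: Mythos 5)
This statement is not proved in the paper at all: it is a condensed restatement of Theorem 2 of \citet{li2024mechanics}, imported as a black box (the paper's appendix only proves Lemmas~\ref{lemma:sccs}--\ref{lemma:highest} and Theorems~\ref{thm:priority}--\ref{thm:LT}, which build on it). So the only fair comparison is with the original implicit-bias proof, and your outline does follow that general methodology (loss decrease, norm divergence, directional convergence to the max-margin point of \ref{eq:Graph-SVM}). However, as a standalone argument it has two genuine problems.

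First, your opening step is wrong whenever the TPGs have non-singleton SCCs: you claim $L({\bf W}(\tau))\to 0$ and that \emph{every} score gap $({\bf e}_{y_i}-{\bf e}_j)^\top{\bf W}\bar{\bf x}_i$ for ${\bf e}_j\in{\bf X}_i$, ${\bf e}_j\neq{\bf e}_{y_i}$ must diverge. If $(y_i\asymp j)\in\mathcal{G}^{(k)}$, the dataset also contains samples forcing the reverse inequality, so that gap cannot diverge; the correct statement is that the loss decreases to a generally \emph{positive} infimum, attained in the limit by directions that separate only across SCC boundaries while keeping within-SCC scores tied (this is exactly why \ref{eq:Graph-SVM} carries equality constraints, and why the limiting softmax splits mass inside the top SCC — the very property the paper exploits in Lemma~\ref{lemma:sccs} and Definition~\ref{def:highest_prob}). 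You note the equality constraints later, but that contradicts the ``all gaps diverge'' claim, and the asserted $\Theta(\log\tau)$ rate and explicit decomposition ${\bf W}(\tau)=\tfrac{\log\tau}{\|{\bf W}^{\text{svm}}\|_F^2}{\bf W}^{\text{svm}}+o(\log\tau)$ are neither needed nor justified by what you write. Second, you explicitly defer the heart of the theorem — that the trajectory converges in direction to the \emph{minimum-Frobenius-norm} feasible point rather than to some other separating direction or a local stall of the non-convex attention landscape — to the two-stage local/global argument of \citet{li2024mechanics}. Identifying where the difficulty lies is not the same as resolving it, so the proposal does not establish the result; it reconstructs the setup and then cites the cited paper for the step that constitutes the proof.
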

This convergence implies that the model predicts the next token based on priorities obtained from the SCCs within the TPG relevant to the last token of the input sequence. Unlike the work in \citet{li2024mechanics}, which considers both hard retrieval and soft composition components and examines multiple loss functions in subsequent results, we focus exclusively on a log-loss function in this work, leaving the exploration of other loss functions for future research. Since the soft composition component is not required for our subsequent definitions and theoretical results, we concentrate solely on the hard retrieval component.

We add here another reasonable assumption that prevents the probabilities in Equation~\ref{eq:prob} from being equal due to improbable numerical reasons, and we present our first lemma.
\begin{assumption}\label{assumptioninteger}
For any $({\bf X},y)\in \text{DSET}$, $\exists i,j \in [T] \text{ and } u,v \in \mathbb{Z}$ such that $u\left[\mathbb{S}({\bf X}\tilde{\bf W} \bar{{\bf x}})\right]_i=v\left[\mathbb{S}({\bf X}{\tilde{\bf W}} \bar{{\bf x}})\right]_j  \text{ if and only if } u=v \text{ and } \left[\mathbb{S}({\bf X}\tilde{\bf W} \bar{{\bf x}})\right]_i=\left[\mathbb{S}({\bf X}{\tilde{\bf W}} \bar{{\bf x}})\right]_j$.
\end{assumption}
\begin{lemma}\label{lemma:sccs}
    Suppose conditions from Theorem~\ref{thm:li} and Assumption \ref{assumptioninteger} hold. Consider an input sequence ${\bf X}$ from DSET$^{(k)}$ and corresponding TPG $\mathcal{G}^{(k)}$, $\forall i,j\in[K]$ we have $\left[\mathbb{S}\left({\bf C } f_{\tilde{\bf W}}({\bf X})\right)\right]_i = \left[\mathbb{S}\left({\bf C } f_{\tilde{\bf W}}({\bf X})\right)\right]_j \text{ iff } (x_i\asymp x_j)\in \mathcal{G}^{(k)}$.
\end{lemma}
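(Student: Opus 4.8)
The plan is to peel the claim from the outside in. First I would drop the outer softmax: since $[\mathbb{S}(z)]_i = e^{z_i}/\sum_\ell e^{z_\ell}$ is strictly increasing in $z_i$ (with the other coordinates fixed), $\big[\mathbb{S}({\bf C} f_{\tilde{\bf W}}({\bf X}))\big]_i = \big[\mathbb{S}({\bf C} f_{\tilde{\bf W}}({\bf X}))\big]_j$ iff $\big({\bf C} f_{\tilde{\bf W}}({\bf X})\big)_i = \big({\bf C} f_{\tilde{\bf W}}({\bf X})\big)_j$, so it suffices to compare the pre-softmax classification scores. Writing $a := \mathbb{S}({\bf X}\tilde{\bf W}\bar{\bf x})$ and using the identity value matrix, $f_{\tilde{\bf W}}({\bf X}) = \sum_{t\in[T]} a_t\,{\bf e}_{x_t}$; Assumption~\ref{assumption1} makes ${\bf c}_i^\top{\bf e}_{x_t}$ equal to $1$ when $x_t=i$ and $0$ otherwise, so the score collapses to the total attention mass placed on token $i$, namely $\big({\bf C} f_{\tilde{\bf W}}({\bf X})\big)_i = \sum_{t:\,x_t=i} a_t$.

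Next I would read off the attention weights from Theorem~\ref{thm:li}. The logit at position $t$ is ${\bf e}_{x_t}^\top\tilde{\bf W}{\bf e}_k$, which depends only on the token $x_t$ and the last token $k$; hence every occurrence of a token $i$ in ${\bf X}$ shares one weight $\alpha_i>0$, and the score is $m_i\alpha_i$ with $m_i\ge 1$ the multiplicity of $i$ in ${\bf X}$. Since ${\bf X}\in\text{DSET}^{(k)}$, each token of ${\bf X}$ is a node of $\mathcal{G}^{(k)}$ (a target of an edge from the label of ${\bf X}$), so any two such tokens are comparable in $\mathcal{G}^{(k)}$. Substituting $\tilde{\bf W} = {\bf W}^{\mathrm{svm}}/\|{\bf W}^{\mathrm{svm}}\|_F$ into the Graph-SVM feasibility constraints gives $({\bf e}_i-{\bf e}_j)^\top\tilde{\bf W}{\bf e}_k = 0$ when $(x_i\asymp x_j)\in\mathcal{G}^{(k)}$ and $\ge 1/\|{\bf W}^{\mathrm{svm}}\|_F>0$ when $(x_i\Rightarrow x_j)\in\mathcal{G}^{(k)}$. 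Thus, for $i,j$ both present in ${\bf X}$, $\alpha_i=\alpha_j$ holds exactly when $(x_i\asymp x_j)\in\mathcal{G}^{(k)}$, while $(x_i\Rightarrow x_j)$ forces $\alpha_i>\alpha_j$ (and symmetrically for the reverse arrow).

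Finally I would discharge the multiplicities with Assumption~\ref{assumptioninteger}. Equality of the scores reads $m_i\alpha_i=m_j\alpha_j$; applying Assumption~\ref{assumptioninteger} with the integers $u=m_i$, $v=m_j$ and any positions $p,q$ with $x_p=i$, $x_q=j$ forces $m_i=m_j$ and $\alpha_i=\alpha_j$, and the latter was just shown equivalent to $(x_i\asymp x_j)\in\mathcal{G}^{(k)}$ — this is the ``only if'' direction. Conversely, $(x_i\asymp x_j)\in\mathcal{G}^{(k)}$ yields $\alpha_i=\alpha_j$, and once the two tokens occur equally often in ${\bf X}$ (in particular for a sequence of distinct tokens) the masses $m_i\alpha_i$, $m_j\alpha_j$ coincide, giving equal probabilities by the first step. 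I expect this last move to be the main obstacle: the classification scores are positive-integer multiples of the attention weights, not the weights themselves, so the Graph-SVM's clean $\{{=}0\}$-versus-$\{{\ge}1\}$ dichotomy does not by itself decide equality of probabilities — Assumption~\ref{assumptioninteger} is precisely what rules out accidental coincidences $m_i\alpha_i=m_j\alpha_j$ with $\alpha_i\ne\alpha_j$ (equivalently, rational ratios among distinct attention weights). Some extra bookkeeping handles the degenerate cases (tokens absent from ${\bf X}$ have score $0$; two positions carrying the same token give trivial equality but are not an $\asymp$-pair of distinct nodes) and confirms that every token of ${\bf X}$ is a node of $\mathcal{G}^{(k)}$, so that pairwise comparability there applies.
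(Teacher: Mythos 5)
Your proof follows essentially the same route as the paper's: drop the outer softmax by monotonicity, use Assumption~\ref{assumption1} to collapse each classification score to (multiplicity)$\times$(attention weight), tie equality of attention weights to SCC membership through the \ref{eq:Graph-SVM} constraints inherited via Theorem~\ref{thm:li}, and invoke Assumption~\ref{assumptioninteger} to exclude integer-multiple coincidences $k_i a_i = k_j a_j$ with $a_i \neq a_j$. The multiplicity caveat you flag in the converse direction is genuine rather than an obstacle of your own making — Assumption~\ref{assumptioninteger} forces $k_i = k_j$ as well as $a_i = a_j$, a point the paper's own proof passes over silently in its chain of equivalences — so your treatment is, if anything, more careful than the original.
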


This means that the tokens that maximize the probability for weights $\tilde{\bf W}$ in Equation~\ref{eq:prob} are all within the same SCC leading to the following definition:
\begin{definition}[\emph{highest probability SCC}]\label{def:highest_prob}
    Consider an input sequence ${\bf X}$ from DSET$^{(k)}$ and corresponding TPG $\mathcal{G}^{(k)}$. We define $\widehat{\mathcal{G}}^{(k)}({\bf X}) \in \mathcal{G}^{(k)}$ as the \emph{highest probability SCC for ${\bf X}$ in $\mathcal{G}^{(k)}$}  such that $\forall {\bf x} \in \widehat{\mathcal{G}}^{(k)}({\bf X})$ we have $\left[\mathbb{S}\left({\bf C } f_{\tilde{\bf W}}({\bf X})\right)\right]_x = \left\|\mathbb{S}\left({\bf C } f_{\tilde{\bf W}}({\bf X})\right)\right\|_\infty$. 
\end{definition}

\section{Defining topics}\label{sec:topic}
In order to answer our research questions regarding the dynamics of topic changes we need to define the concept of a topic. In the previous settings, a dataset DSET generates TPGs $\{\mathcal{G}^{(k)}\}_{k=1}^K$, but, conversely, an existing set of TPGs can generate DSET. Therefore, inspired by \citet{ameisen2025circuit} that introduces attribution graphs to reveal the LLMs' internal computational structure, we define a topic as a set of TPGs:

\begin{definition}[\emph{topic}]\label{def:topic} 
A \emph{topic} $\mathbb{T}$ is a set of TPGs $\{\mathcal{G}^{(k)}\}_{k=1}^K$. Given topic $\mathbb{T}$ defined by TPGs $\{\mathcal{G}^{(k)}\}_{k=1}^K$, input sequence ${\bf X}$ \emph{belongs} to $\mathbb{T}$ if $\forall {\bf x}\in{\bf X}, 
x\in \mathcal{G}^{(\bar{x})}$. A sequence $({\bf X},y)$ is \emph{within} $\mathbb{T}$
if ${\bf X}$ belongs to $\mathbb{T}$
and $\forall {\bf x} \in {\bf X}, \ (y \Rightarrow x)\in \mathcal{G}^{(\bar{x})}$.
\end{definition}
Our graph-based formulation aligns with recent advances in structured representations of LLMs \citep{sen-etal-2023-knowledge, wang2025model}. Given the finite number of edges, a DSET can be generated from $\mathbb{T}$ such that it can reconstruct the exact TPGs $\{\mathcal{G}^{(k)}\}_{k=1}^K$ that define $\mathbb{T}$, following the construction method in \citet{li2024mechanics}. This leads to the following reasonable assumption: 
\begin{assumption}\label{assumption3}
    A DSET generated from any topic $\mathbb{T}$ defined by $\{\mathcal{G}^{(k)}\}_{k=1}^K$ exactly reconstructs back the TPGs $\{\mathcal{G}^{(k)}\}_{k=1}^K$.
\end{assumption}
Detailed explanation is provided in Appendix~\ref{appdendix:assumption 3}. This assumption enables the application of the results from \citet{li2024mechanics}, with the concepts of topics and TPGs being used interchangeably.

\begin{definition}[\emph{topic continuity}]\label{def:topic_continuity} 
Given an input sequence ${\bf X}$ that belongs to $\mathbb{T}$,
a weight matrix ${\bf W}$ is said to \emph{keep} topic $\mathbb{T}$ for the input sequence ${\bf X}$ if  
$\hat y_{\bf W}
\in \widehat{\mathcal{G}}^{(k)}({\bf X})$. 
\end{definition}

\noindent\textbf{Remark.} Given two topics, $\mathbb{T}_a$ and $\mathbb{T}_b$, with corresponding datasets DSET$_a$ and DSET$_b$, the union of $\{\mathcal{G}^{(k)}_{a}\}_{k=1}^K$ and $\{\mathcal{G}^{(k)}_{b}\}_{k=1}^K$ forms the TPGs for the mixed topics $\mathbb{T}_{ab}$, denoted by $\{\mathcal{G}^{(k)}_{ab}\}_{k=1}^K$.


It is clear that $\tilde{\bf W}_{a}$ trained only with DSET$_a$ will always \emph{keep} topic $\mathbb{T}_a$.\footnote{\emph{Notation: } The subscripts of weights and objects correspond to the associated topic. For instance $\tilde{\bf W}_{a}$ denotes the weights defined in Equation~\ref{eq:tildew}, obtained from DSET$_a$,  which pertains to topic $\mathbb{T}_a$.} But we could also obtain $\tilde{\bf W}_{ab}$  with a dataset combining DSET$_a$ and DSET$_b$ as training sets. 
The central question is whether $\tilde{\bf W}_{ab}$ $\emph{keeps}$ topic $\mathbb{T}_a$, given an input sequence ${\bf X}$ that belongs to $\mathbb{T}_a$, or if it instead predicts tokens that prompt a topic change. 

\section{Attention within mixed topics}\label{sec:mix}
Let's first understand how attention models assign priority to tokens within mixed-topic setting. For simplicity, we elaborate our results using a two-topic scenario, but it is straightforward to extend the results on multiple topics. Notice the self-attention embedding \ref{eq:output} is a linear combination of $\bf X$ given by $\mathbb{S}({\bf XW \bar{x}})$. The embeddings in $\bf X$ corresponding to the highest entries in $\mathbb{S}({\bf XW \bar{x}})$ will receive higher priority 
to predict the next token, therefore we can hypothesize that models in which $\mathbb{S}({\bf XW \bar{x}})$ are ordered in a similar way will predict similar next tokens. This idea leads to our first main result which considers this situation within a mixed-topic setting:

\begin{theorem}\label{thm:priority}

Consider datasets DSET$_a$ and DSET$_b$ from topics $\mathbb{T}_a$ and $\mathbb{T}_b$, respectively. Let DSET$_{ab}$ be the union of DSET$_a$ and DSET$_b$. Suppose Assumptions \ref{assumption1}, \ref{assumption2}, \ref{assumptioninteger} and \ref{assumption3} hold. Set loss function as $\ell(u)=-\log(u)$. Starting \ref{eq:Algo-GD} from any initial point with constant size $\eta$ and if ${\bf W}_a^{\text{svm}}\neq {\bf 0}$ and ${\bf W}_{ab}^{\text{svm}}\neq {\bf 0}$; for a given sequence ${\bf X}$ that belongs to $\mathbb{T}_a$, we have that $\tilde{\bf W}_{ab}$ preserves the attention priority of $\mathbb{T}_a$ on input ${\bf X}$. This is $\forall i,j\in [T]$:
\begin{itemize}
    \item $\text{if } \quad [\mathbb{S}({\bf X}\tilde{\bf W}_a \bar{{\bf x}})]_i = [\mathbb{S}({\bf X}\tilde{\bf W}_a \bar{{\bf x}})]_j \text{, then}\quad  [\mathbb{S}({\bf X}\tilde{\bf W}_{ab} \bar{{\bf x}})]_i = [\mathbb{S}({\bf X}\tilde{\bf W}_{ab} \bar{{\bf x}})]_j$
    \item $\text{if } \quad [\mathbb{S}({\bf X}\tilde{\bf W}_a \bar{{\bf x}})]_i > [\mathbb{S}({\bf X}\tilde{\bf W}_a \bar{{\bf x}})]_j  \text{,  then}\quad  [\mathbb{S}({\bf X}\tilde{\bf W}_{ab} \bar{{\bf x}})]_i \geq [\mathbb{S}({\bf X}\tilde{\bf W}_{ab} \bar{{\bf x}})]_j$
    \item $\text{if } \quad [\mathbb{S}({\bf X}\tilde{\bf W}_a \bar{{\bf x}})]_i < [\mathbb{S}({\bf X}\tilde{\bf W}_a \bar{{\bf x}})]_j  \text{, then}\quad  [\mathbb{S}({\bf X}\tilde{\bf W}_{ab} \bar{{\bf x}})]_i \leq [\mathbb{S}({\bf X}\tilde{\bf W}_{ab} \bar{{\bf x}})]_j$
\end{itemize}
\end{theorem}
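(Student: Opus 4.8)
The plan is to work entirely through the SVM characterization from Theorem~\ref{thm:li}. Since ${\bf X}$ belongs to $\mathbb{T}_a$, all its tokens lie in $\mathcal{G}^{(\bar x)}_a$, and the key structural fact I would establish first is that $\tilde{\bf W}_{ab}$ satisfies the same \emph{equality} constraints as $\tilde{\bf W}_a$ on the relevant nodes. Concretely: because $\mathcal{G}^{(\bar x)}_{ab} = \mathcal{G}^{(\bar x)}_a \cup \mathcal{G}^{(\bar x)}_b$ is formed by taking the union of edge sets, adding edges can only merge SCCs or refine the $\Rightarrow$ relation into $\asymp$, never the reverse. Hence if $(x_i \asymp x_j) \in \mathcal{G}^{(\bar x)}_a$ then $(x_i \asymp x_j) \in \mathcal{G}^{(\bar x)}_{ab}$, so the constraint $({\bf e}_i - {\bf e}_j)^\top {\bf W} {\bf e}_{\bar x} = 0$ is active for \emph{both} SVMs. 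This gives $({\bf e}_i-{\bf e}_j)^\top \tilde{\bf W}_{ab}\,\bar{\bf x} = 0 = ({\bf e}_i-{\bf e}_j)^\top\tilde{\bf W}_a\,\bar{\bf x}$, and since the softmax is strictly monotincreasing in its logits and coordinatewise equal logits give equal softmax outputs, the first bullet follows immediately. (Here I use that $\bar{\bf x} = {\bf e}_{\bar x}$ and ${\bf x}_i = {\bf e}_{x_i}$.)

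For the second and third bullets (which are contrapositives of each other, so it suffices to prove one, say the second), suppose $[\mathbb{S}({\bf X}\tilde{\bf W}_a\bar{\bf x})]_i > [\mathbb{S}({\bf X}\tilde{\bf W}_a\bar{\bf x})]_j$. By strict monotonicity of softmax this is equivalent to ${\bf x}_i^\top \tilde{\bf W}_a \bar{\bf x} > {\bf x}_j^\top\tilde{\bf W}_a\bar{\bf x}$, i.e. $({\bf e}_i-{\bf e}_j)^\top\tilde{\bf W}_a\bar{\bf x} > 0$. By the SVM feasibility for $\mathbb{T}_a$ this can only happen through a $\Rightarrow$ constraint, so $(x_i \Rightarrow x_j)\in\mathcal{G}^{(\bar x)}_a$ (it cannot be $\asymp$, which forces equality, and it cannot be $x_j \Rightarrow x_i$, which forces the opposite strict inequality). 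Now in the union graph $\mathcal{G}^{(\bar x)}_{ab}$, the pair $(x_i,x_j)$ satisfies exactly one of $(x_i\Rightarrow x_j)$, $(x_j\Rightarrow x_i)$, or $(x_i\asymp x_j)$. The edge $x_i \to x_j$ present in $\mathcal{G}^{(\bar x)}_a$ survives in the union, so the case $(x_j\Rightarrow x_i)\in\mathcal{G}^{(\bar x)}_{ab}$ is impossible (it would require the absence of any $x_i \to x_j$ path). Thus either $(x_i\Rightarrow x_j)\in\mathcal{G}^{(\bar x)}_{ab}$, giving $({\bf e}_i-{\bf e}_j)^\top\tilde{\bf W}_{ab}\bar{\bf x} \ge 1 > 0$, or $(x_i\asymp x_j)\in\mathcal{G}^{(\bar x)}_{ab}$, giving $({\bf e}_i-{\bf e}_j)^\top\tilde{\bf W}_{ab}\bar{\bf x} = 0$; in both cases $[\mathbb{S}({\bf X}\tilde{\bf W}_{ab}\bar{\bf x})]_i \ge [\mathbb{S}({\bf X}\tilde{\bf W}_{ab}\bar{\bf x})]_j$, as claimed.

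The main obstacle I anticipate is the careful bookkeeping between graph-theoretic facts and SVM constraints — specifically justifying that a strict inequality in the $\tilde{\bf W}_a$ logits \emph{must} come from a $\Rightarrow$ edge rather than being an unconstrained direction. This requires knowing that $\mathcal{G}^{(\bar x)}_a$ relates \emph{every} pair of its nodes (stated in the excerpt: any two distinct nodes in the same TPG satisfy one of $\Rightarrow$, $\Leftarrow$, $\asymp$) together with the fact that ${\bf X}$ belongs to $\mathbb{T}_a$ so $x_i, x_j \in \mathcal{G}^{(\bar x)}_a$; and it requires that the SVM solution, being the minimum-norm feasible point, does not spontaneously make an inequality strict beyond what the constraints dictate — but for the \emph{direction} of the inequality only feasibility is needed, which is all I use above. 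A secondary point worth stating cleanly is the reduction from softmax-output comparisons to logit comparisons: since all logits share the same normalizing denominator $\sum_\ell \exp({\bf x}_\ell^\top\tilde{\bf W}\bar{\bf x})$, comparisons among $[\mathbb{S}(\cdot)]_i$ are identical to comparisons among the raw scores ${\bf x}_i^\top\tilde{\bf W}\bar{\bf x}$, which is what lets me pass freely between the two formulations throughout.
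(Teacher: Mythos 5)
Your proposal is correct and follows essentially the same route as the paper's proof: the paper likewise uses the inclusion $\mathcal{G}_a^{(k)} \subseteq \mathcal{G}_{ab}^{(k)}$ (so $\asymp$ is preserved and $\Rightarrow$ either survives or relaxes to $\asymp$) and then translates softmax comparisons into TPG relations via the SVM constraints, a correspondence it isolates as a separate lemma (Lemma~\ref{lemma:priority}) applied to both $\tilde{\bf W}_a$ and $\tilde{\bf W}_{ab}$. The only cosmetic quibble is that your second and third bullets are related by swapping $i$ and $j$ rather than being contrapositives, which does not affect the argument.
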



This implies that for an input sequence $\mathbf{X}$, a model trained in a mixed-topic setting will maintain the priority of the topic to which $\mathbf{X}$ belongs. Consequently, the attention will be allocated in the same order as if the model had been trained exclusively on the original topic of $\mathbf{X}$. For the first input sequence ${\bf X} = [{\bf e}_5, {\bf e}_1, {\bf e}_3, {\bf e}_4]^\top$ from $\mathbb{T}_a$, as shown in Figure~\ref{fig:each_scenario} (right), the predicted next token $\hat{y}_{{\bf w}_{ab}}$ is ${\bf e}_5$ and the \emph{highest probability SCC} in mixed topics is $\widehat{\mathcal{G}}^{(4)}_{ab}({\bf X}) = \{{\bf e}_5\}$. Since $\hat{y}_{{\bf w}_{ab}}$ belongs to ${\widehat{\mathcal{G}}}^{(4)}_{ab}({\bf X})$, ${\bf W}_{ab}$ for input sequence $\bf X$ is considered as \emph{topic continuity}, based on the Definition~\ref{def:topic_continuity}.    

The only assumption about ${\bf X}$ on Theorem~\ref{thm:priority} is that it belongs to $\mathbb{T}_a$. However, if $\mathbf{X}$ belongs to $\mathbb{T}_a$ and $\mathbb{T}_b$, the priority will be preserved within both topics. Additionally, strict equality in the attention priority holds, but strict inequalities may not, as the union of their TPGs can form new SCCs. As illustrated on the left of Figure~\ref{fig:each_scenario}, $\mathcal{G}^{(4)}_{a}$ and $\mathcal{G}^{(4)}_{b}$ denote the TPGs corresponding to the last input token ${\bf e}_4$ for $\mathbb{T}_a$ and $\mathbb{T}_b$, respectively. In $\mathcal{G}^{(4)}_{a}$, the token priority is ${\bf e}_5 > {\bf e}_3 > {\bf e}_1 = {\bf e}_2 > {\bf e}_4$. In contrast, in $\mathcal{G}^{(4)}_{ab}$ for the mixed topics, the priority order 
is ${\bf e}_5 > {\bf e}_3 > {\bf e}_1 = {\bf e}_2 ={\bf e}_4$. The equality ${\bf e}_1 = {\bf e}_2$ from $\mathcal{G}^{(4)}_{a}$ is maintained in $\mathcal{G}^{(4)}_{ab}$, whereas the strict inequality ${\bf e}_2 > {\bf e}_4$ is relaxed to ${\bf e}_2 = {\bf e}_4$ in mixed topics, forming the new SCC, $\{{\bf e}_1, {\bf e}_2, {\bf e}_4\}$, in $\mathcal{G}^{(4)}_{ab}$. 

\begin{figure}
    \centering
    \includegraphics[width=\linewidth]{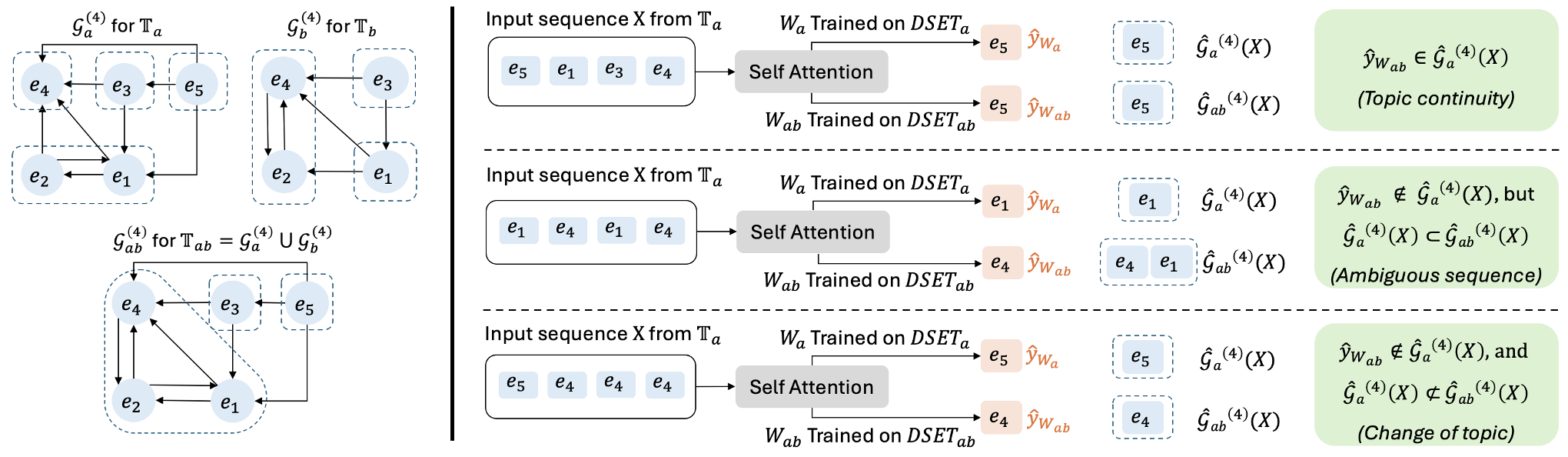}
    \caption{Depiction of each scenario in next token prediction. \textbf{Left:} Taking the last token ${\bf e}_4$ as an example, $\mathcal{G}^{(4)}_{ab}$ for $\mathbb{T}_{ab}$ is formed by the union of  $\mathcal{G}^{(4)}_{a}$ and $\mathcal{G}^{(4)}_{b}$. The direction of edge is from output to input and the dotted square denotes the strongly-connected components (SCC) in which tokens have equal priority. \textbf{Right:} For each input sequence belonging to $\mathbb{T}_a$, we use a self-attention model trained on $\text{DSET}_a$ and another model trained on the mixed-topic dataset $\text{DSET}_{ab}$ to predict the next tokens, denoted as $\hat{y}_{{\bf w}_a}$ and $\hat{y}_{{\bf w}_{ab}}$, respectively. $\widehat{\mathcal{G}}^{(4)}_{ab}$ and $\widehat{\mathcal{G}}^{(4)}_a$ represent the \emph{highest probability SCCs} (Definition~\ref{def:highest_prob}) in mixed-topic setting and in $\mathbb{T}_a$, respectively. There are three scenarios, \emph{topic continuity} (Definition~\ref{def:topic_continuity}), \emph{ambiguous sequence} (Definition~\ref{def:ambiguous}), and \emph{change of topic} (Definition~\ref{def:change}).
    The numeric details for each scenario are provided in Appendix~\ref{appendix:numeric_eg}.}
    \label{fig:each_scenario}
\end{figure}

\section{Explaining topic shifts}\label{sec:LT}
The formation of new SCCs when combining datasets suggests that the highest priority SCC for some input sequences may increase in size in this new setting. This also suggests that topic shifts may arise from ambiguity within an input sequence rather than a straightforward change in topic. In our oracle analogy, gaining knowledge of both Topic A and Topic B might cause a conversation to be naturally followed within Topic A or also outside Topic A. We introduce the following definition to characterize this phenomenon:
\begin{definition}[\emph{ambiguous sequence}]\label{def:ambiguous}
Given DSET$_a$ and DSET$_b$ generated from two different topics $\mathbb{T}_a$ and $\mathbb{T}_b$. Denote $\mathbb{T}_{ab}$ as the combined topic defined by a combination of DSET$_a$ and DSET$_b$. A sequence $\bf X$ that belongs to $\mathbb{T}_a$ is \emph{ambiguous} in $\mathbb{T}_{ab}$ with respect to $\mathbb{T}_a$ if 
$\tilde{\bf W}_{ab}$ does not keep topic $\mathbb{T}_a$ for ${\bf X}$, but
$\widehat{\mathcal{G}}_{a}^{(\bar{x})}({\bf X})\subset
\widehat{\mathcal{G}}_{ab}^{(\bar{x})}({\bf X})$.
\end{definition}
Definition~\ref{def:ambiguous} defines an ambiguous sequence as one where the highest-probability next-token predictions include tokens from both within and outside the input topic, reflecting natural ambiguity from overlapping topics. Take the second input sequence ${\bf X} = [{\bf e}_1, {\bf e}_4, {\bf e}_1, {\bf e}_4]^\top$ in Figure~\ref{fig:each_scenario} (right) as an example. $\widehat{\mathcal{G}}_a^{(4)}({\bf X})$ is $\{{\bf e}_1\}$, as depicted in $\mathcal{G}_a^{(4)}$ from Figure~\ref{fig:each_scenario} (left) and $\widehat{\mathcal{G}}_{ab}^{(4)}({\bf X})$ is $\{{\bf e}_1, {\bf e}_4\}$, as shown in $\mathcal{G}_{ab}^{(4)}$ from Figure~\ref{fig:each_scenario} (left). $\widehat{\mathcal{G}}_a^{(4)}({\bf X})$ is a subset of $\widehat{\mathcal{G}}_{ab}^{(4)}({\bf X})$, although $\hat{y}_{{\bf w}_{ab}} \not \in \widehat{\mathcal{G}}_a^{(4)}({\bf X})$. We can argue that the next token predicted from an ambiguous sequence cannot be considered as a topic change, as it lacks the clear trigger phenomenon observed in human cognition. To address this, we propose a formal definition for a topic change:
\begin{definition}[\emph{change of topic}]\label{def:change}
    Given DSET$_a$ and DSET$_b$ generated from two topics $\mathbb{T}_a$ and $\mathbb{T}_b$, and a sequence $\bf X$ that belongs to $\mathbb{T}_a$. The weight matrix $\tilde{\bf W}_{ab}$ \emph{changes topic} $\mathbb{T}_a$ for sequence ${\bf X}$ if $\tilde{\bf W}_{ab}$ does not keep topic $\mathbb{T}_a$ for ${\bf X}$ and $\bf X$ is not ambiguous in $\mathbb{T}_{ab}$ with respect to $\mathbb{T}_a$.
\end{definition}
In Figure~\ref{fig:each_scenario} (right), ${\bf W}_{ab}$ changes topic for the last input sequence ${\bf X} = [{\bf e}_5, {\bf e}_4, {\bf e}_4, {\bf e}_4]^\top$, following the Definition~\ref{def:change}. Building on the formal definitions of topic continuity, ambiguous sequences, and topic changes, we now present a necessary condition for a sequence to induce a topic change. This is achieved by introducing our final definition, grounded in the highest-priority SCC as determined by the order in the attention layer. 
\begin{definition}[\emph{highest priority SCC}]\label{def: highest_priority}
    Consider a sequence ${\bf X}$ that belongs to $\mathbb{T}$. We define $\dot{\mathcal{G}}^{(\bar{x})}({\bf X}) \subseteq \mathcal{G}^{(\bar{x})}$ as the \emph{highest priority SCC for ${\bf X}$ in $\mathcal{G}^{(\bar{x})}$}  such that $\forall {x}_i \in \dot{\mathcal{G}}^{(\bar{x})}({\bf X})$ and ${x}_j \in \mathcal{G}^{(\bar{x})}$ we have $(x_i\Rightarrow x_j)\in \mathcal{G}^{(\bar{x})}$ or $(x_i\asymp x_j)\in \mathcal{G}^{(\bar{x})}$. 
\end{definition}
\begin{theorem}\label{thm:expected}
   Under the same settings and assumptions in Theorem~\ref{thm:priority}, let ${\bf X}$ be a sequence that belongs to $\mathbb{T}_a$. If $\tilde{\bf W}_{ab}$ \emph{changes} topic $\mathbb{T}_a$ for $\bf X$ then $\exists x_j \not \in \dot{\mathcal{G}}_{a}^{(\bar{x})}(\bf X)$ such that $\forall x_i\in \dot{\mathcal{G}}_{a}^{(\bar{x})}(\bf X)$, the number of times ${\bf x}_j$ appears in $\bf X$ is greater than the number of times ${\bf x}_i$ appears in $\bf X$.
\end{theorem}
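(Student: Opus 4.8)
The plan is to prove the contrapositive-flavored statement directly: assume $\tilde{\bf W}_{ab}$ changes topic $\mathbb{T}_a$ for $\bf X$, and derive that some token outside $\dot{\mathcal{G}}_a^{(\bar x)}({\bf X})$ must appear strictly more often in $\bf X$ than every token inside it. The starting point is that "changes topic" means $\tilde{\bf W}_{ab}$ does not keep $\mathbb{T}_a$ and $\bf X$ is not ambiguous, i.e. the predicted token $\hat y_{{\bf w}_{ab}}$ lies outside $\widehat{\mathcal G}_a^{(\bar x)}({\bf X})$ and $\widehat{\mathcal G}_a^{(\bar x)}({\bf X})\not\subset \widehat{\mathcal G}_{ab}^{(\bar x)}({\bf X})$. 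I would first use Theorem~\ref{thm:priority}: the softmax attention vector $\mathbb{S}({\bf X}\tilde{\bf W}_{ab}\bar{\bf x})$ orders the tokens of $\bf X$ consistently with $\mathbb{S}({\bf X}\tilde{\bf W}_a\bar{\bf x})$. Combined with Lemma~\ref{lemma:sccs} and Definition~\ref{def:highest_prob}, the tokens receiving the top attention weight under $\tilde{\bf W}_a$ are exactly those in the top SCC, and I expect to identify $\dot{\mathcal G}_a^{(\bar x)}({\bf X})$ with the SCC containing the tokens of $\bf X$ that get the (strictly) largest attention value under $\tilde{\bf W}_a$ — this is the SCC of $\mathcal G_a^{(\bar x)}$ that dominates all other tokens present in $\bf X$.

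Next I would translate the classification output into attention weights. By Assumption~\ref{assumption1}, ${\bf C}f_{\bf W}({\bf X})$ has $k$-th entry equal to the total attention mass $\mathbb{S}({\bf X}{\bf W}\bar{\bf x})$ places on occurrences of token $k$ in $\bf X$; so $[\mathbb{S}({\bf C}f_{\bf W}({\bf X}))]_k$ is a strictly increasing function of $\sum_{i:\, x_i=k}[\mathbb{S}({\bf X}{\bf W}\bar{\bf x})]_i$. Hence $\hat y_{{\bf w}_{ab}}$ maximizes this aggregated mass. The key step is then a counting argument: for $\tilde{\bf W}_a$ all occurrences of a token in the top SCC $\dot{\mathcal G}_a^{(\bar x)}({\bf X})$ share one common attention value (call it $p^\star$), which is strictly larger than the attention value of any token in a strictly lower SCC; and under $\tilde{\bf W}_{ab}$, Theorem~\ref{thm:priority} guarantees the occurrences of a top-SCC token still all share a common value $q^\star$ and every lower-SCC occurrence has value $\le q^\star$. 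Therefore the aggregated mass of a top-SCC token $x_i$ is exactly (number of occurrences of $x_i$ in $\bf X$)$\cdot q^\star$, while the aggregated mass of the winning outside token $x_j$ is at most (number of occurrences of $x_j$)$\cdot q^\star$. For $x_j$ to win — and to win strictly, invoking Assumption~\ref{assumptioninteger} to rule out accidental ties between integer-weighted softmax entries — the occurrence count of $x_j$ must strictly exceed that of each $x_i$ in $\dot{\mathcal G}_a^{(\bar x)}({\bf X})$, which is the claim.

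The main obstacle I anticipate is handling the case $q^\star = 0$ or, more generally, pinning down exactly when the inequalities from Theorem~\ref{thm:priority} are strict versus loose: a token in a strictly lower SCC of $\mathcal G_a^{(\bar x)}$ could, in $\mathcal G_{ab}^{(\bar x)}$, be merged into the top SCC, so its attention value under $\tilde{\bf W}_{ab}$ need not be strictly below $q^\star$ — it could equal $q^\star$. I need to argue that such a merge, together with $\hat y_{{\bf w}_{ab}}$ landing on that token, would actually make $\bf X$ ambiguous (because then $\widehat{\mathcal G}_a^{(\bar x)}({\bf X})\subseteq\widehat{\mathcal G}_{ab}^{(\bar x)}({\bf X})$) unless some genuinely lower-SCC token overtakes purely by frequency; so the "not ambiguous" hypothesis is exactly what forces the strict frequency domination rather than an SCC-structure change. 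Making this dichotomy airtight — distinguishing "won by being promoted into the top SCC" (ambiguous, excluded by hypothesis) from "won by sheer repetition" (the asserted conclusion) — is the crux, and I would organize the proof around that split, using Assumption~\ref{assumptioninteger} to guarantee that a token winning by repetition does so with a strict inequality in the final softmax probabilities.
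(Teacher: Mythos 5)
Your setup matches the paper's ingredients (the identity $[{\bf C}f_{\tilde{\bf W}}({\bf X})]_{x_i}=k_i\cdot[\mathbb{S}({\bf X}\tilde{\bf W}\bar{\bf x})]_i$, Theorem~\ref{thm:priority} for order preservation, and Lemma~\ref{lemma:highest} identifying $\dot{\mathcal{G}}_{a}^{(\bar{x})}({\bf X})$ with the top-attention tokens), and you correctly locate the crux, but the way you propose to close it does not work. First, your argument is winner-centric: you take the witness $x_j$ to be the token predicted by $\tilde{\bf W}_{ab}$ and assume it lies outside $\dot{\mathcal{G}}_{a}^{(\bar{x})}({\bf X})$. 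Neither is forced: a change of topic is certified by $\widehat{\mathcal{G}}_{a}^{(\bar{x})}({\bf X})\not\subseteq\widehat{\mathcal{G}}_{ab}^{(\bar{x})}({\bf X})$, and the $\tilde{\bf W}_{ab}$-argmax can itself sit inside $\dot{\mathcal{G}}_{a}^{(\bar{x})}({\bf X})$ (it is the token knocked out of $\widehat{\mathcal{G}}_{ab}$, e.g.\ the $\tilde{\bf W}_a$-argmax lying in a lower SCC with a large count, that then supplies the witness). Second, your proposed dichotomy ``won by being promoted into the top SCC $\Rightarrow$ ambiguous'' is false: take a top-SCC token $x_m\in\widehat{\mathcal{G}}_a^{(\bar x)}({\bf X})$ with count $k_m$ and a lower-priority token $x_v$ that merges into the top SCC in $\mathcal{G}_{ab}^{(\bar x)}$ with $k_v>k_m$ but $k_va_v<k_mp^\star$; then $x_v$ wins under $\tilde{\bf W}_{ab}$, $x_m\notin\widehat{\mathcal{G}}_{ab}^{(\bar x)}({\bf X})$, so this is a genuine change of topic, not an ambiguous sequence. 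Third, Assumption~\ref{assumptioninteger} cannot deliver the strict inequality in the merged case: when $b_j=q^\star$ and $k_j=k_i$ the tie $k_jb_j=k_ib_i$ has equal integer multipliers and equal softmax entries, so the assumption permits it; strictness has to come from the ``not ambiguous'' clause, which your sketch invokes only informally.

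The clean repair is exactly the paper's route: prove the contrapositive. Assume no token outside $\dot{\mathcal{G}}_{a}^{(\bar{x})}({\bf X})$ strictly outnumbers all tokens inside it; then the maximal-count token $x_{i^\ast}$ of $\dot{\mathcal{G}}_{a}^{(\bar{x})}({\bf X})$ satisfies $k_{i^\ast}\ge k_j$ and (by Lemma~\ref{lemma:highest} and Theorem~\ref{thm:priority}) $a_{i^\ast}\ge a_j$, $b_{i^\ast}\ge b_j$ for all $j$, hence $x_{i^\ast}\in\widehat{\mathcal{G}}_a^{(\bar x)}({\bf X})\cap\widehat{\mathcal{G}}_{ab}^{(\bar x)}({\bf X})$; any other $x_l\in\widehat{\mathcal{G}}_a^{(\bar x)}({\bf X})$ forces $k_la_l=k_{i^\ast}a_{i^\ast}$, so by Assumption~\ref{assumptioninteger} $k_l=k_{i^\ast}$ and $a_l=a_{i^\ast}$, whence $b_l=b_{i^\ast}$ and $x_l\in\widehat{\mathcal{G}}_{ab}^{(\bar x)}({\bf X})$. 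Thus $\widehat{\mathcal{G}}_a^{(\bar x)}({\bf X})\subseteq\widehat{\mathcal{G}}_{ab}^{(\bar x)}({\bf X})$, so the outcome is keep-topic or ambiguous, never a change. If you insist on the direct version, you must carry out the full case analysis over which token displaces the member of $\widehat{\mathcal{G}}_a^{(\bar x)}({\bf X})$ (promoted vs.\ genuinely lower attention, winner inside vs.\ outside $\dot{\mathcal{G}}_{a}^{(\bar{x})}({\bf X})$), and in each case the witness may differ from the predicted token; as written, that analysis is the missing piece.
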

Theorem~\ref{thm:expected} implies that, for a given sequence 
{\bf X} from $\mathbb{T}_a$ and its corresponding TPG, a necessary condition for a topic change is the presence of a lower-priority token that appears more frequently than any of the higher-priority tokens. This can be intuitively understood through our analogy: if the oracle is following a conversation on Topic A but the conversation contains repeated components with lower importance in Topic A, its knowledge of Topic B may steer the response toward Topic B, thereby initiating a shift away from Topic A. 
A natural question arises: what do these findings imply in practice? Specifically, how does the probability of change of topic behave as the input sequence length or the 
topic ambiguity increases? The following theorem sheds light on these dynamics.

\begin{theorem}\label{thm:LT}
    Under same settings and assumptions on datasets and training in Theorem~\ref{thm:priority}, let $\bf X$ be a sequence that belongs to $\mathbb{T}_a$ with no repeated tokens, and $l$ be the number of elements in $\dot{\mathcal{G}}_{a}^{(\bar{x})}({\bf X})$. Let ${\bf X}' = [{\bf x}_1' \ {\bf x}_2' \ \cdots \ {\bf x}_T']^\top$ be a random sequence of iid random tokens sampled from $\bf X$ such that for a fixed $p$, 
    $p = \min_{x \in \dot{\mathcal{G}}_{a}^{(\bar{x})}({\bf X})} \mathbb{P}\left({\bf x}_i' = {\bf x}\right)$. We have:
    \begin{enumerate}

        \item If $p> \max_{x \not \in \dot{\mathcal{G}}_{a}^{(\bar{x})}({\bf X})} \mathbb{P}\left({\bf x}_i' = {\bf x}\right)$, then $\lim_{T\to\infty}{\mathbb{P}(\tilde{\bf W}_{ab} \text{ changes topic } \mathbb{T}_a \text{ for } {\bf X}')} = 0$.
        \item If $l$ increases then the probability that $\exists x_j' \not \in \dot{\mathcal{G}}_{a}^{(\bar{x})}(\bf X)$ such that $\forall x_i'\in \dot{\mathcal{G}}_{a}^{(\bar{x})}(\bf X)$, ${\bf x}_j'$ outnumbers ${\bf x}_i'$ in $\bf X'$ does not increase.
    \end{enumerate}
    
\end{theorem}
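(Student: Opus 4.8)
The plan is to prove the two parts separately, both building directly on the necessary condition from Theorem~\ref{thm:expected}: a change of topic for a sequence $\mathbf{X}'$ forces the existence of some token $\mathbf{x}_j' \notin \dot{\mathcal{G}}_a^{(\bar{x})}(\mathbf{X})$ whose count in $\mathbf{X}'$ strictly exceeds the count of \emph{every} token $\mathbf{x}_i' \in \dot{\mathcal{G}}_a^{(\bar{x})}(\mathbf{X})$. So it suffices to bound the probability of that combinatorial event on the random sequence $\mathbf{X}'$. For part~1, fix any high-priority token $\mathbf{x}^\star$ achieving the minimum probability $p$, and any low-priority token $\mathbf{x}$; since the tokens of $\mathbf{X}'$ are i.i.d., the counts $N_{\mathbf{x}^\star}$ and $N_{\mathbf{x}}$ are Binomial with success probabilities $p$ and $q := \mathbb{P}(\mathbf{x}_i' = \mathbf{x}) \le \max_{x \notin \dot{\mathcal{G}}} \mathbb{P}(\mathbf{x}_i' = x) < p$. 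By the law of large numbers (or a Chernoff/Hoeffding bound on $N_{\mathbf{x}} - N_{\mathbf{x}^\star}$, whose mean is $T(q-p) < 0$), $\mathbb{P}(N_{\mathbf{x}} > N_{\mathbf{x}^\star}) \to 0$ as $T \to \infty$. A union bound over the finitely many low-priority tokens $\mathbf{x}$ shows $\mathbb{P}(\exists\, \mathbf{x}_j' \notin \dot{\mathcal{G}}: N_{\mathbf{x}_j'} > N_{\mathbf{x}^\star}) \to 0$, and since the change-of-topic event is contained in this event (it requires outnumbering \emph{all} high-priority tokens, in particular $\mathbf{x}^\star$), the conclusion follows.

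For part~2, I would argue monotonicity by a coupling/inclusion argument at the level of the event itself rather than computing probabilities. Write $S = \dot{\mathcal{G}}_a^{(\bar{x})}(\mathbf{X})$ with $|S| = l$, and let $E_S$ denote the event ``$\exists\, x_j' \notin S$ with $N_{x_j'} > N_{x_i'}$ for all $x_i' \in S$.'' The key observation is that if $S \subseteq S'$ (so $l' = |S'| \ge l$), then $E_{S'} \subseteq E_S$: requiring a low-priority token to outnumber every element of the \emph{larger} set $S'$ is a strictly stronger demand than outnumbering every element of $S$, and moreover the pool of candidate tokens $x_j' \notin S'$ is a subset of those $\notin S$. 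Hence $\mathbb{P}(E_{S'}) \le \mathbb{P}(E_S)$, i.e., enlarging the highest-priority SCC cannot increase the probability. The mild subtlety is that the statement phrases the comparison as ``$l$ increases'' without fixing the ambient distribution; I would make precise that we compare two scenarios with the same token alphabet and sampling probabilities but with a nested highest-priority SCC (which is the relevant notion of "more ambiguous input topic," since a larger SCC in $\mathcal{G}_a^{(\bar{x})}$ means more tokens share the top priority), so the inclusion $E_{S'} \subseteq E_S$ is the whole content.

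I expect the main obstacle to be purely expository rather than technical: stating part~2 so that ``$l$ increases'' is unambiguous. One must clarify that the comparison holds the randomness (alphabet and probabilities $\mathbb{P}(\mathbf{x}_i' = \mathbf{x})$) fixed while the combinatorial set $\dot{\mathcal{G}}_a^{(\bar{x})}(\mathbf{X})$ grows by inclusion — otherwise ``increasing $l$'' is not well-defined, since a different TPG generally induces a different $\mathbf{X}$ and hence different sampling weights. Once that framing is pinned down, both parts are short: part~1 is a Binomial concentration plus a finite union bound, and part~2 is a one-line event inclusion. A secondary minor point is handling ties (the event uses strict inequality $>$), but this only helps — ties keep $\mathbf{X}'$ out of the change-of-topic event — and is consistent with Assumption~\ref{assumptioninteger}.
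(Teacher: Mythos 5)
Your Part 1 matches the paper's argument: both reduce the change-of-topic event to the counting event via Theorem~\ref{thm:expected} and then apply the law of large numbers (your Chernoff-plus-union-bound version is just a more explicit rendering of the same step), so there is nothing to flag there.

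Part 2 is where you diverge, and your reading sidesteps the actual difficulty. You hold the token alphabet \emph{and the sampling probabilities} fixed and only enlarge the set $S=\dot{\mathcal{G}}_{a}^{(\bar{x})}({\bf X})$, so that the result follows from the event inclusion $E_{S'}\subseteq E_S$. That inclusion is correct, but it only makes sense when both scenarios live on a common probability space with identical weights, which silently restricts the comparison to the case where every token newly absorbed into the top SCC already had probability at least $p$ (otherwise the hypothesis $p=\min_{x\in \dot{\mathcal{G}}_{a}^{(\bar{x})}({\bf X})}\mathbb{P}({\bf x}_i'={\bf x})$ with the same fixed $p$ is violated). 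The paper's comparison is different and is the one that captures ``more ambiguity'': when the SCC grows from size $l$ to $l+1$, the promoted token, which previously had probability $q<p$, is sampled with probability $p$ in the new scenario, and the probabilities of the tokens remaining outside the SCC decrease so that the weights still sum to one. The two sequences ${\bf X}_l'$ and ${\bf X}_{l+1}'$ are therefore drawn from \emph{different} distributions, and a pure event inclusion cannot compare $\mathbb{P}_{l+1}(E_{S'})$ with $\mathbb{P}_l(E_S)$. The paper handles this with an explicit coupling: each trial uses a common uniform variable $U_i$, arranged so that the counts of the original $l$ top-priority tokens agree in both sequences, the count of the promoted token can only go up, and the counts of all remaining low-priority tokens can only go down; hence $A_{l+1}\geq A_l$, $B_{l+1}\leq B_l$, and $\mathbb{P}(B_{l+1}>A_{l+1})\leq \mathbb{P}(B_l>A_l)$. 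That coupling (or some substitute argument comparing the two different sampling measures) is the missing ingredient in your proposal; under your fixed-distribution framing Part 2 becomes essentially vacuous and does not establish the monotonicity claim in the regime the theorem is meant to address.
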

There are two implications of this theorem. First, as the input sequence length increases sufficiently, the likelihood of topic changes vanishes. Second, increasing $l$ raises the probability of overlap between topics and reduces the probability of satisfying the necessary conditions for a topic change, effectively creating a bound on the frequency of topic changes. In practice, consider the oracle analogy: if the oracle is following a sufficiently long conversation on a specific topic, it becomes exceedingly unlikely to shift topics. Similarly, as topics A and B become more interconnected, this increased ambiguity does not lead to more topic changes; rather, it may reduce their occurrence. This contrasts with human cognition, where longer conversations and greater inter-connectivity of knowledge increase the likelihood of spontaneous topic changes.




To illustrate Theorem~\ref{thm:LT} through simulations, 
we generate embeddings with $K = 10$ and $d = 16$.
We approximate $\tilde{\mathbf{W}}_a$ and $\tilde{\mathbf{W}}_{ab}$ as the results obtained after $\tau = 8000$ iterations of~\ref{eq:Algo-GD}. We quantify the proportion of test sequences in which $\tilde{{\bf W}}_{ab}$ keeps $\mathbb{T}_a$ (\emph{keep topic}), proportion of ambiguous sequences in $\mathbb{T}_{ab}$ (\emph{ambiguity}) and proportion in which $\tilde{{\bf W}}_{ab}$ changes topic (\emph{change topic}). First, we explore the effect of longer sequences by varying the length $T$ of the test sequences $\bf Z$.   We increase $T$ from 4 to 512. Figure~\ref{fig:varyT} illustrates how the proportion of \emph{change topic} decreases as $T$ increases. Second, we investigate the effect of topic overlap with an increasing number of edges $L$. Intuitively, a higher $L$ results in an increase $l$ and a greater overlap between TPGs of different topics. We vary $L$ from 4 to 18. Figure~\ref{fig:varyL} demonstrates that as $L$ increases, ambiguity increases, while the proportion of \emph{change topic} doesn't increase. These two findings contrast with expectations derived from human cognition but align with the result of Theorem~\ref{thm:LT}. Lastly, among the 85,000 test sequences generated for these experiments, 99.98\% satisfy Theorem~\ref{thm:expected} (i.e., topic changes occur when a low-priority token appears more frequently than high-priority tokens). The remaining 0.02\% mismatched cases are solely due to minor approximation discrepancies in the attention softmax. These results validate Theorem~\ref{thm:expected} (see simulation details in Appendix~\ref{app:sims}).

\begin{figure}[t]
    \centering
    
    \begin{minipage}[t]{0.5\textwidth}
      \centering
      \includegraphics[width=0.85\linewidth]{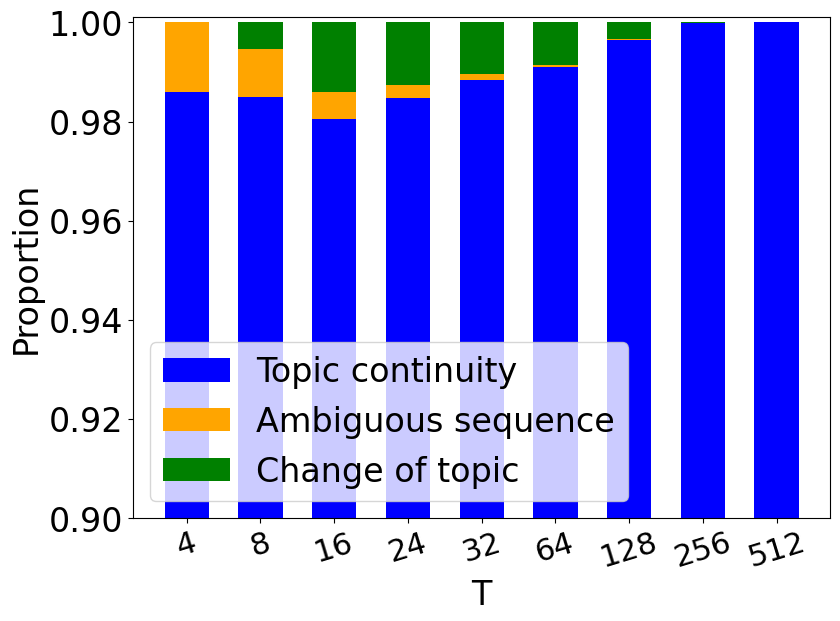}
      \subcaption{Length of sequences.}
      \label{fig:varyT}
    \end{minipage}%
    \begin{minipage}[t]{0.5\textwidth}
      \centering
      \includegraphics[width=0.85\linewidth]{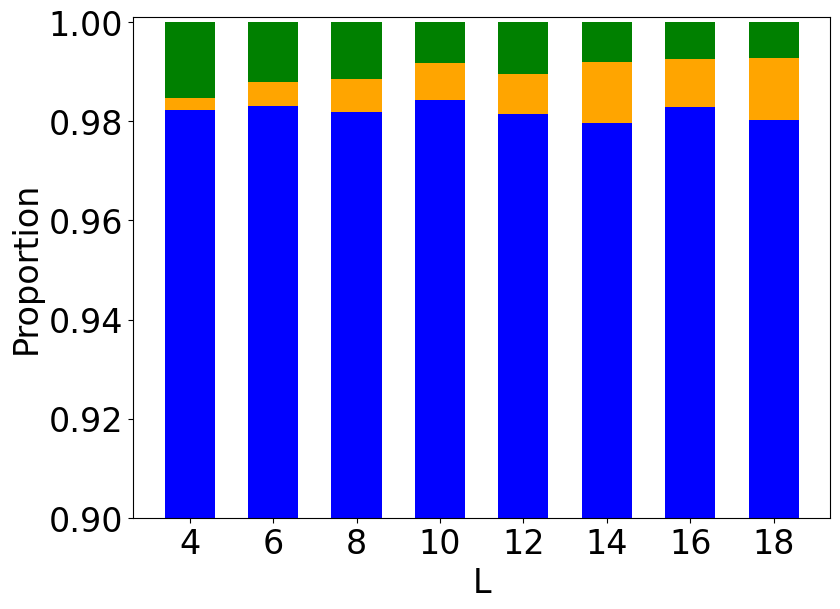}
      \subcaption{Topic ambiguity.}
      \label{fig:varyL}
    \end{minipage}
    
    \caption{The proportion of topic continuity, ambiguous sequence, and change of topic as (a) input length and (b) topic ambiguity increase.}
    \label{fig:LT}
\end{figure}

\section {Experiments in frontier LLMs}\label{sec:llm}

To prove Theorem \ref{thm:LT} we work within the simplified, single-layer self-attention model of \citet{li2024mechanics}. Although this abstraction omits many hallmarks of contemporary LLMs (deep stacks of attention blocks, alternative cost functions, and other training heuristics), it offers a mathematically tractable setting that lets us derive interesting mathematical results. These results, in turn, can be used to understand how cutting-edge LLMs behave in terms of spontaneous topic changes. We empirically investigate such behavior on four frontier models: GPT-4o, Llama-3.3, Claude-3.7, and DeepSeek-V3.  

\textbf{Real dataset.} We randomly select 100 arXiv papers published in March 2025 since the publicly disclosed knowledge cutoff dates for our study LLMs fall at the end of 2024 or earlier. This ensures that these models have not been trained on these data. We consider each paper as a different ``topic''.

\textbf{Experimental setup.} For two distinct 
papers A and B, and an input prompt ($\bf X)$ from paper A, we consider a measure of \emph{topic continuity} as the cosine similarity between the embeddings of the texts generated when the LLM has contextual knowledge solely from paper A ($\hat{y}_{\bf{W}_{a}}$) and when the LLM has contextual knowledge from both paper A and B ($\hat{y}_{\bf{W}_{ab}}$).  
We treat this cosine similarity as an empirical proxy for our formal definition of \emph{topic continuity} (Definition \ref{def:topic_continuity}): therefore the larger the similarity, the smaller the chance that the model has led to a \emph{change of topic}. This proxy suggests two testable consequences which become the empirical counterpart of our Theorem \ref{thm:LT}: (1) \emph{cosine similarity is expected to increase with the length of the input prompt}, and (2) \emph{it is not expected to decrease with increasing ambiguity in paper A and paper B}.

To more closely align with our theoretical framework, where a model  gains knowledge of topic A and incrementally gains knowledge of topic B, we implement a Retrieval-Augmented Generation (RAG) approach, retrieving information exclusively from paper A or jointly from papers A and B \citep{von2023transformers}.
Based on the input prompt, we retrieve the top 3 most relevant excerpts from paper A or paper B to form the contextual knowledge set A or set B. The combined contextual knowledge set is simply the union of sets A and B. We add set A to the input prompt to obtain the generated text with sole knowledge of paper A ($\hat{y}_{\bf{W}_{a}}$), and we add the combined set to the input prompt to obtain the generated text with combined knowledge of paper A and B ($\hat{y}_{\bf{W}_{ab}}$). To closely follow our greedy decoding approach in our theoretical framework, we set the temperature parameter to 0 for all LLMs.

We designate each paper as paper A and randomly select 5 different papers from the remaining 99 papers as distinct paper B. For each input segment, we calculate the average cosine similarity between $\hat{y}_{\bf{W}a}$ and $\hat{y}_{\bf{W}{ab}}$ across these five pairs of paper A and paper B, using each LLM. The results for each LLM are averaged over all 100 papers. See additional experimental details in Appendix~\ref{app:simsection6}.

\textbf{Experiment 1: Impact of input length.}
We use the first $10, 30,\ldots, 150$ words from each paper A's abstract as the input prompt. 
Figure \ref{fig:T_t2a_t2ab} shows, for each LLM, the average cosine similarity as a function of input length; shaded bands indicate 95\% confidence intervals. Across all models, similarity tends to increase with input length, aligning with the behavior predicted by Theorem \ref{thm:LT}. Appendix~\ref{sec:extended_T} presents an additional experiment in which we extend the input length to 1210 words extracted from each paper’s introduction; the results further support our conclusions.


\begin{figure}[t]
    \begin{minipage}[t]{0.5\textwidth}
      \centering
      \includegraphics[width=0.9\linewidth]{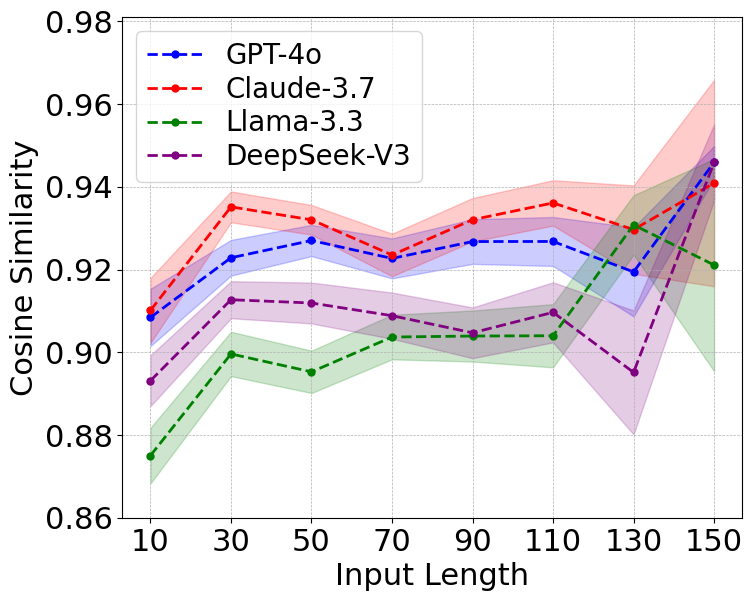}
      \subcaption{Length of sequences.}
      \label{fig:T_t2a_t2ab}
    \end{minipage}%
    \begin{minipage}[t]{0.5\textwidth}
      \centering
      \includegraphics[width=0.9\linewidth]{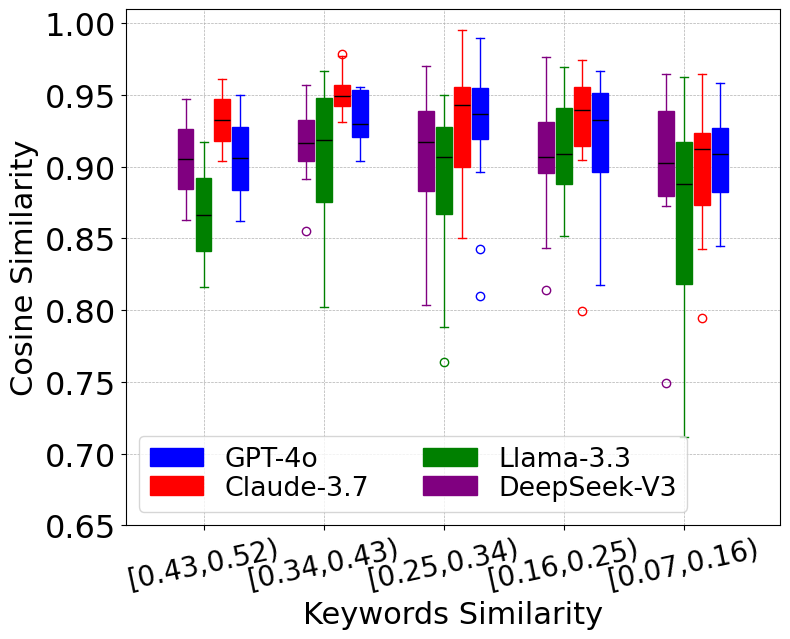}
      \subcaption{Topic ambiguity.}
      \label{fig:L_t2a_t2ab}
    \end{minipage}
    \caption{Similarity between continuations generated with single-topic and mixed-topic knowledge as (a) input length and (b) topic ambiguity increase.}
    \label{fig:t2a_t2ab}
\end{figure}

\textbf{Experiment 2: Impact of topic ambiguity.}
We fix the input prompt length to the first $80$ words of each paper A’s abstract.
We quantify topic ambiguity by the average similarity among each paper A’s keywords: lower keyword similarity signifies higher probability of overlap between paper A and other papers, consistent with our setup in Theorem~\ref{thm:LT}.
We partition the papers into five equal-width bins along this ambiguity spectrum.
Figure \ref{fig:L_t2a_t2ab} summarizes the results: each boxplot shows the distribution of cosine similarities within an ambiguity bin, with the x-axis ordered from least to most ambiguous. Across all LLMs the median similarity does not seem to decrease, in agreement with the prediction of Theorem \ref{thm:LT}. In Appendix~\ref{sec:alternative_L}, we present an additional experiment using an alternative ambiguity measure based on cross-paper keyword similarity, yielding results consistent with our conclusions.

Taken together, the two experiments provide preliminary empirical support for Theorem~\ref{thm:LT}, showing that its prediction, derived from a single-layer self-attention toy model, can be extended to today’s deep, multi-layer LLMs. Crucially, an important divergence between machine and human cognition persists in these frontier models: neither longer prompts nor greater topic ambiguity appreciably increases the likelihood of a spontaneous topic change.

\section{Related work}\label{sec: related work}
\textbf{Training and generalization of Transformer.} \textbf{(1) Properties of Softmax}. The self-attention mechanism employs the softmax function to selectively emphasize different parts of the input.  \citet{gu2024exploring}, \citet{Goodfellow-et-al-2016},  and \citet{deng2023superiority} underscore the pivotal role of the softmax function in shaping attention distributions, influencing how models process and prioritize information within input sequences. \citet{bombari2024towards} examined the word sensitivity of attention layers, revealing that softmax-based attention layers are adept at capturing the significance of individual words. However, recent work has also pointed out limitations of the softmax function \citep{saratchandran2024rethinking,deng2023superiority}.
\textbf{(2) Optimization in attention-based models. }Additionally, recent research interprets Transformer models as kernel machines, akin to support vector machines (SVMs), with self-attention layers performing maximum margin separation in the token space \citep{tarzanagh2023transformers,ataee2023max,li2024mechanics,julistiono2024optimizing}. \textbf{(3) Chain-of-Thought (CoT) and In-Context Learning (ICL). }Moreover, transformers exhibit remarkable abilities in generalization through ICL, where models effectively learn from contextual cues during inference \citep{brown2020language,xie2021explanation,olsson2022context}. CoT prompting \citep{wei2022chain,zhou2022least,pmlr-v202-shao23a,li2024training} enhances this by breaking down reasoning processes into intermediate steps, highlighting the emergent reasoning abilities of transformers. \textbf{(4) Improvement efficiency of transformers.} Recent advancements aim to improve the computational efficiency of transformers  \citep{Kitaev2020Reformer,choromanski2021rethinking,sukhbaatar-etal-2019-adaptive,wang2020linformer}, ensuring their viability for large-scale deployment while maintaining or enhancing their representational capabilities.

\textbf{Next token prediction in LLMs.} \textbf{(1) Theoretical and architectural innovations.} \citet{shannon1950prediction}'s foundational work laid the groundwork for estimating the predictability of natural language sequences, providing a basis for subsequent advances in language modeling. Recent studies have expanded our understanding of how LLMs anticipate future tokens from internal hidden states, offering valuable insights into the efficiency and effectiveness of Transformer-based architectures \citep{he2024law,pal2023future,shlegeris2024languagemodelsbetterhumans}. Despite their impressive predictive capabilities, these models face fundamental limitations. For instance, \citet{pmlr-v235-bachmann24a} highlights the shortcomings of teacher-forced training, emphasizing how this approach can fail and suggesting strategies to improve model robustness. \textbf{(2) Efficiency and Optimization.} \citet{goyal2024think} introduces a novel method that incorporates a deliberate computation step before output generation, enhancing reasoning capabilities. Additionally, \citet{gloeckle2024better} advocates for multi-token prediction, which significantly improves both efficiency and speed.

\textbf{Self-Attention and topic dynamics.} Advancements in self-attention research have deepened our understanding of how transformers handle evolving semantic contexts. Prior work has explored diverse aspects of topic modeling, such as dynamic topic structures \citep{miyamoto-etal-2023-dynamic}, hierarchical relationships \citep{lin-etal-2024-hierarchical}, topic-aware attention mechanisms \citep{panwar-etal-2021-tan}, and the mechanistic underpinnings of topic representation \citep{li2023transformers}. While these studies provide insights into managing static and hierarchical topic structures, our work focuses on the topic changes with the given input sequences from a specific topic.

\section{Discussion}\label{sec: discussion}
Our theoretical analysis on self-attention models and empirical investigations on modern LLMs reveal fundamental clues regarding the distinctions between model-based spontaneous topic changes and spontaneous human thought, a phenomenon that is critical for comparing conversational dynamics across humans and AI. In an era of growing concern about AI’s cognitive resemblance to humans, our framework provides preliminary results differentiating these phenomena, thereby opening pathways for future interdisciplinary research at the interface of artificial and human cognition.


\paragraph{Limitations.} Our theoretical framework builds on the same simplified single-layer self-attention model with a log-loss objective from \citet{li2024mechanics} and defines topics as TPGs. These abstractions do not fully capture the complexities of contemporary LLMs, including deep attention architectures, alternative loss functions, and diverse training objectives. 
Despite loosening these assumptions, our experiments suggest that the essence of our core theoretical conclusions holds across modern LLMs within our framework of study. Future work will investigate how broadly these theoretical insights generalize to complex architectures, for example within longer context windows and/or LLM outputs.


\paragraph{Code.} The source code can be found on GitHub: \href{https://github.com/muminjia/Dynamics-of-Spontaneous-Topic-Changes-in-Next-Token-Prediction-with-Self-Attention}{https://github.com/muminjia/Dynamics-of-Spontaneous-Topic-Changes}

\begin{ack}
This work was supported by the Natural Sciences and Engineering Research Council of Canada (NSERC) under grant DGECR-2022-04531. The authors thank Prof. Yingcong Li, the anonymous reviewers, and the session chair for their valuable feedback and insightful suggestions, which greatly improved the quality of this work.
\end{ack}

\bibliographystyle{plainnat}
\bibliography{main}

\newpage
\appendix
\appendixtoc 

\section{Technical proofs}\label{app:proofs}

\subsection{Proof of Lemma~\ref{lemma:sccs}}
Let ${\bf a} = \mathbb{S}({\bf X}\tilde{\bf W} \bar{{\bf x}})$.
    \begin{align}
    {\bf C} f_{\tilde{\bf W}}({\bf X}) &= {\bf C} \left({\bf X}^\top \mathbb{S}\left({\bf X}\tilde{\bf W} \bar{{\bf x}}\right)\right)\\
    &={\bf C} \left({\bf X}^\top {\bf a} \right) \\
    &=\begin{bmatrix}
\sum_{i=1}^T a_i \, ({\bf c}_1^\top \cdot {\bf x}_i) \\
\sum_{i=1}^T a_i \, ({\bf c}_2^\top \cdot {\bf x}_i)\\
\vdots\\
\sum_{i=1}^T a_i \, ({\bf c}_K^\top \cdot {\bf x}_i)
\end{bmatrix}.\\
\end{align}
Let $k_i$ be the number of times token ${\bf x}_i$ appears in ${\bf X}$. Then,
$$\left[{\bf C} f_{\tilde{\bf W}}({\bf X})\right]_{x_i} = k_ia_{i}.$$
From Assumption~\ref{assumptioninteger} we have that 
\begin{align}
\left[{\bf C} f_{\tilde{\bf W}}({\bf X})\right]_{x_i} = \left[{\bf C} f_{\tilde{\bf W}}({\bf X})\right]_{x_j} &\iff a_i=a_j\\
&\iff (x_i\asymp x_j) \in \mathcal{G}^{(\bar{x})} \text{ or } x_i=x_j.
\end{align}
If $x_i\neq x_j$ then $x_i$ and $x_j$ are in the same SCC.\qed

\subsection{Proof of Lemma~\ref{lemma:priority}}
\begin{lemma}\label{lemma:priority}
    For an input sequence ${\bf X}$ that belongs to $\mathbb{T}$ and $i,j \in[T]$, 
    \begin{itemize}
        \item $[\mathbb{S}({\bf X}\tilde{\bf W} \bar{{\bf x}})]_i = [\mathbb{S}({\bf X}\tilde{\bf W} \bar{{\bf x}})]_j \iff (x_i\asymp x_j) \in \mathcal{G}^{(\bar{x})}$ or $i=j$.
        \item $[\mathbb{S}({\bf X}\tilde{\bf W} \bar{{\bf x}})]_i < [\mathbb{S}({\bf X}\tilde{\bf W} \bar{{\bf x}})]_j\iff(x_j\Rightarrow x_i) \in \mathcal{G}^{(\bar{x})}$.
        \item $[\mathbb{S}({\bf X}\tilde{\bf W} \bar{{\bf x}})]_i > [\mathbb{S}({\bf X}\tilde{\bf W} \bar{{\bf x}})]_j\iff(x_i\Rightarrow x_j) \in \mathcal{G}^{(\bar{x})}$.
    \end{itemize}
\end{lemma}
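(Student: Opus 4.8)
The plan is to reduce the claim to the characterization already proven in Lemma~\ref{lemma:sccs} together with the \ref{eq:Graph-SVM} constraints that $\tilde{\bf W}$ satisfies by Theorem~\ref{thm:li}. Set ${\bf a} = \mathbb{S}({\bf X}\tilde{\bf W}\bar{\bf x})$ as in the previous proof. The softmax entries $a_i$ are strictly increasing functions of the logits $({\bf X}\tilde{\bf W}\bar{\bf x})_i = {\bf x}_i^\top \tilde{\bf W}\bar{\bf x} = {\bf e}_{x_i}^\top \tilde{\bf W}{\bf e}_{\bar{x}}$, so comparisons among the $a_i$ are exactly comparisons among these bilinear forms. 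Hence $a_i = a_j$, $a_i < a_j$, $a_i > a_j$ correspond respectively to $({\bf e}_{x_i}-{\bf e}_{x_j})^\top \tilde{\bf W}{\bf e}_{\bar{x}}$ being $=0$, $<0$, $>0$.

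First I would handle the equality case. One direction is immediate from the \ref{eq:Graph-SVM} equality constraints: if $(x_i \asymp x_j) \in \mathcal{G}^{(\bar{x})}$ then $({\bf e}_{x_i}-{\bf e}_{x_j})^\top {\bf W}^{\text{svm}}{\bf e}_{\bar{x}} = 0$, hence the same holds for $\tilde{\bf W}$, giving $a_i = a_j$; and trivially $i=j$ gives $a_i=a_j$. For the converse I would invoke Lemma~\ref{lemma:sccs}: since $\left[{\bf C} f_{\tilde{\bf W}}({\bf X})\right]_{x_i} = k_i a_i$ where $k_i$ is the multiplicity of ${\bf x}_i$ in ${\bf X}$, if $x_i \ne x_j$ then $a_i = a_j$ forces (via Assumption~\ref{assumptioninteger}, taking $k_i a_i = k_j a_j$ with the rational relation) $k_i = k_j$ and then the probability entries are equal, so Lemma~\ref{lemma:sccs} yields $(x_i \asymp x_j)\in\mathcal{G}^{(\bar{x})}$. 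Alternatively, and more cleanly, one argues directly: for distinct tokens in the same TPG exactly one of $(x_i\Rightarrow x_j)$, $(x_j\Rightarrow x_i)$, $(x_i\asymp x_j)$ holds (stated in the excerpt), and the first two are ruled out by the strict inequality direction below, leaving $(x_i\asymp x_j)$.

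Next I would do the strict inequalities, which are essentially symmetric, so I treat $a_i < a_j$, i.e. $({\bf e}_{x_i}-{\bf e}_{x_j})^\top \tilde{\bf W}{\bf e}_{\bar{x}} < 0$, equivalently $({\bf e}_{x_j}-{\bf e}_{x_i})^\top \tilde{\bf W}{\bf e}_{\bar{x}} > 0$. For the ``$\Leftarrow$'' direction: if $(x_j \Rightarrow x_i)\in\mathcal{G}^{(\bar{x})}$ then the \ref{eq:Graph-SVM} inequality constraint gives $({\bf e}_{x_j}-{\bf e}_{x_i})^\top {\bf W}^{\text{svm}}{\bf e}_{\bar{x}} \ge 1 > 0$, hence (dividing by $\|{\bf W}^{\text{svm}}\|_F>0$) the same strict sign for $\tilde{\bf W}$, so $a_i < a_j$. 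For the ``$\Rightarrow$'' direction: $x_i$ and $x_j$ both lie in $\mathcal{G}^{(\bar{x})}$ since ${\bf X}$ belongs to $\mathbb{T}$, so one of the three trichotomy cases holds; the equality case is excluded because it would force $a_i=a_j$ by the equality constraint, and $(x_i\Rightarrow x_j)$ is excluded because by the ``$\Leftarrow$'' direction just shown it would give $a_i > a_j$; hence $(x_j\Rightarrow x_i)\in\mathcal{G}^{(\bar{x})}$. The third bullet follows by swapping $i$ and $j$.

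The main obstacle I anticipate is bookkeeping rather than depth: one must be careful that the three \ref{eq:Graph-SVM} cases are genuinely exhaustive and mutually exclusive for tokens appearing in ${\bf X}$ — this rests on ${\bf X}$ belonging to $\mathbb{T}$ (so every $x_i \in \mathcal{G}^{(\bar{x})}$) and on Assumption~\ref{assumption3} guaranteeing the TPGs reconstructed from DSET coincide with those defining $\mathbb{T}$, so that Theorem~\ref{thm:li}'s convergence to ${\bf W}^{\text{svm}}$ is the SVM for precisely these graphs. A second subtlety is the strict-versus-nonstrict gap: the SVM only guarantees $\ge 1$ (not $>0$ with a controlled margin) for $\Rightarrow$ edges, but since $1>0$ this is harmless, and the equality constraints are exact, so dividing by the positive norm $\|{\bf W}^{\text{svm}}\|_F$ preserves all three sign relations. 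I would also note explicitly that $a_i$ as a function of the logit vector is coordinatewise strictly monotone in the sense needed (larger logit, strictly larger softmax entry, holding the others fixed — and here the other logits are the same for both comparisons since we difference), which is what licenses translating logit comparisons into softmax comparisons.
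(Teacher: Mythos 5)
Your proposal is correct and follows essentially the same route as the paper's own proof: since ${\bf X}$ belongs to $\mathbb{T}$ every pair of its tokens satisfies exactly one of $(x_i\Rightarrow x_j)$, $(x_j\Rightarrow x_i)$, $(x_i\asymp x_j)$ or $x_i=x_j$, and the \ref{eq:Graph-SVM} constraints satisfied by $\tilde{\bf W}$ (via Theorem~\ref{thm:li}) together with the strict monotonicity of the softmax translate these graph relations into the three attention comparisons, with the converses obtained by trichotomy and exclusion. If anything, your version is more careful than the paper's writeup about dividing by $\|{\bf W}^{\text{svm}}\|_F$ and about the strict-versus-$\geq 1$ threshold, and your detour through Lemma~\ref{lemma:sccs} for the equality converse is unnecessary, as you yourself note.
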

\begin{proof}
Since $\bf X$ belongs to $\mathbb{T}$, $\forall {\bf x} \in {\bf X}$ we have $x\in \mathcal{G}^{(\bar x)}$, therefore from the construction of TPGs by \citet{li2024mechanics}, 
    for every ${\bf x}_i,{\bf x}_j\in {\bf X}$ we have one of the these relationships: 
    $(x_i\Rightarrow x_j)$, $(x_j\Rightarrow x_i)$, $(x_i\asymp x_j)$ or $x_i=x_j$. From the constraints in \ref{eq:Algo-GD}:

\begin{itemize}
        \item $[\mathbb{S}({\bf X}\tilde{\bf W} \bar{{\bf x}})]_i = [\mathbb{S}({\bf X}\tilde{\bf W} \bar{{\bf x}})]_j \iff ({\bf x}_i - {\bf x}_j)^\top \tilde{\bf W}  {\bar x} = 0 \iff (x_j\asymp x_i) \in \mathcal{G}^{(\bar{x})} \text{ or }i=j$.
        \item $[\mathbb{S}({\bf X}\tilde{\bf W} \bar{{\bf x}})]_i > [\mathbb{S}({\bf X}\tilde{\bf W} \bar{{\bf x}})]_j \iff ({\bf x}_i - {\bf x}_j)^\top \tilde{\bf W}  {\bar x} > 1 \iff (x_j\Rightarrow x_i) \in \mathcal{G}^{(\bar{x})}$.
        \item $[\mathbb{S}({\bf X}\tilde{\bf W} \bar{{\bf x}})]_i < [\mathbb{S}({\bf X}\tilde{\bf W} \bar{{\bf x}})]_j \iff ({\bf x}_i - {\bf x}_j)^\top \tilde{\bf W}  {\bar x} < 1 \iff (x_i\Rightarrow x_j) \in \mathcal{G}^{(\bar{x})}$.
    \end{itemize}
\end{proof}

\subsection{Proof of Lemma~\ref{lemma:highest}}
\begin{lemma}\label{lemma:highest}
    For an input sequence ${\bf X}$ that belongs to $\mathbb{T}$,
    \[\dot{\mathcal{G}}^{(\bar{x})}({\bf X}) = \left\{x_i \ | \ [\mathbb{S}({\bf X}\tilde{\bf W} \bar{{\bf x}})]_i= \|\mathbb{S}({\bf X}\tilde{\bf W} \bar{{\bf x}})\|_\infty\right\}.\]
\end{lemma}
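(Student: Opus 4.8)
\textbf{Proof plan for Lemma~\ref{lemma:highest}.}

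The plan is to show the two sets coincide by combining the characterization of the softmax entries from Lemma~\ref{lemma:priority} with the definition of the highest priority SCC (Definition~\ref{def: highest_priority}). First I would fix an input sequence ${\bf X}$ that belongs to $\mathbb{T}$ and write ${\bf a} = \mathbb{S}({\bf X}\tilde{\bf W}\bar{\bf x})$, and let $m = \|{\bf a}\|_\infty = \max_{i\in[T]} a_i$. Call $S = \{x_i \mid a_i = m\}$ the set on the right-hand side. The goal is $\dot{\mathcal{G}}^{(\bar x)}({\bf X}) = S$.

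For the inclusion $S \subseteq \dot{\mathcal{G}}^{(\bar x)}({\bf X})$: take $x_i$ with $a_i = m$ and any $x_j \in \mathcal{G}^{(\bar x)}$ (necessarily $x_j$ appears in ${\bf X}$ since ${\bf X}$ belongs to $\mathbb{T}$, or one reduces to a token present in ${\bf X}$ by Assumption~\ref{assumption3}). Since $a_i$ is maximal, $a_i \geq a_j$, so by Lemma~\ref{lemma:priority} either $a_i = a_j$, giving $(x_i \asymp x_j)\in\mathcal{G}^{(\bar x)}$ or $x_i = x_j$, or $a_i > a_j$, giving $(x_i \Rightarrow x_j)\in\mathcal{G}^{(\bar x)}$. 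In all cases the defining property of $\dot{\mathcal{G}}^{(\bar x)}({\bf X})$ holds, so $x_i \in \dot{\mathcal{G}}^{(\bar x)}({\bf X})$. Conversely, for $\dot{\mathcal{G}}^{(\bar x)}({\bf X}) \subseteq S$: take $x_i \in \dot{\mathcal{G}}^{(\bar x)}({\bf X})$ and let $x_j$ be a token achieving $a_j = m$ (which exists, and lies in $\mathcal{G}^{(\bar x)}$). By Definition~\ref{def: highest_priority}, $(x_i \Rightarrow x_j)\in\mathcal{G}^{(\bar x)}$ or $(x_i \asymp x_j)\in\mathcal{G}^{(\bar x)}$; by Lemma~\ref{lemma:priority} the first would force $a_i > a_j = m$, contradicting maximality of $m$, so the second holds (or $x_i = x_j$), whence $a_i = a_j = m$ and $x_i \in S$.

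The one point requiring care — and the main obstacle — is the quantifier mismatch between "for all $x_j \in \mathcal{G}^{(\bar x)}$" in Definition~\ref{def: highest_priority} and the fact that Lemma~\ref{lemma:priority} only compares softmax entries for tokens actually present in ${\bf X}$. I would resolve this by invoking that ${\bf X}$ belongs to $\mathbb{T}$ together with Assumption~\ref{assumption3}: every node of $\mathcal{G}^{(\bar x)}$ that matters for the SCC comparison is reachable from tokens in ${\bf X}$, so the priority relations $(x_i \Rightarrow x_j)$ and $(x_i \asymp x_j)$ among tokens of ${\bf X}$ fully determine membership in the top SCC, and adding absent nodes of $\mathcal{G}^{(\bar x)}$ (which by construction sit at lower or equal priority) does not change which tokens of ${\bf X}$ are in $\dot{\mathcal{G}}^{(\bar x)}({\bf X})$. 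Once this is pinned down, the remaining argument is just a transcription of Lemma~\ref{lemma:priority} and is routine. \qed
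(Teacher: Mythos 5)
Your core argument is the same as the paper's: both of your inclusions are exactly Lemma~\ref{lemma:priority} applied to the maximal-softmax set, showing its elements are pairwise $\asymp$ and dominate ($\Rightarrow$) every other token of ${\bf X}$; the paper states this once rather than as two separate inclusions, but the content is identical.

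The place where your write-up goes astray is the patch for the quantifier issue you correctly noticed. Your claim that every node of $\mathcal{G}^{(\bar{x})}$ absent from ${\bf X}$ ``by construction sits at lower or equal priority'' than the tokens of ${\bf X}$ is not justified by anything in the paper and is false in general: ${\bf X}$ belonging to $\mathbb{T}$ only guarantees that each token of ${\bf X}$ is a node of $\mathcal{G}^{(\bar{x})}$, and nothing prevents the graph from containing a node of strictly higher priority that simply does not occur in ${\bf X}$ (Assumption~\ref{assumption3} concerns DSET reconstructing the TPGs and provides no such reduction). If Definition~\ref{def: highest_priority} were read literally with $x_j$ ranging over all nodes of $\mathcal{G}^{(\bar{x})}$, then $\dot{\mathcal{G}}^{(\bar{x})}({\bf X})$ would not depend on ${\bf X}$ at all, and the lemma itself would fail whenever the graph's top SCC contains no token of ${\bf X}$. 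The paper resolves this not by a reachability argument but by reading the definition, consistently with its use in Theorems~\ref{thm:expected} and~\ref{thm:LT}, as comparing priorities only among tokens appearing in ${\bf X}$: its proof concludes precisely that the maximal-softmax set ``has the highest priority among tokens in ${\bf X}$.'' Under that reading your two inclusions go through verbatim and the extra step you flagged as the main obstacle is unnecessary.
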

\begin{proof}
    Let $G = \{x_i \ | \ [\mathbb{S}({\bf X}\tilde{\bf W} \bar{{\bf x}})]_i= \|\mathbb{S}({\bf X}\tilde{\bf W} \bar{{\bf x}})\|_\infty\}$. From Lemma~\ref{lemma:priority} $\forall x_i,x_j\in G$, $(x_i\asymp x_j)\in \mathcal{G}^{(\bar{x})}$. Therefore all elements in $G$ belong to the same SCC. Also from Lemma~\ref{lemma:priority}, $\forall x_i\in G, x_j\not \in G$ we have $(x_i\Rightarrow x_j) \in \mathcal{G}^{(\bar{x})}$. This means that every element in $G$ has the highest priority among tokens in $\bf X$ concluding our proof.
\end{proof}

\subsection{Proof of Theorem~\ref{thm:priority}}
    From construction, $\forall k\in[K]$, $\mathcal{G}_a^{(k)} \subseteq \mathcal{G}_{ab}^{(k)}$. This means that $\forall {\bf x}_i,{\bf x}_j\in{\bf X}$, we have:
    \begin{itemize}
    \item if $(x_i\asymp x_j)\in \mathcal{G}_a^{(\bar{x})}$ then $(x_i\asymp x_j)\in \mathcal{G}_{ab}^{(\bar{x})}$
    \item if $(x_j\Rightarrow x_i)\in \mathcal{G}_a^{(\bar{x})}$ then $(x_j\Rightarrow x_i)\in \mathcal{G}_{ab}^{(\bar{x})}$ or $(x_i\asymp x_j)\in \mathcal{G}_{ab}^{(\bar{x})}$
    \item if $(x_i\Rightarrow x_j)\in \mathcal{G}_a^{(\bar{x})}$ then $(x_i\Rightarrow x_j)\in \mathcal{G}_{ab}^{(\bar{x})}$ or $(x_i\asymp x_j)\in \mathcal{G}_{ab}^{(\bar{x})}$
    \end{itemize}
Combining with Lemma~\ref{lemma:priority}:
\begin{itemize}
        \item $[\mathbb{S}({\bf X}\tilde{\bf W}_a \bar{{\bf x}})]_i = [\mathbb{S}({\bf X}\tilde{\bf W}_a \bar{{\bf x}})]_j \iff (x_i\asymp x_j) \in \mathcal{G}_a^{(\bar{x})}$ or $i=j$, then $(x_i\asymp x_j) \in \mathcal{G}_{ab}^{(\bar{x})}$ or $i=j \iff [\mathbb{S}({\bf X}\tilde{\bf W}_{ab} \bar{{\bf x}})]_i = [\mathbb{S}({\bf X}\tilde{\bf W}_{ab} \bar{{\bf x}})]_j$ 
        \item $[\mathbb{S}({\bf X}\tilde{\bf W}_a \bar{{\bf x}})]_i < [\mathbb{S}({\bf X}\tilde{\bf W}_a \bar{{\bf x}})]_j\iff(x_j\Rightarrow x_i) \in \mathcal{G}_a^{(\bar{x})}$ then $(x_j\Rightarrow x_i) \in \mathcal{G}_{ab}^{(\bar{x})}$ or $(x_i\asymp x_j) \in \mathcal{G}_{ab}^{(\bar{x})} \iff [\mathbb{S}({\bf X}\tilde{\bf W}_{ab} \bar{{\bf x}})]_i \leq [\mathbb{S}({\bf X}\tilde{\bf W}_{ab} \bar{{\bf x}})]_j$
        \item $[\mathbb{S}({\bf X}\tilde{\bf W}_a \bar{{\bf x}})]_i > [\mathbb{S}({\bf X}\tilde{\bf W}_a \bar{{\bf x}})]_j\iff(x_i\Rightarrow x_j) \in \mathcal{G}_a^{(\bar{x})}$ then $(x_i\Rightarrow x_j) \in \mathcal{G}_{ab}^{(\bar{x})}$ or $(x_i\asymp x_j) \in \mathcal{G}_{ab}^{(\bar{x})} \iff [\mathbb{S}({\bf X}\tilde{\bf W}_{ab} \bar{{\bf x}})]_i \geq [\mathbb{S}({\bf X}\tilde{\bf W}_{ab} \bar{{\bf x}})]_j$\qed
    \end{itemize}

\subsection{Proof of Theorem~\ref{thm:expected}}
Let ${\bf a} = \mathbb{S}({\bf X}\tilde{\bf W}_a \bar{{\bf x}})$
and ${\bf b} =\mathbb{S}({\bf X}\tilde{\bf W}_{ab} \bar{{\bf x}})$.
Without loss of generality, suppose ${\bf a}$ is in decreasing order $a_1\geq \cdots \geq a_T$. 
From Theorem~\ref{thm:priority}, we also have $b_1\geq \cdots \geq b_T$.
Let $k_i$ be the number of times token ${\bf x}_i$ appears in ${\bf X}$. Following an analogous procedure as in Lemma~\ref{lemma:sccs} we get
\begin{align}
    \left[{\bf C} f_{\tilde{\bf W}_a(\tau)}({\bf X})\right]_{x_i} = k_ia_{i} 
    \label{eq: ki_ai}\\
    \left[{\bf C} f_{\tilde{\bf W}_{ab}(\tau)}({\bf X})\right]_{x_i} = k_ib_{i}
    \label{eq:ki_bi}
\end{align}
We will proof the contrapositive: If $\exists x_i\in \dot{\mathcal{G}}_a^{(\bar{x})}({\bf X})$ such that $k_i\geq k_j$ for all $j\in[K]$, then there is no change of topic, so $\tilde {\bf W}_{ab}$ \emph{keeps} topic $\mathbb{T}_a$ for input sequence $\bf X$, or $\bf X$ is \emph{ambiguous} in $\mathbb{T}_{ab}$ with respect to $\mathbb{T}_{a}$.

From Lemma~\ref{lemma:highest}, if $x_i\in \dot{\mathcal{G}}_a^{(\bar{x})}({\bf X})$, we have $a_i\geq a_j$ for all $j\in[K]$. Suppose $\exists x_i\in \dot{\mathcal{G}}_a^{(\bar{x})}({\bf X})$ such that $k_i\geq k_j$ for all $j\in[K]$, we have that $k_ia_i\geq k_ja_j$ for all $j\in[K]$ then $x_i\in \widehat{\mathcal{G}}_a^{(\bar{x})}({\bf X})$. Analogously since $b_i\geq b_j$, $x_i \in \widehat{\mathcal{G}}_{ab}^{(\bar{x})}({\bf X})$. If $\exists x_l\in \widehat{\mathcal{G}}_a^{(\bar{x})}({\bf X})$ with $x_l\neq x_i$ then $k_la_l\geq k_ja_j$ for all $j\in [K]$, then $k_la_l=k_ia_i$. Therefore from Assumption~\ref{assumptioninteger} and Lemma~\ref{lemma:highest}, $(x_l\asymp x_i) \in \mathcal{G}_{a}^{(\bar{x})}$. Analogously $(x_l\asymp x_i) \in \mathcal{G}_{ab}^{(\bar{x})}$. This means that if $\exists x_i\in \dot{\mathcal{G}}_a^{(\bar{x})}({\bf X})$ such that $k_i\geq k_j$ for all $j\in[K]$, then $\widehat{\mathcal{G}}_a^{(\bar{x})}({\bf X})\subseteq \widehat{\mathcal{G}}_{ab}^{(\bar{x})}({\bf X})$. Then $\tilde {\bf W}_{ab}$ \emph{keeps} topic $\mathbb{T}_a$ for input sequence $\bf X$, or $\bf X$ is \emph{ambiguous} in $\mathbb{T}_{ab}$ with respect to $\mathbb{T}_{a}$.\qed

\subsection{Proof of Theorem~\ref{thm:LT}}
\begin{enumerate}
    \item This is a direct consequence from the law of large numbers. If $T\to\infty$ the proportion of each token will match the probability. Since $p> \max_{{\bf x} \not \in \dot{\mathcal{G}}_{a}^{(\bar{x})}({\bf X})} \mathbb{P}\left({\bf x}_i' = {\bf x}\right)$, then the probability that $\exists x_j' \not \in \dot{\mathcal{G}}_{a}^{(\bar{x})}(\bf X)$ such that $\forall x_i'\in \dot{\mathcal{G}}_{a}^{(\bar{x})}(\bf X)$, the number of times ${\bf x}_j'$ appears in $\bf X'$ is greater than the number of times ${\bf x}_i'$ appears in $\bf X'$ will go to zero, and therefore the probability of change topics will do it also. 
    \item Without loss of generality suppose $\dot{\mathcal{G}}_{a}^{(\bar{x})}({\bf X}) = \{x_1,x_2,\cdots, x_l\}$. Clearly if we prove the result assuming $\forall x \in \dot{\mathcal{G}}_{a}^{(\bar{x})}({\bf X}), \ p = \mathbb{P}\left({\bf x}_i' = {\bf x}\right)$, we will also have it for the more general case $p = \min_{{\bf x} \in \dot{\mathcal{G}}_{a}^{(\bar{x})}({\bf X})} \mathbb{P}\left({\bf x}_i' = {\bf x}\right)$. 
    
    Let ${\bf X}_l' = [{\bf x}_{1,l}' \ {\bf x}_{2,l}' \ \cdots \ {\bf x}_{T,l}']^\top$  be a random sequences generated as described in the theorem, where the size of $\dot{\mathcal{G}}_{a}^{(\bar{x})}({\bf X})$ is $l$. Let $k_{i,l}$ be the number of times ${\bf x}_i$ is selected in ${\bf X}_l'$ . Let $A_l=\max_{1\leq i \leq l} k_{i,l}$ and $B_l=\max_{l+1\leq i \leq K} k_{i,l}$. Let $P(l) = \mathbb{P}(B_l>A_l)$. We want to prove $P(l+1)\leq P(l)$. 
    We construct a coupling between ${\bf X}_l'$ and ${\bf X}_{l+1}'$ by performing $T$ independent trials. For each trial $i$ we generate a uniform random variable $U_i$ in $[0,1]$ and we choose tokens in  ${\bf X}_l'$ and $ {\bf X}_{l+1}'$ in this way:
    \begin{itemize}
    \item If $U_i\leq pl$ both the selected tokens $x_{i,l}'$ and $x_{i,l+1}'$ are in $\{x_1,x_2,\cdots, x_l\}$. 
    \item If $pl< U_i\leq p(l+1)$, 
    we select $x_{i,l}'=x_{l+1}$ if $U_i\leq pl +q$ or $x_{i,l}'=x_{l+2}$ otherwise,
    and we select $x_{i,l+1}'=x_{l+1}$; where $q$ is the probability of choosing $x_{l+1}$ in ${\bf X}_{l}'$. Since $p>q$, there is an interval where $x_{i,l}=x_{l+2}$ but $x_{i,l+1}=x_{l+1}$. 
    \item If $U_i>p(l+1)$, then both the selected tokens $x_{i,l}'$ and $x_{i,l+1}'$ are in $\{x_{l+2},x_2,\cdots, x_l\}$. Notice that the probability of choosing $x_i$ in ${\bf X}_{l+1}'$ for $i\geq l+2$ decreases because $p$ is constant.
    \end{itemize}
    From the previous coupling we have that $k_{i,l}=k_{i,l+1}$ for $1\leq i\leq l$, $k_{l+1,l}\leq k_{l+1,l+1}$ for $i=l+1$, and $k_{i,l}\geq k_{i,l+1}$ for $i\geq l+2$. This means that $A_{l+1} = \max (A_l, k_{l+1,l+1})\geq A_l$ and $B_{l+1} = \max_{l+2\leq i\leq K} k_{i,l+1}\leq B_l$. Therefore $P(l+1) = \mathbb{P}(B_{l+1}>A_{l+1})\leq \mathbb{P}(B_{l}>A_{l}) = P(l)$.
    \end{enumerate}
\qed

\section{Explanation of Assumption~\ref{assumption3}}\label{appdendix:assumption 3}
As illustrated in Figure~\ref{fig:dataset_tpg}, the dataset for $\mathbb{T}_a$ and the dataset for $\mathbb{T}_b$ demonstrate interchangeability with $\mathcal{G}_a^{(4)}$ and $\mathcal{G}_b^{(4)}$, respectively.
\begin{figure*}[ht]
\vskip 0.2in
\begin{center}
\centerline{\includegraphics[width=0.48\linewidth]{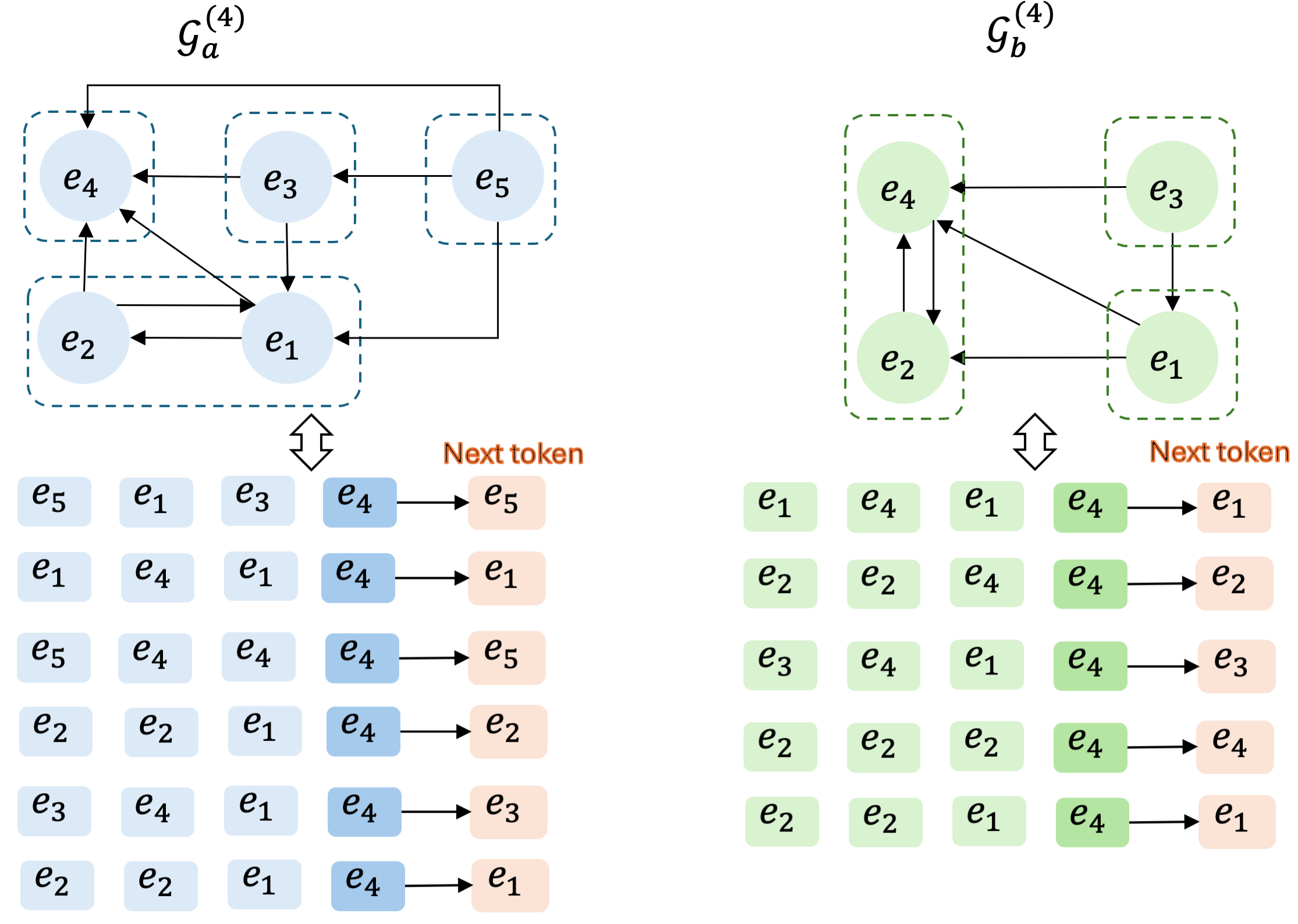}}
\caption{Illustration of Assumption~\ref{assumption3}. Here are two datasets related to the TPGs, $\mathcal{G}_a^{(4)}$ and $\mathcal{G}_b^{(4)}$, from Figure~\ref{fig:each_scenario} (left). From the directed arrows in $\mathcal{G}_a^{(4)}$, we can generate a dataset with the last token $\bf{e}_4$ for $\mathbb{T}_a$, which can reconstruct back the $\mathcal{G}_a^{(4)}$. A similar process applies for the $\mathcal{G}_b^{(4)}$.}
\label{fig:dataset_tpg}
\end{center}
\vskip -0.2in
\end{figure*}


\section{Detailed simulation studies with single-layer self-attention}\label{app:sims}
\subsection{Simulation process}\label{subapp:sims_details}
\textbf{Theoretical TPGs generation.} For each token $e_k$, $L$ edges  are randomly selected to construct the theoretical TPG $\mathcal{G}^{(k)}_{theor}$ for $e_k$ , ensuring that $e_k$ is involved, as either a source or destination node. Based on these selected edges, we add additional edges from $e_k$ to all other tokens included in $L$ edges, thereby ensuring that all tokens in $\mathcal{G}^k_{theor}$ can be reached by $e_k$. Thus, we obtain the theoretical TPGs $\{\mathcal{G}^{(k)}_{a,theor}\}^K_{k=1}$ for Topic A . This process is repeated to generate another group of theoretical TPGs $\{\mathcal{G}^{(k)}_{b,theor}\}^K_{k=1}$ for the Topic B. Let $\mathcal{G}^{(k)}_{a,theor}$ and $\mathcal{G}^{(k)}_{b,theor}$ combine for each $k$, we obtain the theoretical TPGs for topics combinations $\{\mathcal{G}^{(k)}_{ab,theor}\}^K_{k=1}$.

\textbf{Training Dataset Generation.} Generate training datasets DSET$_a$ and DSET$_b$ based on $\{\mathcal{G}^{(k)}_{a,theor}\}^K_{k=1}$ and $\{\mathcal{G}^{(k)}_{b,theor}\}^K_{k=1}$, respectively. For each input sequence in DSET, the sequence length $T_{train}$ is 4, which means \( {\bf X} = [{\bf x}_1 \ {\bf x}_2 \ \cdots \ {\bf x}_{T_{train}}]^\top \in \mathbb{R}^{T_{train} \times d} \) with ${\bf x}_i$ from ${\bf E} = [{\bf e}_1, {\bf e}_2, ...{\bf e}_K]^\top$. ${\bf e}_k$ is randomly selected as the last token and other tokens (other input tokens and the next predicted token) are chosen based on $\mathcal{G}^{(k)}_{theor}$. Specifically, the next token ${\bf e}_{T_{train}+1}$ is determined by sampling with the weighted probability in $\mathcal{G}^{(k)}_{theor}$, where the weight for each token corresponds to the number of outcoming edges. Given Assumption~\ref{assumption2}, we randomly choose the position of the next token in the input sequence. Then, the remaining input tokens are randomly selected from tokens connected by incoming edges from ${\bf e}_k$ (i.e., ${\bf e}_k \rightarrow {\bf e}_i$) and placed in the random position within the input sequence. This process is repeated $n$ times to generate training data for each topic respectively. Empirical TPGs $\{\mathcal{G}^{(k)}_{a,empir}\}^K_{k=1}$ and $\{\mathcal{G}^{(k)}_{b,empir}\}^K_{k=1}$ are derived from the training datasets DSET$_a$ and DSET$_b$. According to Assumption~\ref{assumption3}, the empirical TPGs $\{\mathcal{G}^{(k)}_{empir}\}^K_{k=1}$ are expected to be identical to the theoretical TPGs $\{\mathcal{G}^{(k)}_{theor}\}^K_{k=1}$ for each topic. The experiments are conducted with $5000$ instances, with each parameter setting evaluated over 50 epochs, consisting of 100 sequences per epoch.

\textbf{Trained attention weights.} We employ a single-layer attention mechanism implemented in PyTorch. The model is trained using the SGD optimizer with a learning rate $\eta = 0.01$ for $8000$ iterations. The training of attention weights is divided into two stages for each instance: (1) computing ${\bf W}^{\text{svm}}$ for each topic;\footnote{\emph{Note:} ${\bf W}^{\text{svm}} = {\bf 0}$ means the number of SCCs is $1$ for $\mathcal{G}^k, \forall k\in [K]$. During the simulation, we proceed to the next instance when ${\bf W}^{\text{svm}} = {\bf 0}$ until reaching a total of $100$ instances.} (2) get $\bf{W}(\tau)$ at each iteration for each topic. In Stage (1), prior to using the CVXPY package to get ${\bf W}^{\text{svm}}$, SCCs are identified for each TPG derived from the using \textit{Tarjan's algorithm}. Afterward, ${\bf W}^{\text{svm}}$ is normalized to ensure consistency in subsequence computations. In Stage (2), the $MLayerAttn$ function encapsulates the architecture of a single-layer attention-based model. The training function is then used to optimize the attention weights by minimizing the loss defined in~\ref{eq:ERM}. Finally, the correlation between ${\bf W}^{\text{svm}}$ and $\bf{W}(\tau)$ is calculated using the dot product. 

\textbf{Next token prediction.} To differentiate the input sequence length of the testing data from that of the training data, we introduce $T_{test}$. TPGs based on the training dataset DSET$_a$ are utilized to generate test datasets consisting of 100 sequences from $\mathbb{T}_a$ per epoch. Specifically, the last token ${\bf x}_{T_{test}}$ of the test input sequence is randomly selected from $K$ tokens (i.e. $ {\bf x}_{T_{test}} = {\bf e}_k$) and the remaining input tokens are randomly chosen based on the SCCs of $\mathcal{G}^k_a$, where tokens with higher priority are assigned greater weights. For instance, in $\mathcal{G}^4_a$, tokens ${\bf e}_1, {\bf e}_2, {\bf e}_4$ are captured with the priority order ${\bf e}_1 = {\bf e}_2 > {\bf e}_4$. The weights assigned to input tokens ${\bf e}_1, {\bf e}_2$, and ${\bf e}_4$ are $0.4, 0.4$, and $0.2$, respectively. It reflects that ${\bf e}_1$ and ${\bf e}_2$ are in the same higher-priority SCC, thus having greater weights compared to ${\bf e}_4$. Intuitively, tokens within the same SCC are more likely to co-occur than those from different SCCs. This approach enables the generated test input sequences to mimic real word relationships and reflect their contextual groupings. Following the generation of the test dataset from $\mathbb{T}_a$, the next tokens $\hat{y}_{\bf w_a}$ and $\hat{y}_{\bf w_{ab}}$ are predicted by Equation~\ref{eq:prob}, with ${\bf W}_{a}$ and ${\bf W}_{ab}$ obtained from the last iteration. To reduce the potential numerical issues in the outputs, $\mathbb{S}({\bf X}{\bf W} \bar{{\bf x}})$ is rounded to three decimals, ensuring that tokens within the same SCC yield consistent softmax outputs.

\subsection{Additional experiments to support Theorem~\ref{thm:priority}}

To further illustrate Theorem~\ref{thm:priority} we define the \emph{attention priority similarity} of weights ${\bf W}'$ relative to ${\bf W}$ for a sequence ${\bf X}$ as: $R_{{\bf W},{\bf W}'}({\bf X}) =$ 
\begin{align*}
\frac{1}{T-1}\sum_{j=1}^{T-1}g\left(\left[\mathbb{S}\left({\bf X}{\bf W}'\bar{\bf x}\right)\right]_{i_j} - \left[\mathbb{S}\left({\bf X}{\bf W}'\bar{\bf x}\right)\right]_{i_{j+1}}\right),\end{align*}
where $i_1, \cdots, i_T$ is a permutation of $1,\cdots, T$ such that $\left[\mathbb{S}\left({\bf X}{\bf W}\bar{\bf x}\right)\right]_{i_{1}} \geq \cdots \geq \left[\mathbb{S}\left({\bf X}{\bf W}\bar{\bf x}\right)\right]_{i_{T}}$, and
$$
g(w) =\begin{cases}
			1, & \text{if $w \geq 0$,}\\
            \frac{1}{e^{-w}}, & \text{otherwise.}
		 \end{cases}
$$
The \emph{attention priority similarity} quantifies how well the weights ${\bf W}'$ preserve the attention priority of the weights ${\bf W}$. A value of 1 indicates that the priority is fully preserved. Using this metric, we conduct experiments, with results in Figure~\ref{fig:rs}. We generate embeddings with \( K = 10 \) and \( d = 16 \), and randomly construct TPGs for \( \mathbb{T}_a \) and \( \mathbb{T}_b \). Using these TPGs, we randomly generate \(\text{DSET}_a\) and \(\text{DSET}_b\). We compute $\frac{{\bf W}_a(\tau)}{\|{\bf W}_a(\tau)\|_F}$, $\frac{{\bf W}_b(\tau)}{\|{\bf W}_b(\tau)\|_F}$ and $\frac{{\bf W}_{ab}(\tau)}{\|{\bf W}_{ab}(\tau)\|_F}$ using the same procedure as \citet{li2024mechanics}. We generate test sequences ${\bf Z}$ within $\mathbb{T}_a$, and we calculate the \emph{attention priority similarity} of $\frac{{\bf W}_{ab}(\tau)}{\|{\bf W}_{ab}(\tau)\|_F}$ relative to both $\frac{{\bf W}_a(\tau)}{\|{\bf W}_a(\tau)\|_F}$ and $\frac{{\bf W}_b(\tau)}{\|{\bf W}_b(\tau)\|_F}$. We repeat this process for multiple  TPGs and input sequences (simulation details in Appendix~\ref{app:sims}). Figure~\ref{fig:rs} clearly demonstrates that the similarity converges to 1 after $\tau=8000$ iterations when evaluated relative to $\frac{{\bf W}_a(\tau)}{\|{\bf W}_a(\tau)\|_F}$ (blue line), but fails to converge relative to $\frac{{\bf W}_b(\tau)}{\|{\bf W}_b(\tau)\|_F}$ (orange line). These observations align with the results of Theorem~\ref{thm:priority}.
\begin{figure}
\begin{center}
        \includegraphics[width= 0.5\linewidth]{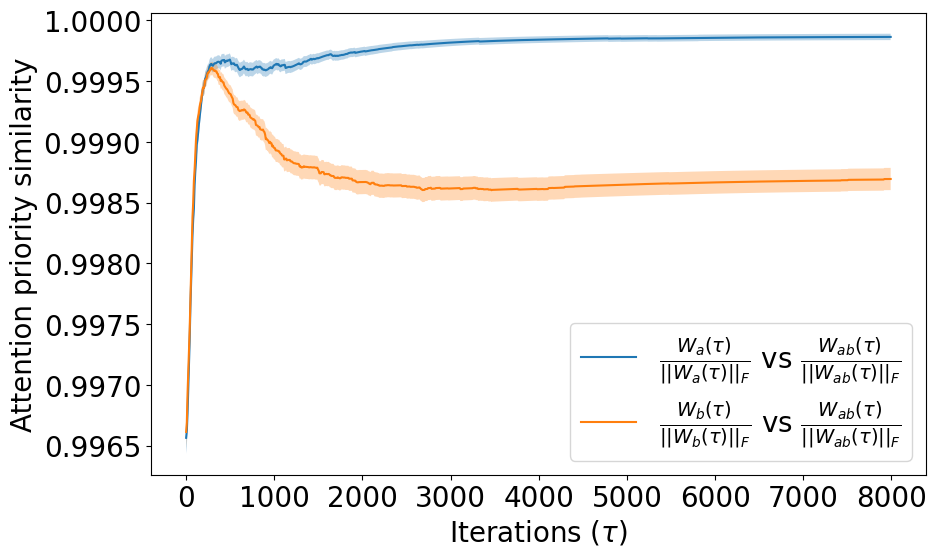}
        \caption{Convergence of attention priority similarity for $\frac{{\bf W}_{ab}(\tau)}{\|{\bf W}_{ab}(\tau)\|_F}$ relative to $\frac{{\bf W}_a(\tau)}{\|{\bf W}_a(\tau)\|_F}$ (blue) and $\frac{{\bf W}_b(\tau)}{\|{\bf W}_b(\tau)\|_F}$ (orange).}
        \label{fig:rs}
\end{center}
\vskip -0.2in
\end{figure}

\subsection{Simulation in Section~\ref{sec:LT}}

In Figure~\hyperref[fig:varyT]{5(a)}, we predict next tokens for 5000 test sequences from $\mathbb{T}_a$ with $T_{test} = \{4, 8, 16, 24, 32, 64, 128, 256, 512\}$, while fixing $L=4$, $d = 16$, $T_{train} = 4$, and $K = 10$. The proportion of each scenario with varying $T$ is illustrated in Table~\ref{table:T}. For Figure~\hyperref[fig:varyL]{5(b)}, we predict next tokens for 5000 test sequences (the sequence length is $T_{test} = 20$) using models trained with $L = \{4, 6, 8, 10, 12, 14, 16, 18\}$, $d = 16$, $K = 10$, and $T_{train} = 4$. The proportion of each scenario with varying $L$ is illustrated in Table~\ref{table:L}.

\begin{table}[t]
\caption{Proportion of \textit{keep topic}, \textit{ambiguous}, and \textit{change of topic} with varying $T_{test} = \{4, 8, 16, 24, 32, 64, 128, 256, 512\}$.}
\label{table:T}
\begin{center}
\begin{small}
\begin{sc}
\begin{tabular}{lccc}
\toprule
$T_{test}$ & Keep(\%) & Ambiguous(\%) & Change(\%) \\
\midrule
4    & 98.60 $\pm$ 1.54 & 1.40 $\pm$ 1.54 & 0.00 $\pm$ 0.00 \\
8    & 98.50 $\pm$ 1.47 & 0.96 $\pm$ 1.11 & 0.54 $\pm$ 0.76 \\
16   & 98.06 $\pm$ 1.33 & 0.54 $\pm$ 0.76 & 1.40 $\pm$ 1.07 \\
24   & 98.48 $\pm$ 1.31 & 0.26 $\pm$ 0.44 & 1.26 $\pm$ 1.10 \\
32   & 98.84 $\pm$ 1.15 & 0.12 $\pm$ 0.33 & 1.04 $\pm$ 1.03 \\
64   & 99.10 $\pm$ 1.07 & 0.04 $\pm$ 0.20 & 0.86 $\pm$ 1.05 \\
128  & 99.64 $\pm$ 0.53 & 0.02 $\pm$ 0.14 & 0.34 $\pm$ 0.52 \\
256  & 99.98 $\pm$ 0.14 & 0.00 $\pm$ 0.00 & 0.02 $\pm$ 0.14 \\
512  & 100.00 $\pm$ 0.00 & 0.00 $\pm$ 0.00 & 0.00 $\pm$ 0.00 \\
\bottomrule
\end{tabular}
\end{sc}
\end{small}
\end{center}
\vskip -0.1in
\end{table}

\begin{table}[t]
\caption{Proportion of \textit{keep topic}, \textit{ambiguous}, and \textit{change of topic} with varying $L = \{4, 6, 8, 10, 12, 14, 16, 18\}$.}
\label{table:L}
\vskip 0.15in
\begin{center}
\begin{small}
\begin{sc}
\begin{tabular}{lccc}
\toprule
$L$ & Keep(\%) & Ambiguous(\%) & Change(\%) \\
\midrule
4    & 98.22 $\pm$ 1.43 & 0.26 $\pm$ 0.60 & 1.52 $\pm$ 1.31 \\
6    & 98.30 $\pm$ 1.37 & 0.50 $\pm$ 0.68 & 1.20 $\pm$ 1.11 \\
8    & 98.18 $\pm$ 1.49 & 0.68 $\pm$ 0.68 & 1.14 $\pm$ 1.23 \\
10   & 98.42 $\pm$ 1.25 & 0.76 $\pm$ 0.85 & 0.82 $\pm$ 1.02 \\
12   & 98.14 $\pm$ 1.32 & 0.82 $\pm$ 0.92 & 1.04 $\pm$ 0.97 \\
14   & 97.96 $\pm$ 1.44 & 1.24 $\pm$ 1.06 & 0.80 $\pm$ 0.86 \\
16   & 98.28 $\pm$ 1.33 & 0.98 $\pm$ 0.91 & 0.74 $\pm$ 0.85 \\
18   & 98.02 $\pm$ 1.58 & 1.26 $\pm$ 1.14 & 0.72 $\pm$ 0.86 \\
\bottomrule
\end{tabular}
\end{sc}
\end{small}
\end{center}
\vskip -0.1in
\end{table}

\subsection{Additional experiments for convergence in mixed topics}
\begin{figure}
    \vskip 0.2in
    \begin{center}
        
    \begin{subfigure}{0.3\textwidth}
        \centering
        \includegraphics[width=\linewidth]{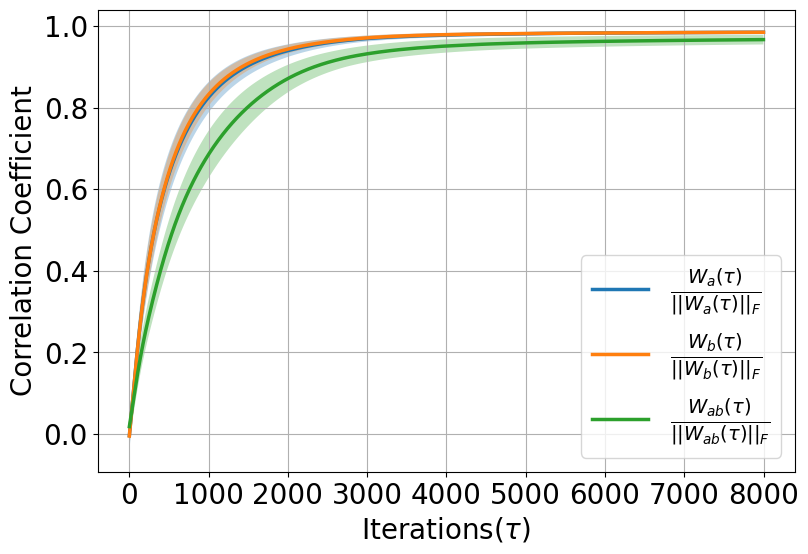}
        \caption{$K=6, L = 4$}
    \end{subfigure}
    \hfill
    \begin{subfigure}{0.3\textwidth}
        \centering
        \includegraphics[width=\linewidth]{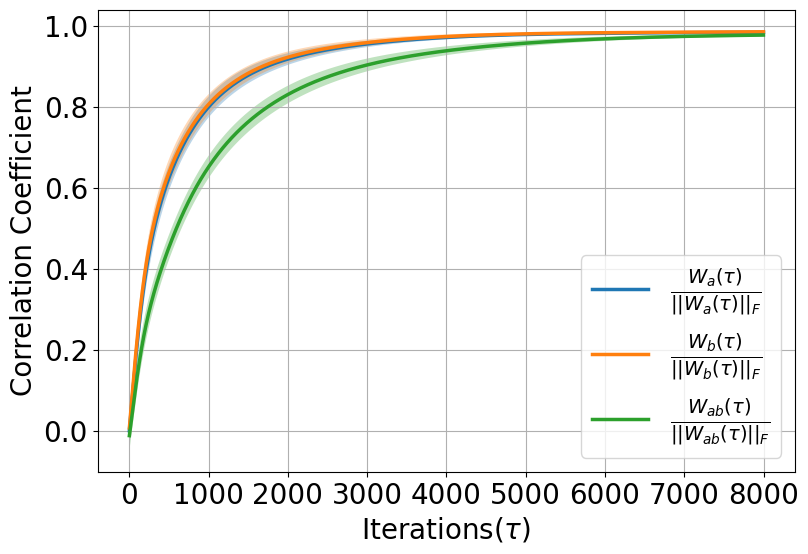}
        \caption{$K = 10, L = 4$}
    \end{subfigure}
    \hfill
    \begin{subfigure}{0.3\textwidth}
        \centering
        \includegraphics[width=\linewidth]{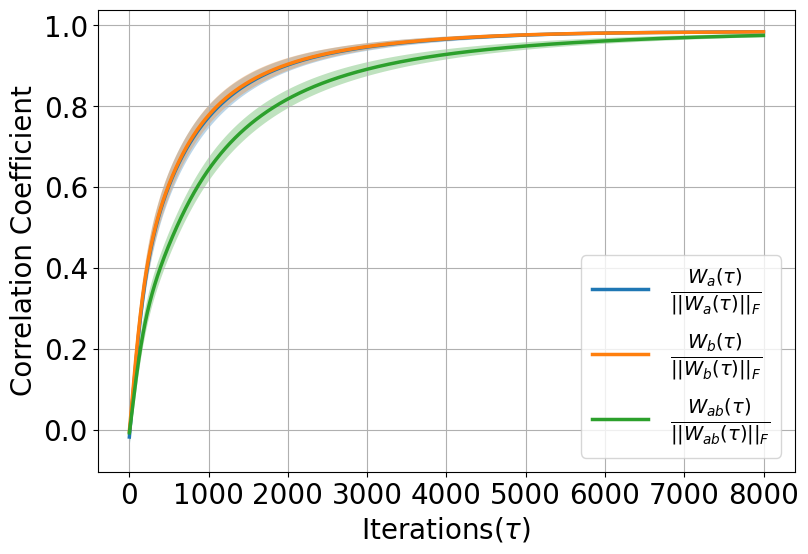}
        \caption{$K = 14, L = 4$}
    \end{subfigure}
    \vspace{0.1in} 
    
    \begin{subfigure}{0.3\textwidth}
        \centering
        \includegraphics[width=\linewidth]{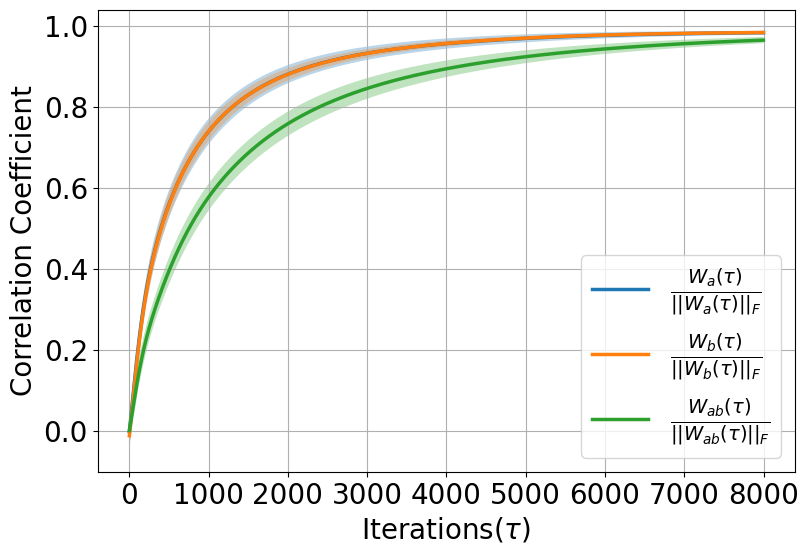}
        \caption{$K = 10, L = 8$}
    \end{subfigure}
    \hfill
    \begin{subfigure}{0.3\textwidth}
        \centering
        \includegraphics[width=\linewidth]{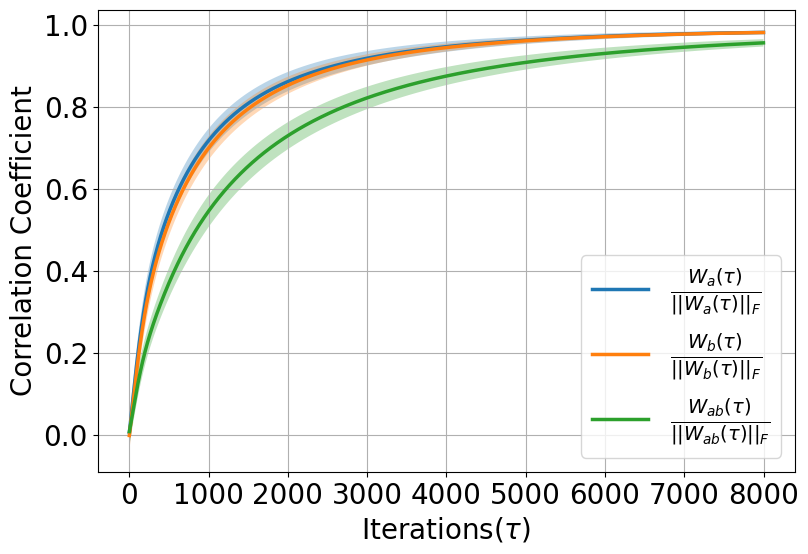}
        \caption{$K = 10, L = 12$}
    \end{subfigure}
    \hfill
    \begin{subfigure}{0.3\textwidth}
        \centering
        \includegraphics[width=\linewidth]{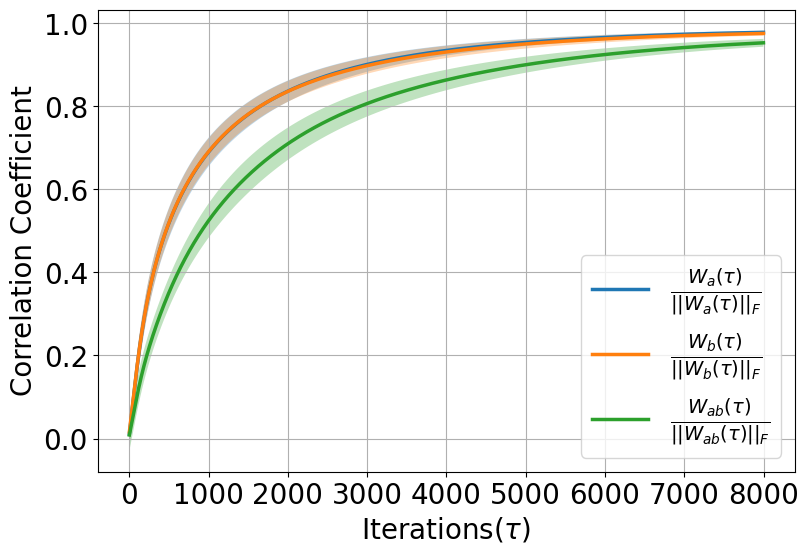}
        \caption{$K = 10, L = 16$}
    \end{subfigure}
    
    \caption{Convergence of $\frac{{\bf W}_{a}(\tau)}{\|{\bf W}_{a}(\tau)\|_F}$(blue), $\frac{{\bf W}_{b}(\tau)}{\|{\bf W}_{b}(\tau)\|_F}$(orange), and $\frac{{\bf W}_{ab}(\tau)}{\|{\bf W}_{ab}(\tau)\|_F}$(green) for varying $K$ and $L$, with fixed $T_{train} = 4$ and $d = 16$.}
    \label{fig:convergence}
    \end{center}
    \vskip -0.2in
\end{figure}

Building upon the convergence experiments in \citet{li2024mechanics}, our work demonstrates that the correlation coefficients $\langle{\bf W}_{ab}(\tau), {\bf W}^{\text{svm}}_{ab}\rangle / \langle \|{\bf W}_{ab}(\tau)\|_F, \|{\bf W}^{\text{svm}}_{ab}\|_F \rangle$ (green lines) in Figure~\ref{fig:convergence}, measured with varying $K = \{6, 10, 14\}$ and $L=\{8,12,16\}$, approach to 1. These results indicate that Theorem~\ref{thm:li} extends beyond individual topics to also capture the convergence in mixed-topic scenarios, albeit with relatively slower convergence. In these experiments, we fix $T_{train} = 4$ and $d = 16$. Each point represents the average over 5000 randomly generated instances, trained with 8000 iterations. The shaded area around each line represents the $95\%$ confidence interval, computed over 50 epochs.

\subsection{Numerical analysis for each scenario in Figure~\ref{fig:each_scenario}}\label{appendix:numeric_eg}

Figure~\ref{fig:numeric_ex} provides a numerical breakdown for each scenario in Figure~\ref{fig:each_scenario}. In Figure~\ref{fig:numeric_ex}, each distinct color corresponds to a unique token within the input sequence $\bf X$, which consists of 4 tokens. ${\bf e}_4$ is the last token across all three input sequences. For each input sequence $\bf X$, we apply ${\bf W}_a(\tau)$ and ${\bf W}_{ab}(\tau)$ with $\tau = 8000$ to predict the next token, yielding $\hat{y}_{{\bf W}_a}$ and $\hat{y}_{{\bf W}_{ab}}$, respectively. 

Let $[\mathbb{S}({\bf X}{\bf W}_{a}(\tau){\bar{\bf x}})]_i = a_i$ and $[\mathbb{S}({\bf X}{\bf W}_{ab}(\tau){\bar{\bf x}})]_i = b_i$, for $i \in [T]$. Following Equation~\ref{eq: ki_ai} and Equation~\ref{eq:ki_bi}, we compute $[Cf_{{\bf W}_a(\tau)}({\bf X})]_{x_i}$ and $[Cf_{{\bf W}_{ab}(\tau)}({\bf X})]_{x_i}$ to get the \emph{highest probability SCC} and predict the next token for each input sequence.

\begin{figure}[!t]
    \vskip 0.2in
    \begin{center}
    
    \begin{subfigure}{0.48\textwidth}
        \centering
        \includegraphics[width=\linewidth]{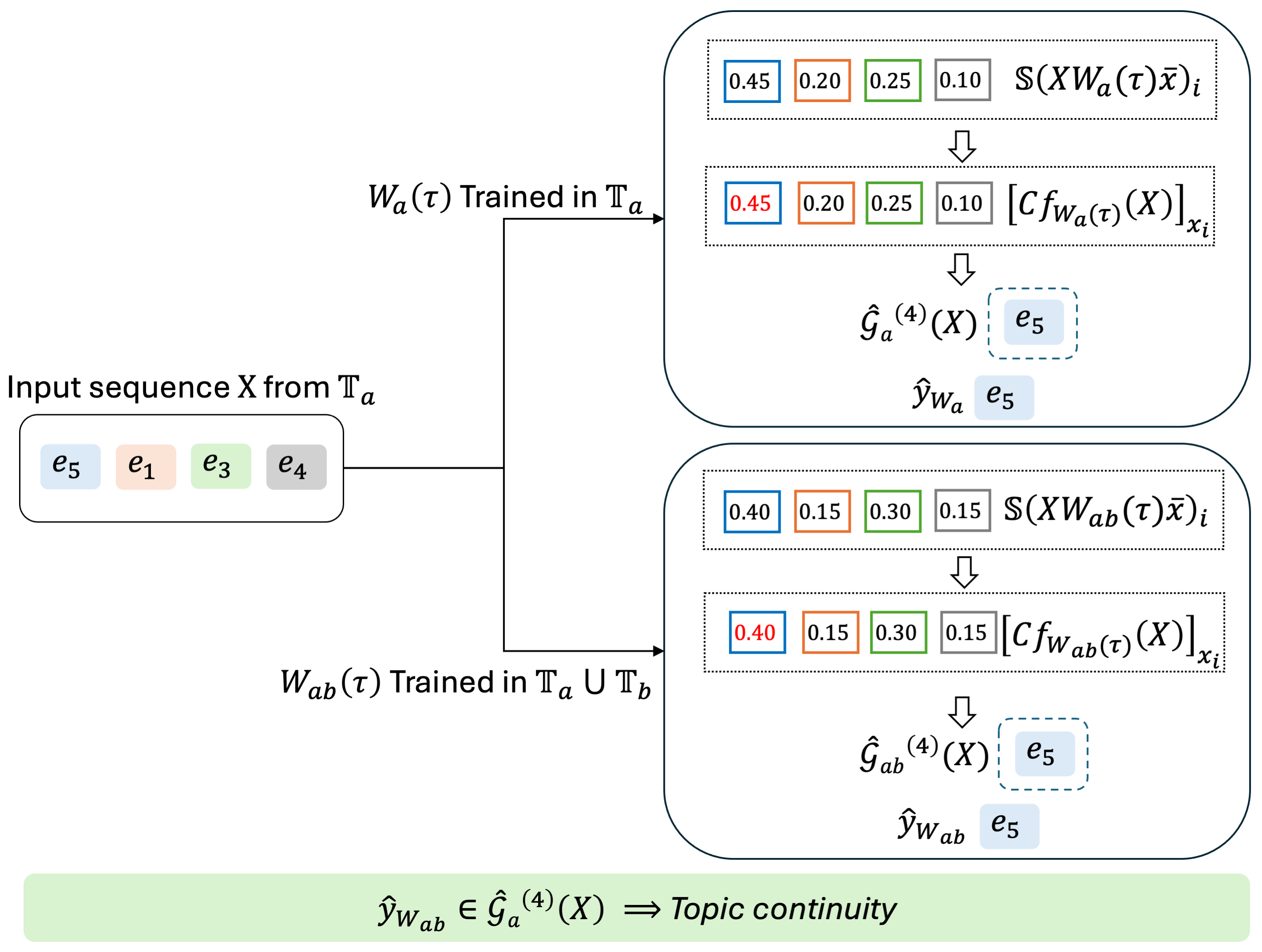}
        \caption{Topic continuity.}
        \label{fig:topic_continuity_ex}
    \end{subfigure}
    \hspace{0.02\textwidth}
    \begin{subfigure}{0.48\textwidth}
        \centering
        \includegraphics[width=\linewidth]{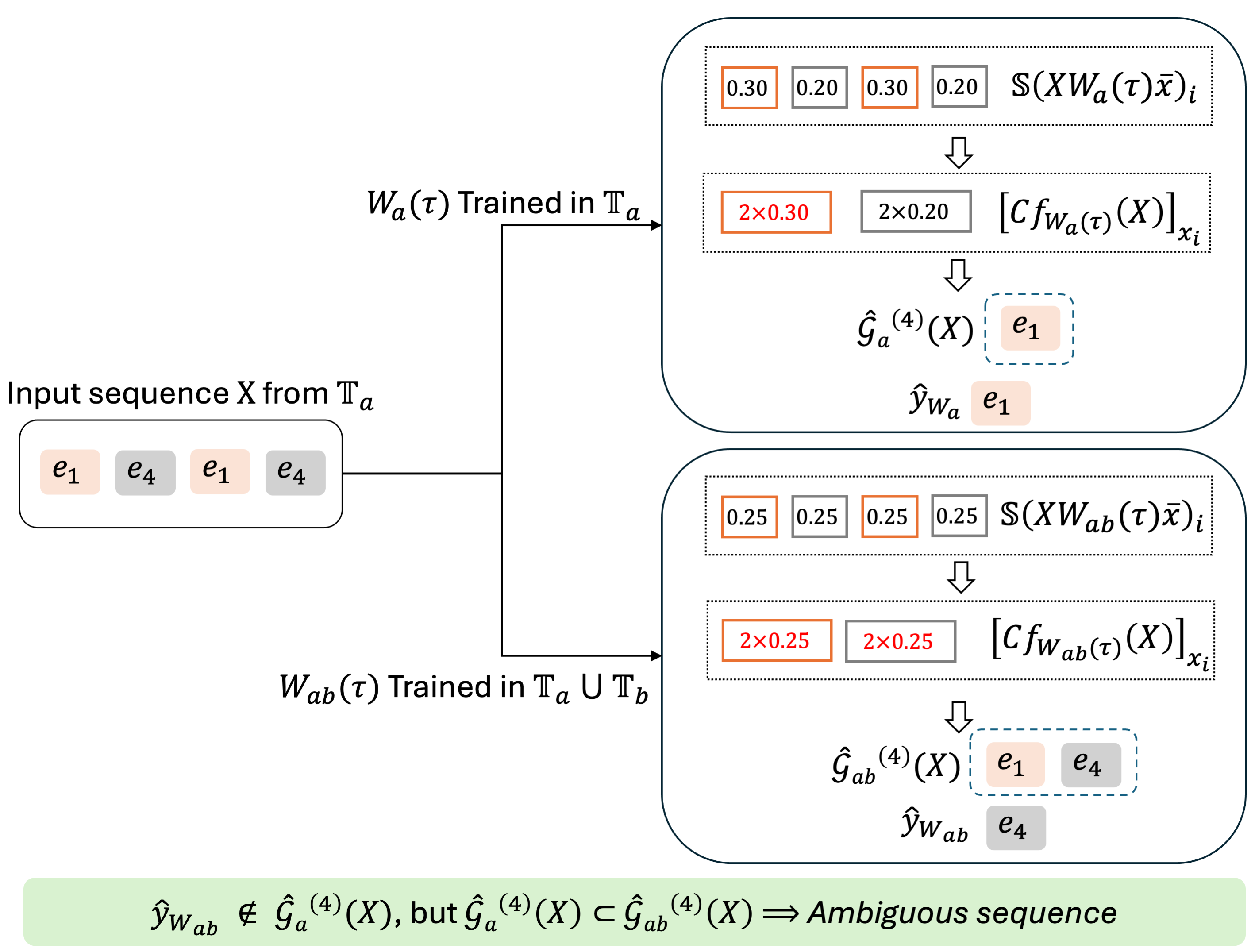}
        \caption{Ambiguous sequence.}
        \label{fig:ambiguity_ex}
    \end{subfigure}
    
    \vspace{0.5cm} 
    
    \begin{subfigure}{0.48\textwidth}
        \centering
        \includegraphics[width=\linewidth]{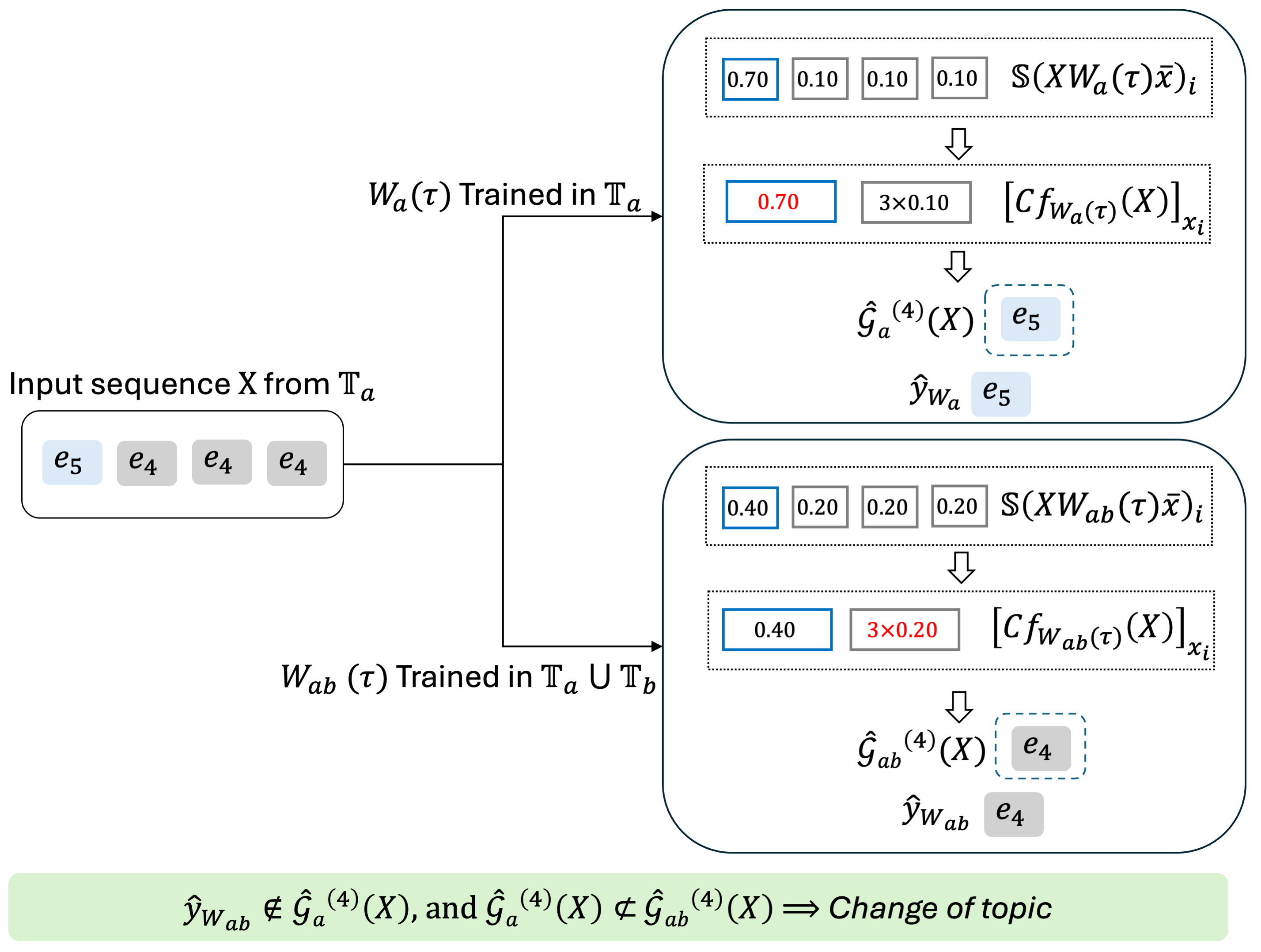}
        \caption{Change of topic.}
        \label{fig:change_topic_ex}
    \end{subfigure}
    
    \caption{Numeric details for each scenario: (a) topic continuity, (b) ambiguous sequence, and (c) change of topic.}
    \label{fig:numeric_ex}
    \end{center}
    \vskip -0.2in
\end{figure}

\textbf{Topic continuity.} In Fig.~\ref{fig:topic_continuity_ex}, input sequence $\bf X$ consists of four unique tokens: ${\bf e}_5$, ${\bf e}_1$, ${\bf e}_3$, and ${\bf e}_4$. Based on $\mathcal{G}^{(4)}_a$ in Figure~\ref{fig:each_scenario} (left), the priority order of these tokens is ${\bf e}_5 > {\bf e}_3 > {\bf e}_1 > {\bf e}_4$, with corresponding $a_i$ values: $0.45 > 0.25 > 0.20 > 0.1$. Since $[Cf_{{\bf W}_a(\tau)}({\bf X})]_{{\bf e}_5} = 1\times 0.45$ is the largest, $\widehat{\mathcal{G}}_{a}^{(4)} = \{{\bf e}_5\}$ and $\hat{y}_{{\bf W}_a} = {\bf e}_5 $. In the mixed-topic scenario, ${\bf W}_{ab}$ preserves the attention priority but ${\bf e}_4$ and  ${\bf e}_1$ have the same priority: ${\bf e}_5 > {\bf e}_3 > {\bf e}_1 = {\bf e}_4$, with corresponding $b_i$ values: $0.40 > 0.30 > 0.15 = 0.15$. Token ${\bf e}_5$ is still with the highest probability to be chosen, as $[Cf_{{\bf W}_a(\tau)}({\bf X})]_{{\bf e}_5} = 1\times 0.40$. Following the Definition~\ref{def:topic_continuity}, ${\bf W}_{ab}$ \emph{keeps} topic for the the input sequence ${\bf X} = [{\bf e}_5, {\bf e}_1, {\bf e}_3, {\bf e}_4]^\top$.

\textbf{Ambiguous sequence.} Input sequence $\bf X$ in Fig.~\ref{fig:ambiguity_ex} has two unique tokens: ${\bf e}_1$ and ${\bf e}_4$. The priority order is ${\bf e}_1 > {\bf e}_4$, following $\mathcal{G}^{(4)}_a$ in Figure~\ref{fig:each_scenario} (left). The corresponding values are $a_1 = a_3 = 0.3$ and $a_2 = a_4 = 0.2$. Then $[Cf_{{\bf W}_a(\tau)}({\bf X})]_{{\bf e}_1} = 2\times 0.30$ and $[Cf_{{\bf W}_a(\tau)}({\bf X})]_{{\bf e}_4} = 2\times 0.20$. Thus, $\widehat{\mathcal{G}}_{a}^{(4)}$ is $\{{\bf e}_5\}$ with the highest probability. ${\bf W}_{ab}$ makes ${\bf e}_4$ and ${\bf e}_1$ with the same priority, as indicated by $\mathcal{G}^{(4)}_{ab}$ in Figure~\ref{fig:each_scenario} (left). Both ${\bf e}_1$ and ${\bf e}_4$ are within the \emph{highest probability SCC}, $\widehat{\mathcal{G}}_{ab}^{(4)}$, due to $[Cf_{{\bf W}_{ab}(\tau)}({\bf X})]_{{\bf e}_1} = [Cf_{{\bf W}_{ab}(\tau)}({\bf X})]_{{\bf e}_4} =  2\times 0.25$. Although $\hat{y}_{{\bf W}_{ab}} \not \in \widehat{\mathcal{G}}_{a}^{(4)}$, $\widehat{\mathcal{G}}_{a}^{(4)} \in \widehat{\mathcal{G}}_{ab}^{(4)}$. Therefore, the sequence ${\bf X} = [{\bf e}_1, {\bf e}_4, {\bf e}_1, {\bf e}_4]^\top$ is \emph{ambiguous}, based on the Definition~\ref{def:ambiguous}. 

\textbf{Change of topic.} For the input sequence $\bf X$ in Fig.~\ref{fig:change_topic_ex}, the only two unique tokens, ${\bf e}_5$ and ${\bf e}_4$, are with the same priority order in both $\mathcal{G}^{(4)}_a$ and $\mathcal{G}^{(4)}_{ab}$ from Figure~\ref{fig:each_scenario} (left): ${\bf e}_5 > {\bf e}_1$. With ${\bf W}_a(\tau)$ trained in $\mathbb{T}_a$, the token ${\bf e}_5$ has $a_1 = 0.70$ and the token ${\bf e}_4$ has $a_2 = a_3 = a_4 = 0.10$. Obviously, $1\times 0.70 = [Cf_{{\bf W}_a(\tau)}({\bf X})]_{{\bf e}_5} > [Cf_{{\bf W}_a(\tau)}({\bf X})]_{{\bf e}_4} = 3\times 0.10$. Thus, $\widehat{\mathcal{G}}_{a}^{(4)}$ consists of ${\bf e}_5$. However, $\widehat{\mathcal{G}}_{ab}^{(4)}$ consists of ${\bf e}_4$ instead of ${\bf e}_5$, due to $1\times 0.40 = [Cf_{{\bf W}_{ab}(\tau)}({\bf X})]_{{\bf e}_5} < [Cf_{{\bf W}_{ab}(\tau)}({\bf X})]_{{\bf e}_4} = 3\times 0.20$. Since $\hat{y}_{{\bf W}_{ab}} \not \in \widehat{\mathcal{G}}_{a}^{(4)}$ and $\widehat{\mathcal{G}}_{a}^{(4)} \not \subset \widehat{\mathcal{G}}_{ab}^{(4)}$, ${\bf W}_{ab}$ \emph{changes topic} for the input sequence $\bf X$. Moreover, we have $({\bf e}_5 \Rightarrow {\bf e}_i) \in \mathcal{G}^{(4)}_{ab}$ for $i \in [4]$, as shown in Figure~\ref{fig:each_scenario} (left). Thus, the \emph{highest priority SCC} (Definition~\ref{def:highest_prob}) in $\mathbb{T}_{ab}$ is $\dot{\mathcal{G}}^{(4)}_{ab}({\bf X}) = \{{\bf e}_5\}$. In the input sequence ${\bf X} = [{\bf e}_5, {\bf e}_4, {\bf e}_4, {\bf e}_4]^\top$, the lower-priority token ${\bf e}_4 \not \in \dot{\mathcal{G}}^{(4)}_{ab}({\bf X})$ appears more frequently than the higher-priority token ${\bf e}_5 \in \dot{\mathcal{G}}^{(4)}_{ab}({\bf X})$, illustrating our Theorem~\ref{thm:expected}.

\section{Experimental details in Section~\ref{sec:llm}}\label{app:simsection6}
In this section, we provide the experimental details in four LLMs: GPT-4o, Llama-3.3, Claude-3.7, and DeepSeek-V3. Here, we outline the general procedure used in each model, under identical parameter settings, to generate continuations for each segment of the abstract.
\begin{enumerate}
    \item Extract the first $T$ words from paper A's abstract as the input segment $\bf{X}$ from Topic A.
    \item Randomly select 5 papers 
different from paper A, as papers B in $\{\text{B}_i\}_{i = 1}^5$.
    \item For the input segment $\bf{X}$, apply RAG to extract top 3 relevant excerpts (chunks) from paper A as the the knowledge A, denoted as $\text{Ref}_\text{A}$. Each chunk has 800 tokens length.
    \item Similarly, retrieve top 3 relevant excerpts from paper $\text{B}_i$ as the knowledge $\text{B}_i$, denoted as $\text{Ref}_{\text{B}_i}$.
    \item Combine the knowledge from Topic A and from Topic $\text{B}_i$ as the knowledge $\text{AB}_i$ for mixed Topics, denoted as $\text{Ref}_{\text{AB}_i}$.
    \item For the input segment $\bf{X}$, promot each LLM with $\text{Prompt}_\text{A}$ and $\text{Prompt}_{\text{AB}_i}$, to generate the continuations as $\hat{y}_{\bf{W}_{a}}$ and $\hat{y}_{\bf{W}_{ab}}$, respectively. Notably, the only difference between $\text{Prompt}_\text{A}$ and $\text{Prompt}_{\text{AB}_i}$ is the reference excerpts provided $\text{Ref}_{\text{A}}$ or $\text{Ref}_{\text{AB}_i}$. All LLMs are set with a temperature of 0 to match the greedy decoding in our theoretical framework. The maximum completion length was set to 1000 tokens to ensure that the generated continuations could complete the abstract.
    \begin{enumerate}
        \item $\text{Prompt}_\text{A}$: 
        
        \textit{Here are some relevant excerpts from research paper(s) as reference:$\text{Ref}_\text{A}$.
        Below is the 1st fragment of an abstract from arXiv paper A: $\bf{X}$.
        Please continue the 2nd fragment of the abstract based on the relevant excerpts without including the given content in the output.}
        \item $\text{Prompt}_{\text{AB}_i}$:
        
        \textit{Here are some relevant excerpts from research paper(s) as reference:$\text{Ref}_{\text{AB}_i}$.
        Below is the 1st fragment of an abstract from arXiv paper A: $\bf{X}$.
        Please continue the 2nd fragment of the abstract based on the relevant excerpts without including the given content in the output.}
        
    \end{enumerate}
    \item Calculate the average cosine similarity between $\hat{y}_{\bf{W}a}$ and $\hat{y}_{\bf{W}{ab}}$ across five pairs of paper A and paper $\text{B}_i$. 
\end{enumerate}

\subsection{Impact of input length}\label{sec:impact_T}
To investigate the impact of the input length, we vary $T = \{10, 30, 50, 70, 90, 110, 130, 150\}$ for every paper as Topic $A$, increasing the length of the input segment $\bf X$ extracted from the abstract of paper A, as shown on the x-axis from Figure~\ref{fig:T_t2a_t2ab}. 

\subsection{Impact of topic ambiguity}\label{sec:impact_L}
We quantify topic (paper) ambiguity by computing the average similarity among each paper’s keywords. Since arXiv papers do not provide keywords, we use Llama-3.3 to generate four keywords for each paper prior to generating continuations with the LLMs. To investigate the topic ambiguity, we fix the input length with $T = 80$ for every paper as Topic A and order papers by the average keywords similarity, as shown on the x-axis of Figure~\ref{fig:L_t2a_t2ab}. A higher keywords similarity corresponds to lower topic ambiguity.

\subsection{Additional experiments}
\subsubsection{Extended input length} \label{sec:extended_T}
To further examine the impact of the input length, we randomly select 50 out of 100 arXiv papers introduced in Sec.~\ref{sec:llm} and extend the input length with the first $310, 460,\ldots,1210$ words from the introduction of each paper A as the input prompt. We start from $T = 310$ because shorter introductions provide insufficient contextual information to reliably extract relevant excerpts from the full paper when using RAG. As shown in Figure~\ref{fig:increasedT}, the average cosine similarity generally increases with the more introduction content from paper A, with the exception of DeepSeek, which exhibits only a marginal improvement.

\begin{figure}[h]
    \centering
    \includegraphics[width=0.6\linewidth]{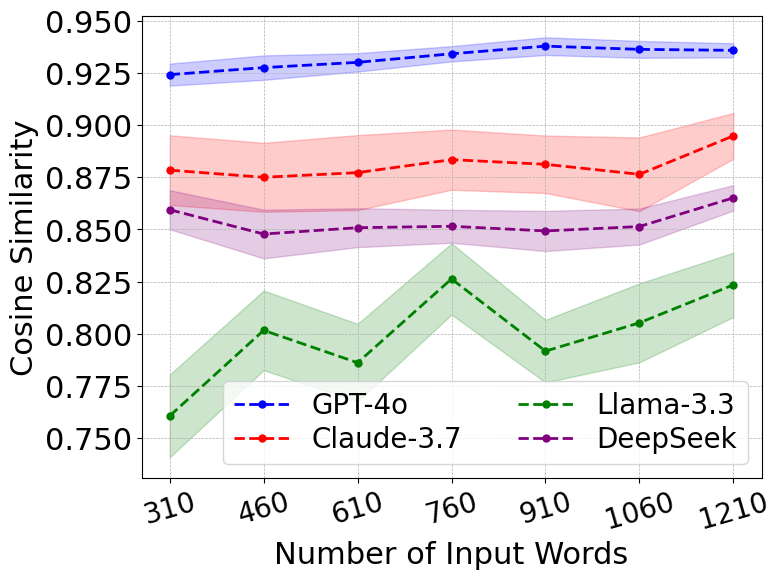}
    \caption{Extended length of input sequences.}
    \label{fig:increasedT}
\end{figure}

\subsubsection{Alternative measure of ambiguity}\label{sec:alternative_L}
As an alternative approach to validate our results, we also measure the ambiguity based on the similarity between the keywords of paper A and those of paper B. Following the same setup, we rank the cosine similarity for each paper A by the increasing similarity between its keywords and those of five papers B, where a higher level corresponds to a greater ambiguity. As shown in Figure~\ref{fig:L_ambiguity_AB}, our results still hold since the cosine similarity remains relatively stable as the ambiguity level increases.

\begin{figure}
    \centering
    \includegraphics[width=0.65\linewidth]{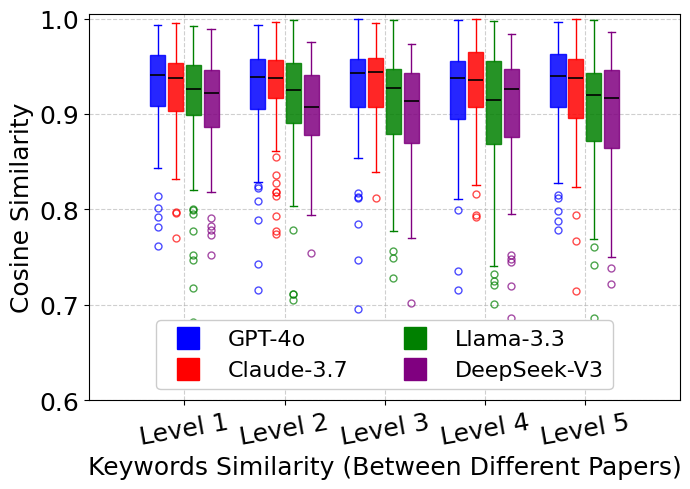}
    \caption{Topic ambiguity measured by the similarity between the keywords of paper A and those of paper B. Higher keywords similarity indicate greater topic ambiguity between the two papers.}
    \label{fig:L_ambiguity_AB}
\end{figure}

\section{Computational resources for experiments}\label{app:computation_resource}
In our simulations based on the single-layer self-attention model, each group of parameter setting requires 7 hours to train two models separately, one for single input topic and one for mixed topics, followed by 2 additional hours for next-token prediction.

In our experiments on LLMs, we query GPT-4o, Llama-3.3, Claude-3.7, and DeepSeek-V3 through API calls. All experiments were conducted on a standard laptop without specialized hardware. For each LLM, the full process, including selecting relevant excerpts using RAG and generating continuations, requires approximately 80 hours of runtime, with a total of 30 million input tokens and 5 million output tokens. The total API usage cost for the experiments is approximately 350 USD.
 
\section{Impact statement}\label{app:impact}
Our investigation highlights fundamental differences between spontaneous topic changes in LLMs and spontaneous human thought, informing the development of more natural and flexible AI systems in domains such as customer service and mental health support. However, improving such capabilities can raise ethical considerations, including inadvertent manipulation of user focus, especially in persuasive or sensitive contexts. Our work, while largely theoretical, emphasizes the importance of fairness, privacy, and user autonomy as developers refine these systems to serve users’ interests, respect contextual boundaries, and remain accountable. This research has the potential to advance both Machine Learning and Human-Computer Interaction by informing new architectures that mimic human-like topic shifts; nevertheless, any real-world application of these findings should be accompanied by vigilant oversight to mitigate risks of misuse—such as deceptive or manipulative dialogue shifting. There are many other potential societal consequences 
of our work, none which we feel must be specifically highlighted here.

\newpage
\section*{NeurIPS Paper Checklist}


\begin{enumerate}

\item {\bf Claims}
    \item[] Question: Do the main claims made in the abstract and introduction accurately reflect the paper's contributions and scope?
    \item[] Answer: \answerYes{} 
    \item[] Justification: The fifth paragraph and Subsection~\ref{subsec: findings} in Introduction accurately reflect the paper's contributions, scope, assumptions and limitations.
    \item[] Guidelines: 
    \begin{itemize}
        \item The answer NA means that the abstract and introduction do not include the claims made in the paper.
        \item The abstract and/or introduction should clearly state the claims made, including the contributions made in the paper and important assumptions and limitations. A No or NA answer to this question will not be perceived well by the reviewers.
        \item The claims made should match theoretical and experimental results, and reflect how much the results can be expected to generalize to other settings. 
        \item It is fine to include aspirational goals as motivation as long as it is clear that these goals are not attained by the paper. 
    \end{itemize}

\item {\bf Limitations}
    \item[] Question: Does the paper discuss the limitations of the work performed by the authors?
    \item[] Answer: \answerYes{} 
    \item[] Justification: We introduce assumptions in Introduction and we have a subsection of limitations in Section~\ref{sec: discussion}, including the lack of theoretical framework for complex architectures, alternative cost functions, and diverse training objectives.
    \item[] Guidelines:
    \begin{itemize}
        \item The answer NA means that the paper has no limitation while the answer No means that the paper has limitations, but those are not discussed in the paper. 
        \item The authors are encouraged to create a separate "Limitations" section in their paper.
        \item The paper should point out any strong assumptions and how robust the results are to violations of these assumptions (e.g., independence assumptions, noiseless settings, model well-specification, asymptotic approximations only holding locally). The authors should reflect on how these assumptions might be violated in practice and what the implications would be.
        \item The authors should reflect on the scope of the claims made, e.g., if the approach was only tested on a few datasets or with a few runs. In general, empirical results often depend on implicit assumptions, which should be articulated.
        \item The authors should reflect on the factors that influence the performance of the approach. For example, a facial recognition algorithm may perform poorly when image resolution is low or images are taken in low lighting. Or a speech-to-text system might not be used reliably to provide closed captions for online lectures because it fails to handle technical jargon.
        \item The authors should discuss the computational efficiency of the proposed algorithms and how they scale with dataset size.
        \item If applicable, the authors should discuss possible limitations of their approach to address problems of privacy and fairness.
        \item While the authors might fear that complete honesty about limitations might be used by reviewers as grounds for rejection, a worse outcome might be that reviewers discover limitations that aren't acknowledged in the paper. The authors should use their best judgment and recognize that individual actions in favor of transparency play an important role in developing norms that preserve the integrity of the community. Reviewers will be specifically instructed to not penalize honesty concerning limitations.
    \end{itemize}

\item {\bf Theory assumptions and proofs}
    \item[] Question: For each theoretical result, does the paper provide the full set of assumptions and a complete (and correct) proof?
    \item[] Answer: \answerYes{} 
    \item[] Justification: Building on Assumptions~\ref{assumption1} and \ref{assumption2} from Theorem~\ref{thm:li} in \citet{li2024mechanics}, and incorporating our new Assumptions~\ref{assumptioninteger} and \ref{assumption3}, we present Theorems~\ref{thm:priority}, \ref{thm:expected}, and \ref{thm:LT}. All proofs are provided in Appendix~\ref{app:proofs}.
    \item[] Guidelines:
    \begin{itemize}
        \item The answer NA means that the paper does not include theoretical results. 
        \item All the theorems, formulas, and proofs in the paper should be numbered and cross-referenced.
        \item All assumptions should be clearly stated or referenced in the statement of any theorems.
        \item The proofs can either appear in the main paper or the supplemental material, but if they appear in the supplemental material, the authors are encouraged to provide a short proof sketch to provide intuition. 
        \item Inversely, any informal proof provided in the core of the paper should be complemented by formal proofs provided in appendix or supplemental material.
        \item Theorems and Lemmas that the proof relies upon should be properly referenced. 
    \end{itemize}

    \item {\bf Experimental result reproducibility}
    \item[] Question: Does the paper fully disclose all the information needed to reproduce the main experimental results of the paper to the extent that it affects the main claims and/or conclusions of the paper (regardless of whether the code and data are provided or not)?
    \item[] Answer: \answerYes{} 
    \item[] Justification: We provide full disclosure of the implementation details to reproduce experimental results. Section~\ref{sec:LT} and Appendix~\ref{app:sims} present the simulation details and results based on the single-layer self-attention model, while the experiments on modern LLMs are detailed in Section~\ref{sec:llm} and Appendix~\ref{app:simsection6}.
    \item[] Guidelines:
    \begin{itemize}
        \item The answer NA means that the paper does not include experiments.
        \item If the paper includes experiments, a No answer to this question will not be perceived well by the reviewers: Making the paper reproducible is important, regardless of whether the code and data are provided or not.
        \item If the contribution is a dataset and/or model, the authors should describe the steps taken to make their results reproducible or verifiable. 
        \item Depending on the contribution, reproducibility can be accomplished in various ways. For example, if the contribution is a novel architecture, describing the architecture fully might suffice, or if the contribution is a specific model and empirical evaluation, it may be necessary to either make it possible for others to replicate the model with the same dataset, or provide access to the model. In general. releasing code and data is often one good way to accomplish this, but reproducibility can also be provided via detailed instructions for how to replicate the results, access to a hosted model (e.g., in the case of a large language model), releasing of a model checkpoint, or other means that are appropriate to the research performed.
        \item While NeurIPS does not require releasing code, the conference does require all submissions to provide some reasonable avenue for reproducibility, which may depend on the nature of the contribution. For example
        \begin{enumerate}
            \item If the contribution is primarily a new algorithm, the paper should make it clear how to reproduce that algorithm.
            \item If the contribution is primarily a new model architecture, the paper should describe the architecture clearly and fully.
            \item If the contribution is a new model (e.g., a large language model), then there should either be a way to access this model for reproducing the results or a way to reproduce the model (e.g., with an open-source dataset or instructions for how to construct the dataset).
            \item We recognize that reproducibility may be tricky in some cases, in which case authors are welcome to describe the particular way they provide for reproducibility. In the case of closed-source models, it may be that access to the model is limited in some way (e.g., to registered users), but it should be possible for other researchers to have some path to reproducing or verifying the results.
        \end{enumerate}
    \end{itemize}

\item {\bf Open access to data and code}
    \item[] Question: Does the paper provide open access to the data and code, with sufficient instructions to faithfully reproduce the main experimental results, as described in supplemental material?
    \item[] Answer: \answerYes{} 
    \item[] Justification: All code and data for reproducibility is provided as supplementary material. Upon acceptance we will also provide a public GitHub repository.
    \item[] Guidelines: 
    \begin{itemize}
        \item The answer NA means that paper does not include experiments requiring code.
        \item Please see the NeurIPS code and data submission guidelines (\url{https://nips.cc/public/guides/CodeSubmissionPolicy}) for more details.
        \item While we encourage the release of code and data, we understand that this might not be possible, so “No” is an acceptable answer. Papers cannot be rejected simply for not including code, unless this is central to the contribution (e.g., for a new open-source benchmark).
        \item The instructions should contain the exact command and environment needed to run to reproduce the results. See the NeurIPS code and data submission guidelines (\url{https://nips.cc/public/guides/CodeSubmissionPolicy}) for more details.
        \item The authors should provide instructions on data access and preparation, including how to access the raw data, preprocessed data, intermediate data, and generated data, etc.
        \item The authors should provide scripts to reproduce all experimental results for the new proposed method and baselines. If only a subset of experiments are reproducible, they should state which ones are omitted from the script and why.
        \item At submission time, to preserve anonymity, the authors should release anonymized versions (if applicable).
        \item Providing as much information as possible in supplemental material (appended to the paper) is recommended, but including URLs to data and code is permitted.
    \end{itemize}

\item {\bf Experimental setting/details}
    \item[] Question: Does the paper specify all the training and test details (e.g., data splits, hyperparameters, how they were chosen, type of optimizer, etc.) necessary to understand the results?
    \item[] Answer: \answerYes{} 
    \item[] Justification: We fully disclose the experimental setup in Section~\ref{sec:LT} and the corresponding details in Appendix~\ref{subapp:sims_details} for the single-layer self-attention model. The experimental setup and details for modern LLMs are provided in Section~\ref{sec:llm} and Appendix~\ref{app:simsection6}, respectively.
    \item[] Guidelines:
    \begin{itemize}
        \item The answer NA means that the paper does not include experiments.
        \item The experimental setting should be presented in the core of the paper to a level of detail that is necessary to appreciate the results and make sense of them.
        \item The full details can be provided either with the code, in appendix, or as supplemental material.
    \end{itemize}

\item {\bf Experiment statistical significance}
    \item[] Question: Does the paper report error bars suitably and correctly defined or other appropriate information about the statistical significance of the experiments?
    \item[] Answer: \answerYes{} 
    \item[] Justification: Figure~\ref{fig:T_t2a_t2ab} shows the $95\%$ intervals of the average cosine similarity as the input length increases. In Appendix~\ref{app:sims}, Table~\ref{table:T} and Table~\ref{table:L} report the $95\%$ intervals for the percentage of each scenario. Additionally, Figure~\ref{fig:rs} shows the $95\%$ confidence interval for the attention priority similarity, and Figure~\ref{fig:convergence} presents the convergence of $\bf W$ with corresponding $95\%$ confidence intervals. 
    \item[] Guidelines:
    \begin{itemize}
        \item The answer NA means that the paper does not include experiments.
        \item The authors should answer "Yes" if the results are accompanied by error bars, confidence intervals, or statistical significance tests, at least for the experiments that support the main claims of the paper.
        \item The factors of variability that the error bars are capturing should be clearly stated (for example, train/test split, initialization, random drawing of some parameter, or overall run with given experimental conditions).
        \item The method for calculating the error bars should be explained (closed form formula, call to a library function, bootstrap, etc.)
        \item The assumptions made should be given (e.g., Normally distributed errors).
        \item It should be clear whether the error bar is the standard deviation or the standard error of the mean.
        \item It is OK to report 1-sigma error bars, but one should state it. The authors should preferably report a 2-sigma error bar than state that they have a 96\% CI, if the hypothesis of Normality of errors is not verified.
        \item For asymmetric distributions, the authors should be careful not to show in tables or figures symmetric error bars that would yield results that are out of range (e.g. negative error rates).
        \item If error bars are reported in tables or plots, The authors should explain in the text how they were calculated and reference the corresponding figures or tables in the text.
    \end{itemize}

\item {\bf Experiments compute resources}
    \item[] Question: For each experiment, does the paper provide sufficient information on the computer resources (type of compute workers, memory, time of execution) needed to reproduce the experiments?
    \item[] Answer: \answerYes{} 
    \item[] Justification: Appendix~\ref{app:computation_resource} provides details on the computational resources used in our experiments, including execution time, API usage costs, and the total number of input and output tokens.
    \item[] Guidelines:
    \begin{itemize}
        \item The answer NA means that the paper does not include experiments.
        \item The paper should indicate the type of compute workers CPU or GPU, internal cluster, or cloud provider, including relevant memory and storage.
        \item The paper should provide the amount of compute required for each of the individual experimental runs as well as estimate the total compute. 
        \item The paper should disclose whether the full research project required more compute than the experiments reported in the paper (e.g., preliminary or failed experiments that didn't make it into the paper). 
    \end{itemize}
    
\item {\bf Code of ethics}
    \item[] Question: Does the research conducted in the paper conform, in every respect, with the NeurIPS Code of Ethics \url{https://neurips.cc/public/EthicsGuidelines}?
    \item[] Answer: \answerYes{}{} 
    \item[] Justification: Our work complies with the NeurIPS Code of Ethics.
    \item[] Guidelines: 
    \begin{itemize}
        \item The answer NA means that the authors have not reviewed the NeurIPS Code of Ethics.
        \item If the authors answer No, they should explain the special circumstances that require a deviation from the Code of Ethics.
        \item The authors should make sure to preserve anonymity (e.g., if there is a special consideration due to laws or regulations in their jurisdiction).
    \end{itemize}

\item {\bf Broader impacts}
    \item[] Question: Does the paper discuss both potential positive societal impacts and negative societal impacts of the work performed?
    \item[] Answer: \answerYes{} 
    \item[] Justification: In Appendix~\ref{app:computation_resource}, we discuss the social impact of our work from both potential positive and negative impacts. 
    \item[] Guidelines:
    \begin{itemize}
        \item The answer NA means that there is no societal impact of the work performed.
        \item If the authors answer NA or No, they should explain why their work has no societal impact or why the paper does not address societal impact.
        \item Examples of negative societal impacts include potential malicious or unintended uses (e.g., disinformation, generating fake profiles, surveillance), fairness considerations (e.g., deployment of technologies that could make decisions that unfairly impact specific groups), privacy considerations, and security considerations.
        \item The conference expects that many papers will be foundational research and not tied to particular applications, let alone deployments. However, if there is a direct path to any negative applications, the authors should point it out. For example, it is legitimate to point out that an improvement in the quality of generative models could be used to generate deepfakes for disinformation. On the other hand, it is not needed to point out that a generic algorithm for optimizing neural networks could enable people to train models that generate Deepfakes faster.
        \item The authors should consider possible harms that could arise when the technology is being used as intended and functioning correctly, harms that could arise when the technology is being used as intended but gives incorrect results, and harms following from (intentional or unintentional) misuse of the technology.
        \item If there are negative societal impacts, the authors could also discuss possible mitigation strategies (e.g., gated release of models, providing defenses in addition to attacks, mechanisms for monitoring misuse, mechanisms to monitor how a system learns from feedback over time, improving the efficiency and accessibility of ML).
    \end{itemize}
    
\item {\bf Safeguards}
    \item[] Question: Does the paper describe safeguards that have been put in place for responsible release of data or models that have a high risk for misuse (e.g., pretrained language models, image generators, or scraped datasets)?
    \item[] Answer: \answerNA{} 
    \item[] Justification: This paper poses no such risks.
    \item[] Guidelines:
    \begin{itemize}
        \item The answer NA means that the paper poses no such risks.
        \item Released models that have a high risk for misuse or dual-use should be released with necessary safeguards to allow for controlled use of the model, for example by requiring that users adhere to usage guidelines or restrictions to access the model or implementing safety filters. 
        \item Datasets that have been scraped from the Internet could pose safety risks. The authors should describe how they avoided releasing unsafe images.
        \item We recognize that providing effective safeguards is challenging, and many papers do not require this, but we encourage authors to take this into account and make a best faith effort.
    \end{itemize}

\item {\bf Licenses for existing assets}
    \item[] Question: Are the creators or original owners of assets (e.g., code, data, models), used in the paper, properly credited and are the license and terms of use explicitly mentioned and properly respected?
    \item[] Answer: \answerYes{} 
    \item[] Justification: Our work is conducted in Python with several open-source Python libraries. For generating continuations in our experiments, we access the following publicly released LLMs via API: GPT-4o\cite{hurst2024gpt4o}, Llama-3.3\cite{grattafiori2024llama3}, Claude-3.7\cite{claude37}, and DeepSeek-V3\cite{liu2024deepseekv3}.
    \item[] Guidelines:
    \begin{itemize}
        \item The answer NA means that the paper does not use existing assets.
        \item The authors should cite the original paper that produced the code package or dataset.
        \item The authors should state which version of the asset is used and, if possible, include a URL.
        \item The name of the license (e.g., CC-BY 4.0) should be included for each asset.
        \item For scraped data from a particular source (e.g., website), the copyright and terms of service of that source should be provided.
        \item If assets are released, the license, copyright information, and terms of use in the package should be provided. For popular datasets, \url{paperswithcode.com/datasets} has curated licenses for some datasets. Their licensing guide can help determine the license of a dataset.
        \item For existing datasets that are re-packaged, both the original license and the license of the derived asset (if it has changed) should be provided.
        \item If this information is not available online, the authors are encouraged to reach out to the asset's creators.
    \end{itemize}

\item {\bf New assets}
    \item[] Question: Are new assets introduced in the paper well documented and is the documentation provided alongside the assets?
    \item[] Answer: \answerYes{} 
    \item[] Justification: We provide a detailed description of the dataset used for the single-layer self-attention model in Appendix~\ref{subapp:sims_details} and the dataset used for the modern LLMs in Appendix~\ref{app:simsection6}.
    \item[] Guidelines:
    \begin{itemize}
        \item The answer NA means that the paper does not release new assets.
        \item Researchers should communicate the details of the dataset/code/model as part of their submissions via structured templates. This includes details about training, license, limitations, etc. 
        \item The paper should discuss whether and how consent was obtained from people whose asset is used.
        \item At submission time, remember to anonymize your assets (if applicable). You can either create an anonymized URL or include an anonymized zip file.
    \end{itemize}

\item {\bf Crowdsourcing and research with human subjects}
    \item[] Question: For crowdsourcing experiments and research with human subjects, does the paper include the full text of instructions given to participants and screenshots, if applicable, as well as details about compensation (if any)? 
    \item[] Answer: \answerNA{} 
    \item[] Justification: This work does not involve crowdsourcing experiments or research with human subjects.
    \item[] Guidelines:
    \begin{itemize}
        \item The answer NA means that the paper does not involve crowdsourcing nor research with human subjects.
        \item Including this information in the supplemental material is fine, but if the main contribution of the paper involves human subjects, then as much detail as possible should be included in the main paper. 
        \item According to the NeurIPS Code of Ethics, workers involved in data collection, curation, or other labor should be paid at least the minimum wage in the country of the data collector. 
    \end{itemize}

\item {\bf Institutional review board (IRB) approvals or equivalent for research with human subjects}
    \item[] Question: Does the paper describe potential risks incurred by study participants, whether such risks were disclosed to the subjects, and whether Institutional Review Board (IRB) approvals (or an equivalent approval/review based on the requirements of your country or institution) were obtained?
    \item[] Answer: \answerNA{} 
    \item[] Justification: Our work doesn't involve crowdsourcing nor research with human subjects.
    \item[] Guidelines:
    \begin{itemize}
        \item The answer NA means that the paper does not involve crowdsourcing nor research with human subjects.
        \item Depending on the country in which research is conducted, IRB approval (or equivalent) may be required for any human subjects research. If you obtained IRB approval, you should clearly state this in the paper. 
        \item We recognize that the procedures for this may vary significantly between institutions and locations, and we expect authors to adhere to the NeurIPS Code of Ethics and the guidelines for their institution. 
        \item For initial submissions, do not include any information that would break anonymity (if applicable), such as the institution conducting the review.
    \end{itemize}

\item {\bf Declaration of LLM usage}
    \item[] Question: Does the paper describe the usage of LLMs if it is an important, original, or non-standard component of the core methods in this research? Note that if the LLM is used only for writing, editing, or formatting purposes and does not impact the core methodology, scientific rigorousness, or originality of the research, declaration is not required.
    \item[] Answer: \answerYes{} 
    \item[] Justification: We use LLM APIs, including GPT-4o, Llama-3.3, Claude-3.7, and DeepSeek-V3, to generate the continuations of the input segment. Section~\ref{sec:llm} and Appendix~\ref{app:simsection6} provide all experimental results and detailed process related to the frontier LLMs.
    \item[] Guidelines:
    \begin{itemize}
        \item The answer NA means that the core method development in this research does not involve LLMs as any important, original, or non-standard components.
        \item Please refer to our LLM policy (\url{https://neurips.cc/Conferences/2025/LLM}) for what should or should not be described.
    \end{itemize}

\end{enumerate}

\end{document}